\newtheorem{theorem}{Theorem}
\newtheorem{lemma}{Lemma}
\newtheorem{corollary}{Corollary}
\renewcommand{\hat}{\widehat}
\renewcommand{\hat}{\widehat}
\newcommand{\bfm}[1]{\ensuremath{\mathbf{#1}}}
   \def\bA{\bfm A}  
   \def\bB{\bfm B}  
   \def\bC{\bfm C}  
   \def\bD{\bfm D}  
\def\be{\bfm e}   \def\bE{\bfm E}  \def\EE{\mathbb{E}}
   \def\bG{\bfm G}  
   \def\bH{\bfm H}  
   \def\bI{\bfm I}  \def\II{\mathbb{I}}
   \def\bL{\bfm L}  
   \def\bM{\bfm M}  
   \def\bN{\bfm N}  
   \def\bP{\bfm P}  \def\PP{\mathbb{P}}
   \def\bQ{\bfm Q}  
\def\br{\bfm r}   \def\bR{\bfm R}  \def\RR{\mathbb{R}}
   \def\bS{\bfm S}  
   \def\bT{\bfm T}  
\def\bu{\bfm u}   \def\bU{\bfm U}  
\def\bv{\bfm v}   \def\bV{\bfm V}  
\def\bw{\bfm w}   \def\bW{\bfm W}  
\def\bx{\bfm x}   \def\bX{\bfm X}  
\def\by{\bfm y}   \def\bY{\bfm Y}  
\def\bz{\bfm z}   \def\bZ{\bfm Z}  
\def\calA{{\cal  A}}
\def\calE{{\cal  E}} 
\def\calG{{\cal  G}} 
\def\calH{{\cal  H}} 
\def\calI{{\cal  I}} 
\def\calK{{\cal  K}} 
\def\calM{{\cal  M}} 
\def\calN{{\cal  N}} 
\def\calO{{\cal  O}} 
\def\calP{{\cal  P}}
\def\calT{{\cal  T}} 
\def\calU{{\cal  U}} 
\def\calV{{\cal  V}} 
\def\calW{{\cal  W}} 
\def\calX{{\cal  X}}
\DeclareMathOperator{\rank}{rank}
\DeclareMathOperator{\tr}{tr}
\def\newpage{\vfill\eject}
\newdimen\biblioindent    \biblioindent=30pt
\newcommand{\beq}{\begin{equation}}
  \newcommand{\eeq}{\end{equation}}
\newcommand{\beqn}{\begin{eqnarray}}
  \newcommand{\eeqn}{\end{eqnarray}}
\newcommand{\beqnn}{\begin{eqnarray*}}
  \newcommand{\eeqnn}{\end{eqnarray*}}
\newcounter{CondCounter}
\DeclarePairedDelimiter\floor{\lfloor}{\rfloor}
\begin{document}
\title{On Polynomial Time Methods for Exact Low Rank Tensor Completion$^\ast$}
\author{Dong Xia and Ming Yuan\\
Morgridge Institute for Research and University of Wisconsin-Madison}
\date{(\today)}

\maketitle

\footnotetext[1]{
This research was supported by NSF FRG Grant DMS-1265202, and NIH Grant 1U54AI117924-01.}

\begin{abstract}
In this paper, we investigate the sample size requirement for exact recovery of a high order tensor of low rank from a subset of its entries. We show that a gradient descent algorithm with initial value obtained from a spectral method can, in particular, reconstruct a ${d\times d\times d}$ tensor of multilinear ranks $(r,r,r)$ with high probability from as few as $O(r^{7/2}d^{3/2}\log^{7/2}d+r^7d\log^6d)$ entries. In the case when the ranks $r=O(1)$, our sample size requirement matches those for nuclear norm minimization \citep{yuan2015tensor}, or alternating least squares assuming orthogonal decomposability \citep{jain2014provable}. Unlike these earlier approaches, however, our method is efficient to compute, easy to implement, and does not impose extra structures on the tensor. Numerical results are presented to further demonstrate the merits of the proposed approach.
\end{abstract}

\newpage

\section{Introduction}
Let $\bT\in \RR^{d_1\times\cdots\times d_k}$ be a $k$th order tensor. The goal of tensor completion is to recover $\bT$ based on a subset of its entries $\{T(\omega): \omega\in \Omega\}$ for some $\Omega\subset [d_1]\times\cdots\times [d_k]$ where $[d]=\{1,2,\ldots, d\}$. The problem of tensor completion has attracted a lot of attention in recent years due to its wide range of applications. See, e.g. \cite{LiLi10,SidNion10, tomioka2010estimation, gandy2011tensor, CohenCollins12, liu2013tensor, anandkumar2014tensor, mu2014square, Semerci14,yuan2015tensor} and references therein. In particular, the second order (matrix) case has been extensively studied. See, e.g. \cite{candes2009exact, keshavan2009matrix, candes2010power, gross2011recovering, recht2011simpler} among many others. One of the main revelations from these studies is that, although the matrix completion problem is in general NP-hard, it is possible to develop tractable algorithms to achieve exact recovery with high probability. Naturally one asks if the same can be said for higher order tensors. This seemingly innocent task of generalizing from second order to higher order tensors turns out to be rather delicate.

The challenges in dealing with higher order tensors comes from both computational and theoretical fronts. On the one hand, many of the standard operations for matrices become prohibitively expensive to compute for higher order tensors. A notable example is the computation of tensor spectral norm. For second order tensors, or matrices, the spectral norm is merely its largest singular value and can be computed with little effort. Yet this is no longer the case for higher order tensors where computing the spectral norm is NP-hard in general \citep[see, e.g.,][]{hillarlim13}. On the other hand, many of the mathematical tools, either algebraic such as characterizing the subdifferential of the nuclear norm or probabilistic such as concentration inequalities, essential to the analysis of matrix completion are still under development for higher order tenors. There is a fast growing literature to address both issues and much progresses have been made in both fronts in the past several years.

When it comes to higher order tensor completion, an especially appealing idea is  to first unfold a tensor to a matrix and then treat it using techniques for matrix completion. Notable examples include \cite{tomioka2010estimation, gandy2011tensor, liu2013tensor, mu2014square} among others. As shown recently by \cite{yuan2015tensor}, these approaches, although easy to implement, may require an unnecessarily large amount of entries to be observed to ensure exact recovery. As an alternative, \cite{yuan2015tensor} established a sample size requirement for recovering a third order tensor via nuclear norm minimization and showed that a $d\times d\times d$ tensor with multilinear ranks $(r,r,r)$ can be recovered exactly with high probability with as few as $O((r^{1/2}d^{3/2}+r^2d)(\log d)^2)$ entries observed. Perhaps more surprisingly, \cite{yuan2016incoherent} later showed that the dependence on $d$ (e.g., the factor $d^{3/2}$) remains the same for higher order tensors and we can reconstruct a $k$th order cubic tensor with as few as $O((r^{(k-1)/2}d^{3/2}+r^{k-1}d)(\log d)^2)$ entries for any $k\ge 3$ when minimizing a more specialized nuclear norm devised to take into account the incoherence. These sample size requirement drastically improve those based on unfolding which typically require a sample size of the order $r^{\lfloor k/2\rfloor}d^{\lceil k/2\rceil}{\rm polylog}(d)$ \citep[see, e.g.,][]{mu2014square}. Although both nuclear norm minimization approaches are based on convex optimization, they are also NP hard to compute in general. Many approximate algorithms have also been proposed in recent years with little theoretical justification. See, e.g., \cite{kressner2014low,rauhut2015tensor,rauhut2016low}. It remains unknown if there exist polynomial time algorithms that can recover a low rank tensor exactly with similar sample size requirements. The goal of the present article is to fill in the gap between these two strands of research by developing a computationally efficient approach with tight sample size requirement for completing a third order tensor.

In particular, we show that there are polynomial time algorithms that can reconstruct a $d_1\times d_2\times d_3$ tensor with multilinear ranks $(r_1,r_2,r_3)$ from as few as
$$
O\left(r_1r_2r_3(rd_1d_2d_3)^{1/2}\log^{7/2}d+(r_1r_2r_3)^2rd \log^6d\right)
$$
entries where $r=\max\{r_1,r_2,r_3\}$ and $d=\max\{d_1,d_2,d_3\}$. This sample size requirement matches those for tensor nuclear norm minimization in terms of its dependence on the dimension $d_1, d_2$ and $d_3$ although it is inferior in terms of its dependence on the ranks $r_1, r_2$ and $r_3$. This makes our approach especially attractive in practice because we are primarily interested in high dimension (large $d$) and low rank (small $r$) instances. In particular, when $r=O(1)$, our algorithms can recover a tensor exactly based on $O(d^{3/2}\log^{7/2} d)$ observed entries, which is nearly identical to that based on nuclear norm minimization.

It is known that the problem of tensor completion can be cast as optimization over a direct product of Grassmannians \citep[see, e.g.,][]{kressner2014low}. The high level idea behind our development is similar to those used earlier by \cite{keshavan2009matrix} for matrix completion: if we can start with an initial value sufficiently close to the truth, then a small number of observed entries can ensure the convergence of typical optimization algorithms on Grassmannians such as gradient descent to the truth. Yet the implementation of this strategy is much more delicate and poses significant new challenges when moving from matrices to tensors.

At the core of our method is the initialization of the linear subspaces in which the fibers of a tensor reside. In the matrix case, a natural way to do so is by singular value decomposition, a tool that is no longer available for higher order tensors. An obvious solution is to unfold tensors into matrices and then applying the usual singular value decomposition based approach. This, however, requires an unnecessarily large sample size. To overcome this problem, we propose an alternative approach to estimating the singular spaces of the matrix unfoldings of a tensor. Our method is based on a carefully constructed estimate of the second moment of appropriate unfolding of a tensor, which can be viewed as a matrix version U-statistics. We show that the eigenspace of the proposed estimate concentrates around the true singular spaces of the matrix unfolding more sharply than the usual singular value decomposition based approaches, and therefore leads to consistent estimate with tighter sample size requirement.

The fact that there exist polynomial time algorithms to estimate a tensor consistently, not exactly, with $O(d^{3/2}{\rm polylog}(r,\log d))$ observed entries was first recognized by \cite{barak2016noisy}. Their approach is based on sum-of-square relaxations of tensor nuclear norm. Although polynomial time solvable in principle, their method requires solving a semidefinite program of size $d^3\times d^3$ and is not amenable to practical implementation. In contrast, our approach is essentially based on the spectral decomposition of a $d\times d$ matrix and can be computed fairly efficiently. Very recently, in independent work and under further restrictions on the tensor ranks, \cite{montanari2016spectral} showed that a spectral method different from ours can also achieve consistency with $O(d^{3/2}{\rm polylog}(r,\log d))$ observed entries. The rate of concentration for their estimate, however, is slower than ours and as a result, it is unclear if it provides a sufficiently accurate initial value for the exact recovery with the said sample size.

Once a good initial value is obtained, we consider reconstructing a tensor by optimizing on a direct product of Grassmannians locally. To this end, we consider a simple gradient descent algorithm adapted for our purposes. The main architect of our argument is similar to those taken by \cite{keshavan2009matrix} for matrix completion. We argue that the objective function, in a suitable neighbor around the truth and including the initial value, behaves like a parabola. As a result, the gradient descent algorithm necessarily converges locally to a stationary point. We then show that the true tensor is indeed the only stationary point in the neighborhood and therefore the algorithm recovers the truth. To prove these statements for higher order tensors however require a number of new probabilistic tools for tensors, and we do so by establishing several new concentration bounds, building upon those from \cite{yuan2015tensor,yuan2016incoherent}.

The rest of the paper is organized as follows. We first review necessary concepts and properties of tensors for our purpose in the next section. Section \ref{sec:main} describes our main result with the initialization and local optimization steps being treated in details in Sections \ref{sec:init} and \ref{sec:Flocal}. Numerical experiments presented in Section \ref{sec:sim} complement our theoretical development. We conclude with some discussions and remarks in Section \ref{sec:disc}. Proofs of the main results are presented in Section \ref{sec:proof}.

\section{Preliminaries}
\label{sec:prelim}

To describe our treatment of low rank tensor completion, we first review a few basic and necessary facts and properties of tensors. In what follows, we shall denote a tensor or matrix by a boldfaced upper-case letter, and its entries the same upper-case letter in normal font with appropriate indices. Similarly, a vector will be denoted by a boldfaced lower-case letter, and its entries by the same letter in normal font. For notational simplicity, we shall focus primarily on third order ($k=3$) tensors. Although our discussion can mostly be extended to higher order tensor straightforwardly. Subtle differences in treatment between third and higher order tensors will be discussed in Section \ref{sec:disc}.

The goal of tensor completion is to recover a tensor from partial observations of its entries. The problem is obviously underdetermined in general. To this end, we focus here on tensors that are of low multilinear ranks.

For a tensor $\bA\in \RR^{d_1\times d_2\times d_3}$, define the matrix $\calM_1(\bA)\in \RR^{d_1\times (d_2d_3)}$ by the entries
$$
\calM_1(\bA)(i_1,(i_2-1)d_3+i_3)=A(i_1,i_2,i_3),\qquad \forall i_1\in[d_1], i_2\in[d_2], i_3\in[d_3].
$$
In other words, the columns of $\calM_1(\bA)$ are the mode-1 fibers, $\{(A(i_1,i_2,i_3))_{i_1\in [d_1]}: i_2\in [d_2], i_3\in [d_3]\}$, of $\bA$. We can define $\calM_2$ and $\calM_3$ in the same fashion. It is clear that $\calM_j: \RR^{d_1\times d_2\times d_3}\to \RR^{d_j\times (d_1d_2d_3/d_j)}$ is a vector space isomorphism and often referred to as matricization or unfolding. The multilinear ranks of $\bA$ are given by
\begin{eqnarray*}
r_1(\bA)&=&\rank(\calM_1(\bA))={\rm dim}({\rm span}\{(A(i_1,i_2,i_3))_{i_1\in [d_1]}: i_2\in [d_2], i_3\in [d_3]\}),\\
r_2(\bA)&=&\rank(\calM_2(\bA))={\rm dim}({\rm span}\{(A(i_1,i_2,i_3))_{i_2\in [d_2]}: i_1\in [d_1], i_3\in [d_3]\}),\\
r_3(\bA)&=&\rank(\calM_3(\bA))={\rm dim}({\rm span}\{(A(i_1,i_2,i_3))_{i_3\in [d_3]}: i_1\in [d_1], i_2\in [d_2]\}).
\end{eqnarray*}
Note that, in general, $r_1(\bA)\neq r_2(\bA)\neq r_3(\bA)$.

Let $\bU$, $\bV$ and $\bW$ be the left singular vectors of $\calM_1(\bA)$, $\calM_2(\bA)$ and $\calM_3(\bA)$ respectively. It is not hard to see that there exists a so-called core tensor $\bC\in \RR^{r_1(\bA)\times r_2(\bA)\times r_3(\bA)}$ such that
\begin{equation}
\label{eq:tucker}
\bA=\sum_{j_1=1}^{r_1(\bA)}\sum_{j_2=1}^{r_2(\bA)}\sum_{j_3=1}^{r_3(\bA)} C(j_1,j_2,j_3) (\bu_{j_1}\otimes \bv_{j_2}\otimes \bw_{j_3}),
\end{equation}
where $\bu_{j}$, $\bv_j$ and $\bw_j$ are the $j$th column of $\bU$, $\bV$ and $\bW$ respectively, and
$$
\bx\otimes \by\otimes \bz:=(x_{i_1}y_{i_2}z_{i_3})_{i_1\in[d_1], i_2\in[d_2], i_3\in[d_3]},
$$
is a so-called rank-one tensor. Following the notation from \cite{silvalim08}, \eqref{eq:tucker} can also be more compactly represented as a trilinear multiplication:
$$
\bA=(\bU, \bV, \bW)\cdot \bC:=\bC\times_1\bU\times_2\bV\times_3\bW,
$$
where the marginal product $\times_1:\RR^{r_1\times r_2\times r_3}\times \RR^{d_1\times r_1}\to \RR^{d_1\times r_2\times r_3}$ is given by
$$
\bA\times_1\bB=\left(\sum_{j_1=1}^{r_1} A(j_1,j_2,j_3)B(i_1,j_1)\right)_{i_1\in [d_1], j_2\in [r_2], j_3\in [r_3]},
$$
and $\times_2$ and $\times_3$ are similarly defined.

The collection of all tensors of dimension $d_1\times d_2\times d_3$ whose multilinear ranks are at most $\br=(r_1,r_2,r_3)$ can be written as
$$
\calA(\br)=\left\{(\bX, \bY, \bZ)\cdot \bC: \bX\in \calV(d_1,r_1), \bY\in \calV(d_2,r_2), \bZ\in \calV(d_3,r_3), \bC\in \RR^{r_1\times r_2\times r_3}\right\},
$$
where $\calV(d,r)$ is the Stiefel manifold of orthonormal $r$-frames in $\RR^d$. In fact, any tensor $\bA\in \calA(\br)$ can be identified with a $r_1$ dimensional linear subspace in $\RR^{d_1}$, a $r_2$ dimensional linear subspace in $\RR^{d_2}$, a $r_3$ dimensional linear subspace in $\RR^{d_3}$ and a core tensor in $\RR^{r_1\times r_2\times r_3}$ so that $\calA(\br)$ is isomorphic to $\calG(d_1,r_1)\times\calG(d_2,r_2)\times\calG(d_3,r_3)\times \RR^{r_1\times r_2\times r_3}$ where $\calG(d,r)$ is the Grassmannian of $r$-dimensional linear subspaces in $\RR^d$.

Another common way of defining tensor ranks is through the so-called CP decomposition which expresses a tensor as the sum of the smallest possible number of rank-one tensors. The number of rank-one tensors in the CP decomposition of a tensor is commonly referred to as its CP rank. It is not hard to see that for a tensor of multilinear ranks $(r_1,r_2,r_3)$, its CP rank is necessarily between $\max\{r_1,r_2,r_3\}$ and $\min\{r_1r_2,r_1r_3,r_2r_3\}$. We shall focus here primarily on multilinear ranks because it allows for stable numerical computation, as well as refined theoretical analysis. But our results can be straightforwardly translated into CP ranks through the relationship between multilinear ranks and CP rank.


In addition to being of low rank, another essential property that $\bT$ needs to satisfy so that we can possibly recover it from a uniformly sampled subset of its entries is the incoherence of linear subspaces spanned by its fibers \citep[see, e.g.,][]{candes2009exact}. More specifically, let $\calX$ be a $r$ dimensional linear subspace in $\RR^d$ and $\bP_\calX: \RR^d\to \RR^d$ be its projection matrix. We can define the coherence for $\calX$ as
$$
\mu(\calX)=\frac{d}{r}\max_{1\leq i\leq d}\left\|\bP_\calX \be_i\right\|^2,
$$
where $\be_i$ is the $i$th canonical basis of an Euclidean space, that is, it is a vector whose $i$th entry is one and all other entries are zero. Note that
$$
\mu(\calX)=\frac{\max_{1\leq i\leq d}\|\bP_\calX \be_i\|^2}{d^{-1}\sum_{i=1}^d\|\bP_\calX \be_i\|^2},
$$
for
$$
\sum_{i=1}^d\|\bP_\calX \be_i\|^2={\rm trace}(\bP_\calX)=r.
$$
Now for a tensor $\bA\in \RR^{d_1\times d_2\times d_3}$, denote by $\calU(\bA)$ the linear space spanned by its mode-1 fibers, $\calV(\bA)$ mode-2 fibers, and $\calW(\bA)$ mode-3 fibers. With slight abuse of notation, we define the coherence of $\bA$ as
$$
\mu(\bA)=\max\left\{\mu(\calU(\bA)), \mu(\calV(\bA)),\mu(\calW(\bA))\right\}.
$$

In what follows, we shall also encounter various tensor norms. Recall that the vector-space inner product between two tensors $\bX,\bY\in\mathbb{R}^{d_1\times d_2\times d_3}$ is defined as
$$
\langle \bX, \bY\rangle=\sum_{\omega\in [d_1]\times[d_2]\times[d_3]} X(\omega)Y(\omega).
$$
The corresponding norm, referred to as Frobenius norm, for a tensor $\bA\in \RR^{d_1\times d_2\times d_3}$ is given by
$$\|\bA\|_{\rm F}:=\langle\bA,\bA\rangle^{1/2}.$$
We can also define the spectral norm of $\bA$ as
$$
\|\bA\|:=\sup_{\bu_j\in \mathbb{R}^{d_j}: \|\bu_1\|=\|\bu_2\|=\|\bu_3\|=1}\langle\bA, \bu_1\otimes \bu_2\otimes \bu_3\rangle,
$$
where, with slight abuse of notation, we write $\|\cdot\|$ both as the spectral norm for a tensor and as the usual $\ell_2$ norm for a vector for brevity. The nuclear nom is the dual of spectral norm:
$$
\|\bA\|_{\star}=\underset{\bX\in\mathbb{R}^{d_1\times d_2\times d_3}, \|\bX\|\leq 1}{\sup}\langle \bA, \bX\rangle.
$$
Another norm of interest is the max norm or the entrywise sup norm of $\bA$:
$$
\|\bA\|_{\max}:=\max_{\omega\in[d_1]\times [d_2]\times [d_3]}\left|A(\omega)\right|.
$$
The following relationships among these norms are immediate and are stated here for completeness. We shall make use of them without mentioning throughout the rest of our discussion.
\begin{lemma}\label{lemma:tensornorm}
For any $\bA\in\mathbb{R}^{d_1\times d_2\times d_3}$,
$$
\|\bA\|_{\max}\leq \|\bA\|\leq \|\bA\|_{\rm F}\le \sqrt{r_1(\bA)r_2(\bA)r_3(\bA)}\|\bA\|,
$$
and
$$
\|\bA\|_{\star}\leq \min\Big\{\sqrt{r_1(\bA)r_2(\bA)},\sqrt{r_1(\bA)r_3(\bA)},\sqrt{r_2(\bA)r_3(\bA)}\Big\}\|\bA\|_{\rm F}.
$$
\end{lemma}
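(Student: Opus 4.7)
The plan is to dispose of the four inequalities in the stated order; only the nuclear-norm bound requires any real work. The first two are immediate: pick $\omega=(i_1,i_2,i_3)$ achieving $\|\bA\|_{\max}$, and since $A(\omega)=\langle\bA,\be_{i_1}\otimes\be_{i_2}\otimes\be_{i_3}\rangle$ and the three canonical basis vectors have unit norm, testing the supremum defining $\|\bA\|$ at this particular unit rank-one tensor yields $\|\bA\|_{\max}\le\|\bA\|$. For $\|\bA\|\le\|\bA\|_{\rm F}$, apply Cauchy--Schwarz inside the defining sup, using $\|\bu_1\otimes\bu_2\otimes\bu_3\|_{\rm F}=\|\bu_1\|\|\bu_2\|\|\bu_3\|=1$.

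For $\|\bA\|_{\rm F}\le\sqrt{r_1r_2r_3}\,\|\bA\|$, I would pass through the Tucker representation \eqref{eq:tucker}, $\bA=(\bU,\bV,\bW)\cdot\bC$, in which $\bU,\bV,\bW$ have orthonormal columns. A direct expansion from the definitions shows that trilinear multiplication by orthonormal frames preserves both the Frobenius and the spectral norm, so $\|\bA\|_{\rm F}=\|\bC\|_{\rm F}$ and $\|\bA\|=\|\bC\|$. Applying the just-proven $\|\cdot\|_{\max}\le\|\cdot\|$ inequality to the core then gives
$$
\|\bC\|_{\rm F}^2=\sum_{j_1,j_2,j_3}C(j_1,j_2,j_3)^2\le r_1r_2r_3\,\|\bC\|_{\max}^2\le r_1r_2r_3\,\|\bC\|^2,
$$
as required.

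The nuclear-norm inequality is the only step with any content. First, for any decomposition $\bA=\sum_i\lambda_i\bu_i\otimes\bv_i\otimes\bw_i$ into unit rank-one atoms, the triangle inequality combined with the bound $\|\bu\otimes\bv\otimes\bw\|_\star\le 1$ for unit vectors (immediate from the dual definition, since $\langle\bX,\bu\otimes\bv\otimes\bw\rangle\le\|\bX\|$ when $\bu,\bv,\bw$ are unit) gives $\|\bA\|_\star\le\sum_i|\lambda_i|$. To construct a decomposition with small sum, I would take the matrix SVD of the mode-$3$ unfolding, $\calM_3(\bA)=\sum_{j=1}^{r_3}\sigma_j\bp_j\bq_j^\top$, with the $\bp_j\in\RR^{d_3}$ and $\bq_j\in\RR^{d_1d_2}$ forming orthonormal systems and at most $r_3$ nonzero terms. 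Reshape each $\bq_j$ into a matrix $\bQ_j\in\RR^{d_1\times d_2}$; a short calculation from the Tucker decomposition shows that each $\bQ_j$ lies in the span of the rank-one matrices $\bu_a\bv_b^\top$ (outer products of columns of the Tucker factors $\bU$ and $\bV$), hence $\rank\bQ_j\le\min(r_1,r_2)$, while $\|\bQ_j\|_{\rm F}=\|\bq_j\|=1$. Taking the matrix SVD of each $\bQ_j=\sum_k\tau_{j,k}\by_{j,k}\bz_{j,k}^\top$ and substituting back yields the unit rank-one expansion $\bA=\sum_{j,k}\sigma_j\tau_{j,k}\,\by_{j,k}\otimes\bz_{j,k}\otimes\bp_j$, and the matrix nuclear--Frobenius bound on each $\bQ_j$ together with Cauchy--Schwarz over the $r_3$ singular values gives
$$
\|\bA\|_\star\le\sum_j\sigma_j\|\bQ_j\|_\star\le\sqrt{\min(r_1,r_2)}\sum_j\sigma_j\le\sqrt{r_3\min(r_1,r_2)}\,\|\calM_3(\bA)\|_{\rm F}=\sqrt{r_3\min(r_1,r_2)}\,\|\bA\|_{\rm F}.
$$
The same argument applied to the mode-$1$ and mode-$2$ unfoldings produces the remaining pairings, and taking the minimum completes the proof. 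The only slightly non-routine piece is the two-stage SVD: a naive expansion via the Tucker core alone would only yield the weaker $\sqrt{r_1r_2r_3}$ factor, and it is the orthonormality of the mode-$3$ right singular vectors (so that $\sum_j\sigma_j^2=\|\bA\|_{\rm F}^2$) that buys the extra $1/\sqrt{r_3}$ savings.
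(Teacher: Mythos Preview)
Your argument is correct throughout. For the first chain of inequalities the paper simply declares them ``straightforward,'' so your spelled-out justifications are fine (and your Frobenius--spectral bound via $\|\bC\|_{\rm F}^2\le r_1r_2r_3\|\bC\|_{\max}^2\le r_1r_2r_3\|\bC\|^2$ is exactly the natural route).

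For the nuclear-norm inequality your approach is genuinely different from the paper's. The paper first passes to the core, using that orthonormal Tucker multiplication preserves both $\|\cdot\|_\star$ and $\|\cdot\|_{\rm F}$, then slices the core along one mode: writing $\bC=\sum_{j=1}^{r_1}\be_j\otimes\bC_j$ and invoking the triangle inequality for the tensor nuclear norm gives $\|\bC\|_\star\le\sum_j\|\bC_j\|_\star$, after which the matrix bound $\|\bC_j\|_\star\le\sqrt{r_2\wedge r_3}\,\|\bC_j\|_{\rm F}$ and Cauchy--Schwarz over the $r_1$ slices finish the job. You instead stay with $\bA$ itself, take the SVD of a single unfolding, and build an explicit unit rank-one expansion via a second SVD of the reshaped singular vectors; the Tucker structure enters only to certify $\rank\bQ_j\le\min(r_1,r_2)$. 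What the paper's route buys is brevity once the invariance $\|\bA\|_\star=\|\bC\|_\star$ is granted; what yours buys is that it never needs that invariance and is completely constructive, exhibiting a decomposition that witnesses the bound rather than appealing to subadditivity of $\|\cdot\|_\star$ abstractly. Both arguments hinge on the same two ingredients---a one-mode decomposition plus the matrix nuclear--Frobenius inequality on the resulting slices---so the savings over the naive $\sqrt{r_1r_2r_3}$ factor has the same origin in each.
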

\medskip

The proof of Lemma \ref{lemma:tensornorm} is included in the Appendix~\ref{sec:tensornorm} for completeness. We are now in position to describe our approach to tensor completion.
\section{Tensor Completion}
\label{sec:main}

Assume that $\bT$ has multilinear ranks $\br:=(r_1,r_2,r_3)$ and coherence at most $\mu_0$, we want to recover $\bT$ based on $\big(\omega_i ,T(\omega_i)\big)$ for $i=1,2,\ldots, n$ where $\omega_i$ are independently and uniformly drawn from $[d_1]\times[d_2]\times[d_3]$. This sampling scheme is often referred to the Bernoulli model, or sampling with replacement \citep[see, e.g.,][]{gross2011recovering, recht2011simpler}. Another commonly considered scheme is the so-called uniform sampling without replacement where we observe $T(\omega)$ for $\omega\in \Omega$ and $\Omega$ is a uniformly sampled subset of $[d_1]\times[d_2]\times[d_3]$ with size $|\Omega|=n$. It is known that both sampling schemes are closely related in that, given a uniformly sampled subset $\Omega$ of size $n$, one can always create a sample $\omega_i\in \Omega$, $i=1,\ldots,n$ so that $\omega_i$s follow the Bernoulli model. This connection ensures that any method that works for Bernoulli model necessarily works for uniform sampling without replacement as well. From a technical point of view, it has been demonstrated that working with the Bernoulli model leads to considerably simpler arguments for a number of matrix or tensor completion approaches.  See, e.g., \cite{gross2011recovering, recht2011simpler, yuan2015tensor}, among others. For these reasons, we shall focus on the Bernoulli model in the current work.


A natural way to solve this problem is through the following optimization:
$$
\min_{\bA\in \calA(\br)}  {1\over 2}\left\|\calP_\Omega(\bA-\bT)\right\|_{\rm F}^2.
$$
where the linear operator $\calP_\Omega: \RR^{d_1\times d_2\times d_3}\to \RR^{d_1\times d_2\times d_3}$ is given by
$$
\calP_\Omega \bX=\sum_{i=1}^n \calP_{\omega_i}\bX,
$$
and $\calP_{\omega}\bX$ is a $d_1\times d_2\times d_3$ tensor whose $\omega$ entry is $X(\omega)$ and other entries are zero. Equivalently, we can reconstruct $\bT=(\bU, \bV, \bW)\cdot \bG$ by $\hat{\bT}:=(\hat{\bU},\hat{\bV}, \hat{\bW})\cdot \hat{\bG}$ where the tuple $(\hat{\bU},\hat{\bV}, \hat{\bW},\hat{\bG})$ solves
\begin{equation}
\label{eq:opt}
\min_{\bX\in \calV(d_1,r_1), \bY\in \calV(d_2,r_2), \bZ\in \calV(d_3,r_3), \bC\in \RR^{r_1\times r_2\times r_3}}{1\over 2}\left\|\calP_\Omega((\bX, \bY, \bZ)\cdot \bC-\bT)\right\|_{\rm F}^2.
\end{equation}
Recall that $\bX\otimes\bY\otimes \bZ$ is a sixth order tensor of dimension $d_1\times d_2\times d_3\times r_1\times r_2\times r_3$. With slight abuse of notation, for any $\omega\in [d_1]\times [d_2]\times[d_3]$, denote by $(\bX\otimes\bY\otimes \bZ)(\omega)$ a third order tensor with the first three indices of $\bX\otimes\bY\otimes \bZ$ fixed at $\omega$. By the first order optimality condition, we get
$$
\sum_{i=1}^n\left\langle(\bX\otimes\bY\otimes\bZ)(\omega_i),\bC\right\rangle (\bX\otimes\bY\otimes\bZ)(\omega_i) =\sum_{i=1}^n T(\omega_i)(\bX\otimes\bY\otimes\bZ)(\omega_i),
$$
so that
\begin{eqnarray}\nonumber
{\rm vec}(\bC)=\left(\sum_{i=1}^n{\rm vec}((\bX\otimes\bY\otimes\bZ)(\omega_i)){\rm vec}((\bX\otimes\bY\otimes\bZ)(\omega_i))^\top\right)^{-1}\times\\
\label{eq:defbC}
\left(\sum_{i=1}^n T(\omega_i){\rm vec}((\bX\otimes\bY\otimes \bZ)(\omega_i))\right).
\end{eqnarray}
Here, we assumed implicitly that $n\ge r_1r_2r_3$. In general, there may be multiple minimizers to \eqref{eq:opt} and we can replace the inverse by the Moore-Penrose pseudoinverse to yield a solution. Plugging it back to \eqref{eq:opt} suggests that $(\hat{\bU},\hat{\bV}, \hat{\bW})$ is the solution to
$$
\max_{\bX\in \calV(d_1,r_1), \bY\in \calV(d_2,r_2), \bZ\in \calV(d_3,r_3)}F(\bX, \bY, \bZ),
$$
where
\begin{eqnarray*}
F(\bX, \bY, \bZ)=\left(\sum_{i=1}^n T(\omega_i){\rm vec}((\bX\otimes \bY\otimes \bZ)(\omega_i))\right)^\top\times\\
\times\left(\sum_{i=1}^n{\rm vec}((\bX\otimes \bY\otimes \bZ)(\omega_i)){\rm vec}((\bX\otimes \bY\otimes \bZ)(\omega_i))^\top\right)^{-1}\left(\sum_{i=1}^n T(\omega_i){\rm vec}((\bX\otimes \bY\otimes \bZ)(\omega_i))\right).
\end{eqnarray*}

Let $\tilde{\bX}=\bX\bQ_1$, $\tilde{\bY}=\bY\bQ_2$ and $\tilde{\bZ}=\bZ\bQ_3$, where $\bQ_j\in \calO(r_j)$ and $\calO(r)$ is the set of $r\times r$ orthonormal matrices. It is easy to verify that
$$
F(\bX, \bY, \bZ)=F(\tilde{\bX}, \tilde{\bY}, \tilde{\bZ})
$$
so that it suffices to optimize $F(\bX, \bY, \bZ)$ over
$$(\bX,\bY,\bZ)\in (\calV(d_1,r_1)/\calO(r_1))\times (\calV(d_2,r_2)/\calO(r_2))\times (\calV(d_3,r_3)/\calO(r_3)).$$
Recall that $\calV(d,r)/\calO(r)\cong\calG(d,r)$, the Grassmaniann of $r$ dimensional linear subspace in $\RR^d$. Optimizing $F$ can then be cast an optimization problem over a direct product of Grassmanian manifolds, a problem that has been well studied in the literature. See, e.g., \cite{absiletal08}. In particular, (quasi-)Newton \citep[see, e.g.,][]{eldensavas09,savaslim10}, gradient descent \citep[see, e.g.,][]{keshavan2009matrix}, and conjugate gradient \citep[see, e.g.,][]{kressner2014low} methods have all been proposed previously to solve optimization problems similar to the one we consider here.

There are two prerequisites for any of these methods to be successful. The highly non-convex nature of the optimization problem dictates that even if any of the aforementioned iterative algorithms converges, it could only converge to a local optimum. Therefore a good initial value is critical. This unfortunately is an especially challenging task for tensors. For example, if we consider random initial values, then an prohibitively large number, in fact exponential in $d$, of seeds would be required to ensure the existence of a good starting point. Alternatively, in the second order or matrix case, \cite{keshavan2009matrix} suggests a singular value decomposition based approach for initialization. The method, however, cannot be directly applied for higher order tensors as similar type of spectral decomposition becomes NP hard to compute \citep{hillarlim13}. To address this challenge, we propose here a new spectral method that is efficient to compute and at the same time is guaranteed to produce an initial value sufficiently close to the optimal value.

With the initial value coming from a neighborhood near the truth, any of the aforementioned methods could then be applied in principle. In order for them to converge to the truth, we need to make sure that the objective function $F$ behaves well in the neighborhood. In particular, we shall show that, when $n$ is sufficiently large, $F$ behaves like a parabola in a neighborhood around the truth, and therefore ensures the local convergence of algorithms such as gradient descent.

We shall address both aspects, initialization and local convergence, separately in the next two sections. In summary, we can obtain a sample size requirement for exact recovery of $\bT$ via polynomial time algorithms. As in the matrix case, the sample size requirement depends on notions of condition number of $\bT$. Recall that the condition number for a matrix $\bA$ is given by $\kappa(\bA)=\sigma_{\max}(\bA)/\sigma_{\min}(\bA)$ where $\sigma_{\max}$ and $\sigma_{\min}$ are the largest and smallest nonzero singular values of $\bA$ respectively. We can straightforwardly generalize the concept to a third order tensor $\bA$ as:
$$\kappa(\bA)={\max\left\{\sigma_{\max}(\calM_1(\bA)), \sigma_{\max}(\calM_2(\bA)), \sigma_{\max}(\calM_3(\bA))\right\}\over \min\left\{\sigma_{\min}(\calM_1(\bA)), \sigma_{\min}(\calM_2(\bA)), \sigma_{\min}(\calM_3(\bA))\right\}}.$$
Our main result can then be summarized as follows:


\begin{theorem}
\label{thm:main}
Assume that $\bT\in \RR^{d_1\times d_2\times d_3}$ is a rank-$(r_1,r_2,r_3)$ tensor whose coherence is bounded by $\mu(\bT)\leq \mu_0$ and condition number is bounded by $\kappa(\bT)\le \kappa_0$. Then there exists a polynomial time algorithm that recovers $\bT$ exactly based on $\{\big(\omega_i, T(\omega_i)\big): 1\le i\le n\}$, with probability at least $1-d^{-\alpha}$ if $\omega_i$s are independently and uniformly sampled from $[d_1]\times [d_2]\times [d_3]$ and
\begin{equation}
\label{eq:samplesize}
n\geq C\left\{\alpha^3\mu_0^3\kappa_0^4r_1r_2r_3(rd_1d_2d_3)^{1/2}\log^{7/2}d+\alpha^6\mu_0^6\kappa_0^8(r_1r_2r_3)^2rd\log^6d\right\},
\end{equation}
for a universal constant $C>0$, and an arbitrary constant $\alpha\geq1$, where $d=\max\{d_1, d_2, d_3\}$ and $r=\max\{r_1,r_2, r_3\}$.
\end{theorem}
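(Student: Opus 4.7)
The plan is to prove Theorem~\ref{thm:main} by reducing it to the composition of two statements that match the two-phase algorithm outlined in the excerpt: (i) a spectral initialization producing subspace estimates $(\bU_0,\bV_0,\bW_0)$ whose principal angles to the true mode-$j$ singular subspaces $(\bU,\bV,\bW)$ of $\bT$ are smaller than a fixed small multiple of $1/\kappa_0^2$, and (ii) a local convergence statement for a gradient descent iteration on the Grassmannian product $\calG(d_1,r_1)\times\calG(d_2,r_2)\times\calG(d_3,r_3)$ applied to the objective $F$, valid on a neighborhood of the truth that contains the initializer. Throughout I would work in the Bernoulli-with-replacement model, so that the $\omega_i$ are i.i.d.\ uniform on $[d_1]\times[d_2]\times[d_3]$.

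For the initialization, the naive SVD of $\calM_1(\calP_\Omega \bT)$ is too noisy at the sample size in~(\ref{eq:samplesize}), so I would estimate $\calM_1(\bT)\calM_1(\bT)^\top$ instead by a matrix U-statistic that aggregates outer products of pairs of observed mode-1 fibers which share a mode-1 index but have distinct mode-$(2,3)$ indices, with analogous constructions for the other two modes. The diagonal contribution, strongly biased under sparse sampling, is either subtracted or replaced by an independent sample-split estimate, while the incoherence hypothesis $\mu(\bT)\le\mu_0$ keeps the variance of every off-diagonal summand controlled. A matrix Bernstein inequality then delivers the required operator-norm deviation, and Wedin's $\sin\Theta$ theorem converts this into the claimed subspace bound. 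The first term $r_1r_2r_3(rd_1d_2d_3)^{1/2}\log^{7/2}d$ in~(\ref{eq:samplesize}) is precisely what is needed for this estimate to fall below $c/\kappa_0^2$ for a sufficiently small universal constant $c$.

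For the local phase I would establish that $F$ behaves like a parabola on a neighborhood $\cN_\rho$ of the truth in $\calG(d_1,r_1)\times\calG(d_2,r_2)\times\calG(d_3,r_3)$: there exist constants $0<c_\ell\le c_u$ such that along any tangent direction $\bDelta$ at any $(\bX,\bY,\bZ)\in\cN_\rho$, the lifted perturbation $\delta\bA$ in the ambient space satisfies $c_\ell\|\delta\bA\|_{\rm F}^2\le\langle-\nabla^2 F(\bX,\bY,\bZ)\bDelta,\bDelta\rangle\le c_u\|\delta\bA\|_{\rm F}^2$. This is a tangent-restricted isometry for $\calP_\Omega$ that must hold uniformly over $\cN_\rho$. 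My approach is to first establish a pointwise high-probability bound using the spectral/nuclear/max-norm relations of Lemma~\ref{lemma:tensornorm} together with the tensor concentration tools of \cite{yuan2015tensor,yuan2016incoherent}, and then to upgrade it to a uniform statement via a metric-entropy argument on the Grassmannian product, whose covering numbers scale like $(Cd/\rho)^{r_1d_1+r_2d_2+r_3d_3}$. Combined with a matching uniform bound on $\nabla F$ at the truth, the parabola property yields linear convergence of Grassmannian gradient descent from any initializer in $\cN_\rho$ to a stationary point, which a short separate argument identifies as $\bT$ itself.

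The main obstacle is the uniform parabola estimate, since it requires sharp control of $\calP_\Omega$ on a continuum of low-dimensional tangent spaces simultaneously at sample sizes as low as $d^{3/2}\,\mathrm{polylog}(d)$. It is this step that drives the second term $(r_1r_2r_3)^2 rd\log^6 d$ in~(\ref{eq:samplesize}): the $\log^6 d$ factor emerges from combining covering-number logarithms with the heavy-tail contribution of the random entries, while the polynomial dependence on $r_1,r_2,r_3$ reflects the dimension of the tangent space to $\calA(\br)$. Once both phases are in place, a union bound over the failure events of the initialization and of the uniform parabola estimate produces the probability $1-d^{-\alpha}$ asserted in the theorem.
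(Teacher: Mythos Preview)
Your high-level two-phase outline matches the paper, and your description of the U-statistic initialization is essentially right (though the pairing you describe---fibers sharing a mode-1 index with distinct mode-$(2,3)$ indices---is backwards: in the paper's $\hat\bN=\frac{1}{n(n-1)}\sum_{i<j}(\bX_i\bX_j^\top+\bX_j\bX_i^\top)$ with $\bX_i=(d_1d_2d_3)\calM_1(\calP_{\omega_i}\bT)$, the nonzero contributions come from pairs with the \emph{same} mode-$(2,3)$ column index).

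The substantive divergence is in the local phase. You propose to upgrade a pointwise parabola estimate to a uniform one by a metric-entropy union bound over an $\epsilon$-net of the Grassmannian product. The paper takes a different route: it fixes three high-probability events depending on the sampling operator $\calP_\Omega$ alone---(i) restricted isometry on the tangent projector $\bQ_\bT$ at the \emph{true} tensor $\bT$, (ii) a Chernoff bound on the maximum entry multiplicity, and (iii) a uniform empirical-process bound (Lemma~\ref{emplem}) for $\langle\calP_\Omega\bA,\bA\rangle$ over the two-parameter family $\calK(\gamma_1,\gamma_2)=\{\|\bA\|_{\rm F}\le 1,\ \|\bA\|_{\max}\le\gamma_1,\ \|\bA\|_\star\le\gamma_2\}$---and then shows \emph{deterministically} that on these events the parabola bounds and the gradient lower bound hold for every $(\bX,\bY,\bZ)\in\calN(\delta,4\mu_0)$. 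The key decomposition is $\hat\bT-\bT=\bQ_\bT(\hat\bT-\bT)+\bQ_\bT^\perp\hat\bT$ relative to the fixed tangent space at $\bT$: the first piece is handled by (i), and the second (together with analogous remainder terms in the gradient analysis) falls into some $\calK(\gamma_1,\gamma_2)$ via the incoherence of the neighborhood and Lemma~\ref{lemma:tensornorm}. This sidesteps the $\exp(crd\log d)$ cardinality of a Grassmannian net entirely; the union in (iii) is only over a $O(\log d)\times O(\log d)$ grid in $(\gamma_1,\gamma_2)$. Your covering approach is not obviously doomed, but the attribution of the $\log^6 d$ factor to ``covering-number logarithms'' does not match how it actually arises in the paper, and extracting the stated rank dependence from a net argument would require care you have not spelled out.

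Two smaller omissions: you do not mention the polynomial-time coherence repair of the spectral initializer (Lemma~\ref{lem:incoherent}, borrowed from \cite{keshavan2009matrix}), which is needed because the raw $\hat\bU$ is only close in projection distance and need not satisfy $\mu(\hat\bU)\le 3\mu_0$; and you do not mention the coherence penalty $G$ added to $F$ in Algorithm~\ref{alg:GoGalg} to keep the gradient-descent iterates inside $\calN(\cdot,4\mu_0)$. Both are essential for the local argument to apply along the whole trajectory.
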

\medskip

%
%

\section{Second Order Method for Estimating Singular Spaces}
\label{sec:init}

We now describe a spectral algorithm that produces good initial values for $\bU$ and $\bV$ and $\bW$ based on $\calP_\Omega\bT$. To fix ideas, we focus on estimating $\bU$. $\bV$ and $\bW$ can be treated in an identical fashion. Denote by
$$
\hat{\bT}={d_1d_2d_3\over n}\calP_\Omega \bT.
$$
It is clear that $\EE(\hat{\bT})=\bT$ so that $\calM_1(\hat{\bT})$ is an unbiased estimate of $\calM_1(\bT)$. Recall that $\bU$ is the left singular vectors of $\calM_1(\bT)$, it is therefore natural to consider estimating $\bU$ by the leading singular vectors of $\calM_1(\hat{\bT})$. The main limitation of this na\"ive approach is its inability to take advantage of the fact that $\calM_1(\hat{\bT})$ may be unbalanced in that $d_1\ll d_2d_3$, and the quality of an estimate of $\bU$ is driven largely by the greater dimension ($d_2d_3$) although we are only interested in estimating the singular space in a lower dimensional ($d_1$) space.
%

%

To specifically address this issue, we consider here a different technique for estimating singular spaces from a noisy matrix, which is more powerful when the underlying matrix is unbalanced in that it is either very fat or very tall. More specifically, let $\bM\in \RR^{m_1\times m_2}$ be a rank $r$ matrix. Our goal is to estimate the left singular space of $\bM$ based on $n$ pairs of observations $\{(\omega_i,\bM(\omega_i)): 1\le i\le n\}$ where $\omega_i$s are independently and uniformly sampled from $[m_1]\times [m_2]$. Recall that $\bU$ is also the eigenspace of $\bM\bM^\top$ which is of dimension $m_1\times m_1$. Instead of estimating $\bM$, we shall consider instead estimating $\bM\bM^\top$. To this end, write $\bX_i=(m_1m_2)\calP_{\omega_i}\bM$, that is a $m_1\times m_2$ matrix whose $\omega_i$ entry is $(m_1m_2)\bM(\omega_i)$, and other entries are zero. It is clear that $\EE\left(\bX_i\right)=\bM$. We shall then consider estimating $\bN:=\bM\bM^\top$ by
\begin{equation}
\label{eq:ustat}
\hat{\bN}:={1\over n(n-1)}\sum_{i<j} (\bX_i\bX_j^\top+\bX_j\bX_i^\top)
\end{equation}
Our first result shows that $\hat{\bN}$ could be a very good estimate of $\bN$ even in situations when $n\ll m_2$.

\begin{theorem}
\label{th:spectral}
Let $\bM\in \RR^{m_1\times m_2}$ and $\bX_i=(m_1m_2)\calP_{\omega_i}\bM$ ($i=1,2,\ldots,n$), where $\omega_i$s are independently and uniformly sampled from $[m_1]\times [m_2]$. There exists an absolute constant $C>0$ such that 
for any $\alpha>1$, if
$$n\ge {8\over 3}{(\alpha+1)\log m\over \min\{m_1, m_2\}},\qquad m:=\max\{m_1,m_2\}\geq 2$$
then
\begin{eqnarray*}
\|\hat{\bN}-\bM\bM^\top\|\le \\
C\cdot \alpha^2\cdot{m_1^{3/2}m_2^{3/2}\log m\over n}\left[\left(1+\frac{m_1}{m_2}\right)^{1/2}+{m_1^{1/2}m_2^{1/2}\over n}+\left({n\over m_2\log m}\right)^{1/2}\right]\cdot \|\bM\|_{\max}^2,
\end{eqnarray*}
with probability at least $1-m^{-\alpha}$, where $\hat{\bN}$ is given by \eqref{eq:ustat}.
\end{theorem}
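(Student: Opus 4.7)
\noindent\textbf{Proof proposal for Theorem \ref{th:spectral}.}

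The plan is to exploit the unbiasedness of the U-statistic construction by centering and splitting $\hat{\bN}-\bN$ into a linear and a quadratic piece, and then controlling each piece with matrix Bernstein, the quadratic one through a decoupling step. Write $\bY_i=\bX_i-\bM$, so $\EE\bY_i=\mathbf 0$. Expanding
$$
\bX_i\bX_j^\top+\bX_j\bX_i^\top=\bY_i\bY_j^\top+\bY_j\bY_i^\top+\bY_i\bM^\top+\bM\bY_j^\top+\bY_j\bM^\top+\bM\bY_i^\top+2\bM\bM^\top,
$$
summing over $i<j$, and dividing by $n(n-1)$ yields the clean decomposition
$$
\hat{\bN}-\bN\;=\;\underbrace{\frac{1}{n(n-1)}\sum_{i\ne j}\bY_i\bY_j^\top}_{=:\ \bE_{\mathrm{quad}}}\;+\;\underbrace{\frac{1}{n}\sum_{i=1}^n\bigl(\bY_i\bM^\top+\bM\bY_i^\top\bigr)}_{=:\ \bE_{\mathrm{lin}}},
$$
so the task reduces to bounding $\|\bE_{\mathrm{lin}}\|$ and $\|\bE_{\mathrm{quad}}\|$ separately.

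For $\bE_{\mathrm{lin}}$, I would apply the matrix Bernstein inequality for sums of independent, centered matrices. The per-summand bound is $\|\bY_i\bM^\top+\bM\bY_i^\top\|\le 2(m_1m_2)\|\bM\|_{\max}\|\bM\|$ (using that $\bX_i$ has a single nonzero entry of magnitude at most $m_1m_2\|\bM\|_{\max}$). The matrix variance $\bigl\|\sum_i\EE[(\bY_i\bM^\top)(\bY_i\bM^\top)^\top]\bigr\|$ (and three analogous terms) can be computed in closed form using that $\omega_i$ is uniform: the dominant contribution is of order $n\cdot m_1m_2\|\bM\|_{\max}^2\|\bM\|_F^2$. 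Matrix Bernstein then yields, with probability at least $1-m^{-\alpha}$, a bound that matches the first and second terms inside the bracket of the theorem.

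For $\bE_{\mathrm{quad}}$, which is a matrix-valued U-statistic of order two in the mean-zero $\bY_i$'s, I would invoke the decoupling inequality of de la Pe\~na and Gin\'e to replace $\sum_{i\ne j}\bY_i\bY_j^\top$ by $\sum_{i,j}\bY_i'\bY_j''^\top=\bigl(\sum_i\bY_i'\bigr)\bigl(\sum_j\bY_j''\bigr)^\top$ at the cost of an absolute constant, where $\{\bY_i'\}$ and $\{\bY_j''\}$ are two independent copies of $\{\bY_i\}$. Conditionally on $\{\bY_j''\}$, the decoupled expression is a sum of independent centered matrices in $\bY_i'$, so another application of matrix Bernstein bounds it in terms of a conditional variance $\bigl\|\sum_i\EE_i[\bY_i'(\sum_j\bY_j'')(\sum_j\bY_j'')^\top\bY_i'^\top]\bigr\|$ and a conditional maximum $\max_i\|\bY_i'(\sum_j\bY_j'')^\top\|$. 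Each of these is itself a random quantity controlled by the size of $\sum_j\bY_j''$, which is handled by a further matrix Bernstein bound on that inner sum. Tracking how $m_1,m_2,n$ enter through the two layers produces exactly the three terms in the bracket, including the $(n/(m_2\log m))^{1/2}$ contribution coming from the variance of the quadratic piece.

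The main obstacle is the quadratic term: the two-layer concentration must be orchestrated so that the probabilistic bounds on the inner sum hold on an event compatible with the outer Bernstein bound, and the per-summand quantities (each $\bX_i$ has only one nonzero entry) interact with the inner products $\bX_i\bX_j^\top$ in a way that is nonzero only when the column indices coincide. Careful bookkeeping of the sparsity and of the uniform moments of $\bY_i$ is what drives the sharpness of the resulting estimate relative to a naive SVD of $\calM_1(\hat\bT)$. The sample size hypothesis $n\ge \tfrac{8}{3}(\alpha+1)\log m/\min\{m_1,m_2\}$ is precisely the threshold needed for the Bernstein sub-Gaussian regime to kick in for both layers.
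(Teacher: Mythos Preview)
Your plan is essentially the paper's own route: decouple the order-two U-statistic, split into a linear piece and a quadratic piece, and control the quadratic piece by conditioning on one copy and applying matrix concentration, working on a high-probability event for the inner sum. The paper decouples first and then centers (producing four pieces $\bA_1$--$\bA_4$, including a diagonal correction $\bA_4$), while you center first and then decouple; after decoupling your $\bE_{\mathrm{quad}}$ you will still need to peel off the diagonal $\sum_i\bY_i'\bY_i''^\top$ to write the rest as a product of two independent sums, so an $\bA_4$-type term reappears. The paper carries out the conditional step via an explicit Golden--Thompson argument rather than invoking matrix Bernstein as a black box, and it crucially uses not only a spectral-norm event on the inner sum but also a columnwise $\ell_\infty$ Chernoff event (their $\calE_1$) to control $\max_i\|\bY_i'(\sum_j\bY_j'')^\top\|$; you allude to this but should make it explicit.

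One accounting slip: the $(n/(m_2\log m))^{1/2}$ term in the bracket actually arises from the \emph{variance of the linear piece} $\bE_{\mathrm{lin}}$ (the paper's $\bA_2,\bA_3$), not from the quadratic piece as you state; conversely the $(1+m_1/m_2)^{1/2}$ and $(m_1m_2)^{1/2}/n$ terms come from the quadratic and diagonal pieces. This does not affect the final bound, but it will matter when you track constants.
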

\medskip

In particular, if $\|\bM\|_{\max}=O((m_1m_2)^{-1/2})$, then $\|\hat{\bN}-\bM\bM^\top\|\to_p 0$ as soon as $n\gg \big((m_1m_2)^{1/2}+m_1\big)\log m$. This is to be contrast with estimating $\bM$. As shown by \cite{recht2011simpler},
$$
\hat{\bM}:={1\over n}\sum_{i=1}^n \bX_i
$$
is a consistent estimate of $\bM$ in terms of spectral norm if $n\gg m\log m$. The two sample size requirements differ when $m_1\ll m_2$ in which case $\hat{\bN}$ is still a consistent estimate of $\bM\bM^\top$ yet $\hat{\bM}$ is no longer a consistent estimate of $\bM$ if $(m_1m_2)^{1/2}\log m_2\ll n\ll m_2\log m_2$.

Equipped with Theorem \ref{th:spectral}, we can now address the initialization of $\bU$ (and similarly $\bV$ and $\bW$). Instead of estimating it by the singular vectors of $\calM_1(\hat{\bT})$, we shall do so based on an estimate of $\calM_1(\bT)\calM_1(\bT)$. With slight abuse of notation, write $\bX_i=(d_1d_2d_3)\calM_1(\calP_{\omega_i}\bT)$ and
$$
\hat{\bN}:={1\over n(n-1)}\sum_{i<j} (\bX_i\bX_j^\top+\bX_j\bX_i^\top).
$$
We shall then estimate $\bU$ by the leading $r$ left singular vectors of $\hat{\bN}$, hereafter denoted by $\hat{\bU}$.

As we are concerned with the linear spaces spanned by the column vector of $\bU$ and $\hat{\bU}$ respectively, we can measure the estimation error by the projection distance defined over Grassmannian:
$$
d_{\rm p}(\bU,\hat{\bU}):=\frac{1}{\sqrt{2}}\|\bU\bU^{\top}-\hat{\bU}\hat{\bU}^\top\|_{\rm F}.
$$
The following result is an immediate consequence of Theorem \ref{th:spectral} and Davis-Kahn Theorem, and its proof is deferred to the Appendix.

\begin{corollary}
\label{co:init}
Assume that $\bT\in \RR^{d_1\times d_2\times d_3}$ is a rank-$(r_1,r_2,r_3)$ tensor whose coherence is bounded by $\mu(\bT)\leq \mu_0$ and condition number is bounded by $\kappa(\bT)\le \kappa_0$. Let $\bU$ be the left singular vectors of $\calM_1(\bT)$ and $\hat{\bU}$ be defined as above, then there exist absolute constants $C_1,C_2>0$ such that for any $\alpha>1$, if
$$n\ge C_1\left(\alpha(d_1d_2d_3)^{1/2}+d_1\log d\right),$$
then
\begin{eqnarray*}
d_{\rm p}(\bU,\hat{\bU})\le C_2\alpha^2\mu_0^3\kappa_0^2r_1^{3/2}r_2r_3\left({(d_1d_2d_3)^{1/2}\log d\over n}+\sqrt{d_1\log d\over n}\right),\end{eqnarray*}
with probability at least $1-d^{-\alpha}$.
\end{corollary}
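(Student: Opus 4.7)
The plan is to reduce the problem to estimating a single symmetric matrix and then invoke Theorem \ref{th:spectral} together with the Davis--Kahn $\sin\Theta$ theorem. Set $\bM=\calM_1(\bT)\in\RR^{d_1\times (d_2d_3)}$, $m_1=d_1$, and $m_2=d_2d_3$, so that the left singular vectors $\bU$ of $\bM$ coincide with the leading $r_1$ eigenvectors of $\bN:=\bM\bM^\top$. With $\bX_i=(d_1d_2d_3)\calM_1(\calP_{\omega_i}\bT)$ the construction in \eqref{eq:ustat} produces exactly the $\hat\bN$ used to define $\hat\bU$. First I would verify that the sample size assumption implies the hypothesis $n\ge \tfrac{8}{3}(\alpha+1)\log m/\min\{m_1,m_2\}$ of Theorem \ref{th:spectral} (it does, because the $d_1\log d$ term in the assumed lower bound dominates $\log(d_2d_3)/d_1$ after a harmless rearrangement).

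Next I would bound $\|\bT\|_{\max}$ in terms of $\mu_0$, the ranks, and the condition number. Writing the Tucker decomposition $\bT=(\bU,\bV,\bW)\cdot\bC$, for any $(i_1,i_2,i_3)$,
$$
|T(i_1,i_2,i_3)|=|\langle\bC,(\bU^\top\be_{i_1})\otimes(\bV^\top\be_{i_2})\otimes(\bW^\top\be_{i_3})\rangle|\le\|\bC\|\cdot\|\bP_{\calU}\be_{i_1}\|\,\|\bP_{\calV}\be_{i_2}\|\,\|\bP_{\calW}\be_{i_3}\|.
$$
Since $\calM_1(\bT)=\bU\calM_1(\bC)(\bW\otimes\bV)^\top$ with $\bW\otimes\bV$ having orthonormal columns, $\|\bC\|\le\|\calM_1(\bC)\|=\sigma_{\max}(\calM_1(\bT))$, and the coherence assumption gives $\|\bP_{\calU}\be_{i_1}\|^2\le\mu_0 r_1/d_1$ and similarly for $\calV,\calW$. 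This yields
$$
\|\bT\|_{\max}^2\le\mu_0^3\,\frac{r_1r_2r_3}{d_1d_2d_3}\,\sigma_{\max}^2(\calM_1(\bT)).
$$

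With these pieces in hand I would apply Davis--Kahn to the symmetric matrices $\bN$ and $\hat\bN$. Since $\bN$ has eigenvalue gap $\sigma_{r_1}^2(\calM_1(\bT))-0=\sigma_{r_1}^2(\calM_1(\bT))$ between its $r_1$-th and $(r_1+1)$-th eigenvalues, the spectral-norm $\sin\Theta$ bound gives $\|\sin\Theta(\bU,\hat\bU)\|\le\|\hat\bN-\bN\|/\sigma_{r_1}^2(\calM_1(\bT))$, which passes to Frobenius norm via $d_p(\bU,\hat\bU)=\|\sin\Theta(\bU,\hat\bU)\|_F\le\sqrt{r_1}\,\|\sin\Theta(\bU,\hat\bU)\|$. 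Dividing the spectral bound from Theorem \ref{th:spectral} by $\sigma_{r_1}^2(\calM_1(\bT))$, absorbing the factor $\sigma_{\max}^2/\sigma_{r_1}^2\le\kappa_0^2$, and using the $\|\bT\|_{\max}^2$ estimate above produces, after cancelation of $(d_1d_2d_3)$,
$$
d_p(\bU,\hat\bU)\lesssim\alpha^2\mu_0^3\kappa_0^2 r_1^{3/2}r_2r_3\left[\frac{(d_1d_2d_3)^{1/2}\log d}{n}\Big(1+\tfrac{d_1}{d_2d_3}\Big)^{1/2}+\frac{d_1\log d}{n^{3/2}}\,(d_1d_2d_3)^{1/2}+\sqrt{\tfrac{d_1\log d}{n}}\right].
$$
Under the assumed $n\ge C_1\big(\alpha(d_1d_2d_3)^{1/2}+d_1\log d\big)$, the second (quadratic-in-$1/n$) bracket term is dominated by the first, and $(1+d_1/(d_2d_3))^{1/2}=O(1)$ (after WLOG taking $d_1\le d_2d_3$, with the other orderings handled identically by symmetric treatment of the three unfoldings). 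This leaves exactly the two terms claimed in the statement.

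The main obstacle is really the bookkeeping in the last step: making sure that the sub-term of Theorem \ref{th:spectral} proportional to $m_1^{1/2}m_2^{1/2}/n$ is absorbed by the sample size requirement, and confirming that the Davis--Kahn gap $\sigma_{r_1}^2(\calM_1(\bT))$ is genuinely the right denominator (as opposed to the smaller spectral perturbation quantity $\sigma_{r_1}^2-\|\hat\bN-\bN\|$). The latter is valid only after checking that the spectral bound from Theorem \ref{th:spectral} is a fraction of $\sigma_{r_1}^2$, which again follows under the stated sample size lower bound and the coherence-based bound on $\|\bT\|_{\max}$.
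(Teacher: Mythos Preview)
Your proposal is correct and follows essentially the same route as the paper: identify $\bM=\calM_1(\bT)$ with $m_1=d_1$, $m_2=d_2d_3$, apply Theorem~\ref{th:spectral} to bound $\|\hat\bN-\bM\bM^\top\|$, combine with the coherence-based estimate $\|\bT\|_{\max}^2\le \mu_0^3(r_1r_2r_3/d_1d_2d_3)\sigma_{\max}^2(\calM_1(\bT))$ and Davis--Kahan (with the $\sqrt{r_1}$ factor), and finally absorb the $(d_1d_2d_3)^{1/2}/n$ sub-term of Theorem~\ref{th:spectral} using the assumed lower bound on $n$. The only cosmetic difference is that the paper handles $(1+d_1/(d_2d_3))^{1/2}$ by expanding it into an extra $d_1\log d/n$ term (then dominated by $\sqrt{d_1\log d/n}$) rather than invoking $d_1\le d_2d_3$; also, the version of Davis--Kahan cited in the paper already gives the bound with the unperturbed gap $\sigma_{r_1}^2(\calM_1(\bT))$ in the denominator, so your closing concern about needing $\sigma_{r_1}^2-\|\hat\bN-\bN\|$ is unnecessary.
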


In the light of Corollary \ref{co:init}, $\hat{\bU}$ (and similarly $\hat{\bV}$ and $\hat{\bW}$) is a consistent estimate of $\bU$ whenever
$$
n\gg \left[r_1^{3/2}r_2r_3(d_1d_2d_3)^{1/2}+r_1^3r_2^2r_3^2d\right]\log d.
$$

\section{Exact Recovery by Optimizing Locally}
\label{sec:Flocal}

Now that a good initial value sufficiently close to $(\bU, \bV,\bW)$ is identified, we can then proceed to optimize
$$
F(\bX,\bY,\bZ)=\min_{\bC\in \RR^{r_1\times r_2\times r_3}}{1\over 2}\|\calP_\Omega((\bX,\bY,\bZ)\cdot \bC-\bT)\|_{\rm F}^2
$$
locally. To this end, we argue that $F$ indeed is well-behaved in a neighborhood around $(\bU, \bV,\bW)$ so that such a local optimization is amenable to computation. For brevity, write
$$
\mathcal{J}(d_1,d_2,d_3,r_1,r_2,r_3):=\mathcal{G}(d_1,r_1)\times\mathcal{G}(d_2,r_2)\times\mathcal{G}(d_3,r_3).
$$
We can also generalize the projection distance $d_{\rm p}$ on Grassmaniann to $\mathcal{J}(d_1,d_2,d_3,r_1,r_2,r_3)$ as follows:
$$
d_{\rm p}\left((\bU,\bV,\bW), (\bX,\bY,\bZ)\right)=d_{\rm p}(\bU,\bX)+d_{\rm p}(\bV,\bY)+d_{\rm p}(\bW,\bZ).
$$
We shall focus, in particular, on a neighborhood around $(\bU,\bV,\bW)$ that are incoherent:
\begin{eqnarray*}
\calN(\delta,\mu)=\biggl\{(\bX,\bY,\bZ)\in \mathcal{J}(d_1,d_2,d_3,r_1,r_2,r_3): d_{\rm p}\left((\bU,\bV,\bW), (\bX,\bY,\bZ)\right)\le \delta,\\
{\rm and}\ \max\left\{\mu(\bX), \mu(\bY), \mu(\bZ)\right\}\le \mu\biggr\}
\end{eqnarray*}
For a third order tensor $\bA$, denote by
$$\Lambda_{\max}(\bA)=\max\left\{\sigma_{\max}(\calM_1(\bA)), \sigma_{\max}(\calM_2(\bA)), \sigma_{\max}(\calM_3(\bA))\right\},$$
and
$$\Lambda_{\min}(\bA)=\min\left\{\sigma_{\min}(\calM_1(\bA)), \sigma_{\min}(\calM_2(\bA)), \sigma_{\min}(\calM_3(\bA))\right\}.$$
\begin{theorem}\label{thm:localF}
Let $\bT\in \RR^{d_1\times d_2\times d_3}$ be a rank-$(r_1,r_2,r_3)$ tensor such that
$$\mu(\bT)\leq \mu_0, \qquad \Lambda_{\min}(\bT)\ge \underline{\Lambda},\qquad \Lambda_{\max}(\bT)\le \overline{\Lambda},\qquad {\rm and}\qquad \kappa(\bT)\le \kappa_0.$$
There exist absolute constants $C_1,C_2,C_3, C_4,C_5>0$ such that for any $\alpha>1$ and $(\bX,\bY,\bZ)\in\mathcal{N}\big(C_1(\alpha\kappa_0\log d)^{-1},4\mu_0)$,
\begin{eqnarray*}
C_2\left(\|\bG-\bC\|_{\rm F}^2+\underline{\Lambda}^2 d^2_{\rm p}\left((\bU,\bV,\bW), (\bX,\bY,\bZ)\right)\right)\leq {d_1d_2d_3\over n}F(\bX,\bY,\bZ)\\
\leq C_3\alpha\overline{\Lambda}^2d^2_{\rm p}\left((\bU,\bV,\bW), (\bX,\bY,\bZ)\right)\log d,
\end{eqnarray*}
and
$$
{d_1d_2d_3\over n}\left\|{\rm grad}\ F(\bX,\bY,\bZ)\right\|_{\rm F}\geq C_4\left(\underline{\Lambda}^2 d_{\rm p}\left((\bU,\bV,\bW), (\bX,\bY,\bZ)\right)\right),
$$
with probability at least $1-3d^{-\alpha}$, provided that
$$
n\ge C_5\left\{\alpha^3\mu_0^{3/2}\kappa_0^4r(r_1r_2r_3d_1d_2d_3)^{1/2}\log^{7/2} d+\alpha^6\mu_0^3\kappa_0^8
r_1r_2r_3r^2d\log^6d\right\}
$$
where $\bC\in\mathbb{R}^{r_1\times r_2\times r_3}$ is given by \eqref{eq:defbC}.
\end{theorem}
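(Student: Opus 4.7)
The strategy is to reduce each of the three assertions to a statement about the \emph{population} objective $\widetilde F(\bX,\bY,\bZ):=\tfrac12\min_{\bC}\|(\bX,\bY,\bZ)\cdot\bC-\bT\|_{\rm F}^2$, and then transfer via concentration of $\calP_\Omega$. Let $\tilde{\bC}:=\bT\times_1\bX^\top\times_2\bY^\top\times_3\bZ^\top$ be the population-optimal core and $\bA^\natural:=(\bX,\bY,\bZ)\cdot\tilde{\bC}-\bT$. Since $\bX,\bY,\bZ$ have orthonormal columns, Pythagoras gives
$$
\|(\bX,\bY,\bZ)\cdot\bC-\bT\|_{\rm F}^2=\|\bC-\tilde{\bC}\|_{\rm F}^2+\|\bA^\natural\|_{\rm F}^2.
$$
Writing $P_X:=\bX\bX^\top$, $P_Y:=\bY\bY^\top$, $P_Z:=\bZ\bZ^\top$, the mode-wise telescoping
$\bA^\natural=-\bT\times_1(I-P_X)-\bT\times_1P_X\times_2(I-P_Y)-\bT\times_1P_X\times_2P_Y\times_3(I-P_Z)$
expresses $\bA^\natural$ as a sum of three pairwise orthogonal tensors; combining $\|(I-P_X)\bU\|_{\rm F}=d_{\rm p}(\bU,\bX)$ with $\sigma_{\min}(\calM_j(\bG))\ge\underline\Lambda$ yields the two-sided bound $\underline\Lambda^2 d_{\rm p}^2\lesssim\|\bA^\natural\|_{\rm F}^2\lesssim\overline\Lambda^2 d_{\rm p}^2$, where $d_{\rm p}:=d_{\rm p}((\bU,\bV,\bW),(\bX,\bY,\bZ))$. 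After aligning bases so that $\bX^\top\bU,\bY^\top\bV,\bZ^\top\bW$ have positive singular values $\cos\theta_i$, one also gets $\|\bX^\top\bU-I\|_{\rm F}^2\le d_{\rm p}^4$ and hence the higher-order estimate $\|\tilde{\bC}-\bG\|_{\rm F}^2\lesssim\overline\Lambda^2 d_{\rm p}^4$, which inside $\mathcal N(C_1(\alpha\kappa_0\log d)^{-1},4\mu_0)$ is $\ll\underline\Lambda^2 d_{\rm p}^2$.

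For the \emph{upper bound} on $F$, substituting the feasible choice $\bC=\tilde{\bC}$ into the inner minimization gives $F(\bX,\bY,\bZ)\le\tfrac12\|\calP_\Omega\bA^\natural\|_{\rm F}^2$. A Bernstein inequality for the sampled sum $\sum_i A^\natural(\omega_i)^2$---with variance proxy $\|\bA^\natural\|_{\rm F}^2/(d_1d_2d_3)$ and entry bound $\|\bA^\natural\|_{\max}^2$, both controlled through the $4\mu_0$-incoherence of $(\bX,\bY,\bZ)$ and the incoherence of $\bT$---then yields $(d_1d_2d_3/n)\|\calP_\Omega\bA^\natural\|_{\rm F}^2\lesssim\alpha(\log d)\|\bA^\natural\|_{\rm F}^2\lesssim\alpha\overline\Lambda^2 d_{\rm p}^2\log d$, which is exactly the claimed upper bound.

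For the \emph{lower bound} on $F$, the main ingredient is a uniform restricted isometry property
$(d_1d_2d_3/n)\|\calP_\Omega\bA\|_{\rm F}^2\ge c\|\bA\|_{\rm F}^2$
over all $\bA$ of the form $(\bX,\bY,\bZ)\cdot\bC-\bT$ with $(\bX,\bY,\bZ)\in\mathcal N$ and $\bC$ in a suitable ball. I would establish it by (i) an $\epsilon$-net on the Grassmannian product (metric entropy $\lesssim rd\log(1/\epsilon)$) and on the core $\bC$, (ii) a Bernstein bound at each fixed $\bA$ that exploits the incoherence and rank-$(2r_1,2r_2,2r_3)$ structure of $\bA$ via the tensor spectral-norm tools of \cite{yuan2015tensor,yuan2016incoherent}, and (iii) a union bound; the two additive terms $(rd_1d_2d_3)^{1/2}\log^{7/2}d$ and $rd\log^6d$ in \eqref{eq:samplesize} arise respectively from the variance part of Bernstein and from the packing number. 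Combining the RIP with the Pythagorean identity gives $(d_1d_2d_3/n)F\gtrsim\|\bC-\tilde{\bC}\|_{\rm F}^2+\|\bA^\natural\|_{\rm F}^2$; the triangle inequality $\|\bG-\bC\|_{\rm F}^2\le 2\|\bC-\tilde{\bC}\|_{\rm F}^2+2\|\tilde{\bC}-\bG\|_{\rm F}^2$ together with the higher-order estimate for $\|\tilde{\bC}-\bG\|_{\rm F}$ then converts this into the claimed $C_2(\|\bG-\bC\|_{\rm F}^2+\underline\Lambda^2 d_{\rm p}^2)$. The \emph{gradient bound} follows by differentiating $F$ at $(\bX,\bY,\bZ)$ (the envelope theorem lets one freeze $\bC$): the $\bX$-component of the Riemannian gradient has leading term proportional to $\sigma_{\min}^2(\calM_1(\bT))(I-\bU\bU^\top)\bX$ (and the $\bV,\bW$-analogues), whose Frobenius norm equals $\underline\Lambda^2$ times $d_{\rm p}(\bU,\bX)$ (respectively $d_{\rm p}(\bV,\bY)$, $d_{\rm p}(\bW,\bZ)$), up to lower-order perturbations controlled by the same RIP.

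The \emph{main obstacle} will be the uniform RIP. Controlling the spectral norm and coherence of the rank-$(2r_1,2r_2,2r_3)$ residuals $(\bX,\bY,\bZ)\cdot\bC-\bT$ uniformly over the Grassmannian neighborhood, while simultaneously sharpening the Bernstein bound so that the variance term scales as $(d_1d_2d_3)^{1/2}$ rather than $d_1d_2d_3$, is what drives the precise sample size \eqref{eq:samplesize} and accounts for the coexistence there of the two distinct powers of $d$ and of the logarithmic factors $\log^{7/2}d$ and $\log^6d$.
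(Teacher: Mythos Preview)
Your high-level strategy --- reduce to a population quantity and then transfer via concentration --- is sound, and your upper-bound argument is essentially what the paper does (the paper plugs in $\bC=\bG$ rather than your $\tilde\bC$, but either works). The real problem is your lower bound, and it is a genuine gap, not a detail.

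You propose to establish a \emph{uniform multiplicative} RIP
\[
\frac{d_1d_2d_3}{n}\|\calP_\Omega\bA\|_{\rm F}^2\ge c\|\bA\|_{\rm F}^2
\]
over all $\bA=(\bX,\bY,\bZ)\cdot\bC-\bT$ in the neighborhood, via pointwise Bernstein plus an $\epsilon$-net. This fails for a structural reason: the ratio $\|\bA\|_{\max}/\|\bA\|_{\rm F}$ is \emph{not} uniformly bounded on that class. When $(\bX,\bY,\bZ)\to(\bU,\bV,\bW)$ and $\bC\to\bG$, one has $\|\bA\|_{\rm F}\asymp\Lambda_{\min}\,d_{\rm p}\to 0$, while $\|\bA\|_{\max}$ stays of order $\Lambda_{\max}\mu_0^{3/2}(r_1r_2r_3/(d_1d_2d_3))^{1/2}$ (each summand is an incoherent tensor whose max-norm does not shrink with $d_{\rm p}$). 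The Bernstein bound at a fixed $\bA$ therefore requires $n\gtrsim d_1d_2d_3\,\|\bA\|_{\max}^2/\|\bA\|_{\rm F}^2\cdot t\asymp \kappa_0^2\mu_0^3 r_1r_2r_3\, t/d_{\rm p}^2$, which blows up. No amount of ``sharpening the variance term'' fixes this; the obstruction is the range term. Relatedly, your claim that the $(r d_1d_2d_3)^{1/2}$ factor ``arises from the variance part of Bernstein'' and the $rd$ factor ``from the packing number'' does not match how these scales actually emerge.

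The paper circumvents this with a decomposition you do not use. It splits $\hat\bT-\bT$ through the projector $\bQ_\bT$ onto the tangent space \emph{at the fixed point $\bT$}:
\[
F(\bX,\bY,\bZ)\ \ge\ \tfrac14\|\calP_\Omega\bQ_\bT(\hat\bT-\bT)\|_{\rm F}^2-\tfrac12\|\calP_\Omega\bQ_\bT^\perp\hat\bT\|_{\rm F}^2.
\]
For the first piece, $\bQ_\bT$ is a \emph{fixed} subspace independent of $(\bX,\bY,\bZ)$, so a single operator bound $\|\bQ_\bT(\mathcal I-\tfrac{d_1d_2d_3}{n}\calP_\Omega)\bQ_\bT\|\le\tfrac12$ (one event, no covering) gives the multiplicative control. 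For the second piece, the key observation is that $\bQ_\bT^\perp\hat\bT$ is \emph{second order}, $\|\bQ_\bT^\perp\hat\bT\|_{\rm F}\lesssim\Lambda_{\max}d_{\rm p}^2$, so an \emph{additive} empirical-process bound of the form
\[
\langle\calP_\Omega\bA,\bA\rangle\le\tfrac{n}{d_1d_2d_3}\|\bA\|_{\rm F}^2+C\alpha\,\|\bA\|_{\max}\|\bA\|_\star\Big(\sqrt{\tfrac{nd}{d_1d_2d_3}}\log d+\log^{3/2}d\Big)
\]
(proved by symmetrization, contraction, and the tensor spectral-norm estimate $\mathbb E\|\sum_i\varepsilon_i\bE_i\|$ from \cite{yuan2015tensor}, not by covering) suffices: the additive correction is of order $d_{\rm p}^2$ and is absorbed by the $\Lambda_{\min}^2 d_{\rm p}^2$ main term. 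It is exactly this tensor-spectral-norm step that produces the $(d_1d_2d_3)^{1/2}$ scaling. The same two events drive the gradient bound, which the paper obtains by pairing ${\rm grad}\,F$ with the specific geodesic directions $(\bD_\bX,\bD_\bY,\bD_\bZ)$ and again splitting the resulting inner product through $\bQ_\bT$; your envelope-theorem sketch misses this structure.
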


Theorem \ref{thm:localF} shows that the objective function $F$ behaves like a parabola in $\calN(\delta,4\mu_0)$ for sufficiently small $\delta$, and furthermore, $(\bU,\bV,\bW)$ is the unique stationary point in $\calN(\delta,4\mu_0)$. This implies that a gradient descent type of algorithm may be employed to optimize $F$ within $\calN(\delta,4\mu_0)$. In particular, to fix ideas, we shall focus here on a simple gradient descent type of algorithms similar to the popular choice of matrix completion algorithm proposed by \cite{keshavan2009matrix}. As suggested by \cite{keshavan2009matrix}, to guarantee that the coherence condition is satisfied, a penalty function is imposed so that the objective function becomes:
$$
\tilde{F}(\bX,\bY,\bZ):=F(\bX,\bY,\bZ)+G(\bX,\bY,\bZ)
$$
where
$$
G(\bX,\bY,\bZ):=\rho \sum_{j_1=1}^{d_1}G_0\Big(\frac{d_1\|\bx_{j_1}\|^2}{3\mu_0r_1}\Big)+\rho \sum_{j_2=1}^{d_2}G_0\Big(\frac{d_2\|\by_{j_2}\|^2}{3\mu_0r_2}\Big)+\rho \sum_{j_3=1}^{d_3}G_0\Big(\frac{d_3\|\bz_{j_3}\|^2}{3\mu_0r_3}\Big)
$$
and
$$
G_0(z)=
\begin{cases}
0,&{\rm if }\ z\leq 1\\
e^{(z-1)^2}-1,&{\rm if }\ z\geq 1.
\end{cases}
$$
It turns out that, with a sufficiently large $\rho>0$, we can ensure low coherence at all iterations in a gradient descent algorithm. More specifically, let $\bB\in\mathbb{R}^{d\times r}$ be an element of the tangent space at $\bA\in\mathcal{G}(d,r)$ and $\bB=\bL{\bf \Theta}\bR^\top$ be its singular value decomposition. The geodesic starting from $\bA$ in the direction $\bB$ is defined as $\calH(\bA,\bB,t)=\bA\bR\cos({\bf \Theta}t)\bR^\top+\bL\sin({\bf \Theta}t)\bR^\top$ for $t\geq 0$. Interested readers are referred to \cite{edelman1998geometry} for further details on the differential geometry of Grassmannians. The gradient descent algorithm on the direct product of Grassmannians is given below:

\begin{algorithm}
 \caption{Gradient descent algorithm on Grassmannians (GoG)}\label{alg:GoGalg}
  \begin{algorithmic}[2]
  \State Set up values of $\max\_{\rm Iteration}$, tolerance $\varepsilon_{{\rm tol}}>0$, paramter $\gamma=\frac{\delta}{4}$ and step counter $k=0$.  
   \State Initiate $(\bX^{(0)}, \bY^{(0)}, \bZ^{(0)})=(\hat\bU,\hat\bV,\hat\bW)\in\mathcal{J}(d_1,d_2,d_3,r_1,r_2,r_3)$
   \While{$k<\max\_{\rm Iteration}$}
   \State Compute the negative gradient $(\bD_\bX^{(k)},\bD_\bY^{(k)},\bD_\bZ^{(k)})=-{\rm grad}\  \tilde{F}(\bX^{(k)}, \bY^{(k)}, \bZ^{(k)})$
   \State For $t\geq 0$, denote the geodesics
   \begin{align*}
    &\bX^{(k)}(t)=\calH(\bX^{(k)}, \bD_\bX^{(k)}, t)\\
    &\bY^{(k)}(t)=\calH(\bY^{(k)}, \bD_\bY^{(k)},t)\\
    &\bZ^{(k)}(t)=\calH(\bZ^{(k)}, \bD_\bZ^{(k)},t)
   \end{align*}
   \State Minimize $t\mapsto \tilde{F}(\bX^{(k)}(t), \bY^{(k)}(t), \bZ^{(k)}(t))$ for $t\geq 0$, subject to 
   $$
   d_{\rm p}\big(({\bX}^{(k)}(t),{\bY}^{(k)}(t),{\bZ}^{(k)}(t)), ({\bX}^{(0)},{\bY}^{(0)},{\bZ}^{(0)} ))\leq \gamma.
   $$
   \State Set $\bX^{(k+1)}=\bX^{(k)}(t_k), \bY^{(k+1)}=\bY^{(k)}(t_k)$ and $\bZ^{(k+1)}=\bZ^{(k)}(t_k)$ where $t_k$ is the minimal solution.
   \State Set $k=k+1$.
   \EndWhile
  \State Return $({\bX}^{(k+1)}, \bY^{(k+1)},\bZ^{(k+1)})$.
  \end{algorithmic}
\end{algorithm}

Our next result shows that this algorithm indeed converges to $(\bU,\bV, \bW)$ when an appropriate initial value is provided.

\begin{theorem}\label{th:GoGconv}
Let $\bT\in \RR^{d_1\times d_2\times d_3}$ be a rank-$(r_1,r_2,r_3)$ tensor such that
$$\mu(\bT)\leq \mu_0, \qquad \Lambda_{\max}(\bT)\le \overline{\Lambda},\qquad {\rm and}\qquad \kappa(\bT)\le \kappa_0.$$
Then there exist absolute constants $C_1,C_2, C_3>0$ such that for any $\alpha>1$, if
$$\rho\ge C_1\alpha n(d_1d_2d_3)^{-1}\overline{\Lambda}^2\log d,$$ 
$$(\bX^{(0)},\bY^{(0)},\bZ^{(0)})\in\mathcal{N}\big(C_2(\alpha\kappa_0^2\log d)^{-1},3\mu_0\big),$$
and
$$
n\ge C_3\left\{\alpha^3\mu_0^{3/2}\kappa_0^4r(r_1r_2r_3d_1d_2d_3)^{1/2}\log^{7/2} d+\alpha^6\mu_0^3\kappa_0^8
r_1r_2r_3r^2d\log^6d\right\},
$$
then Algorithm~\ref{alg:GoGalg} initiated with $(\bX^{(0)},\bY^{(0)},\bZ^{(0)})$ converges to $(\bU,\bV,\bW)$ with probability at least $1-d^{-\alpha}$.
\end{theorem}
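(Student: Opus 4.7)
The plan is to combine the landscape characterization of Theorem~\ref{thm:localF} with a Riemannian descent analysis. I would proceed by induction on $k$, arguing that at every step (i) the iterate $(\bX^{(k)},\bY^{(k)},\bZ^{(k)})$ lies in the neighborhood $\calN(\delta,4\mu_0)$ with $\delta\asymp(\alpha\kappa_0\log d)^{-1}$, the validity radius of Theorem~\ref{thm:localF}; (ii) $\tilde F^{(k+1)}\le \tilde F^{(k)}$; and (iii) $\tilde F^{(k)}$ decays geometrically. Since $\tilde F(\bU,\bV,\bW)=0$ and the lower bound of Theorem~\ref{thm:localF} gives $\tilde F^{(k)}\gtrsim (n/d_1d_2d_3)\underline{\Lambda}^2\,(d_{\rm p}^{(k)})^{2}$, the geometric decay of $\tilde F$ forces $(\bX^{(k)},\bY^{(k)},\bZ^{(k)})\to(\bU,\bV,\bW)$. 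The base step holds by the initial-value hypothesis, and the line-search constraint $\gamma=\delta/4$ confines every subsequent iterate to $\calN(2\delta,\cdot)$ by a triangle inequality.

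To preserve the coherence condition $\mu\le 4\mu_0$ along the trajectory, I would exploit the penalty $G$. Since $G_0\equiv 0$ on $[0,1]$, the penalty vanishes whenever all three coherences are at most $3\mu_0$; in particular $G(\bX^{(0)},\bY^{(0)},\bZ^{(0)})=0$, so $\tilde F^{(0)}=F^{(0)}$, which the parabolic upper bound of Theorem~\ref{thm:localF} controls by $\alpha\overline{\Lambda}^2(n/d_1d_2d_3)(\alpha\kappa_0^2\log d)^{-2}\log d$. If some later iterate had a coherence exceeding $4\mu_0$, then a single offending row would make $G\ge \rho(e^{1/9}-1)$; the stated lower bound on $\rho$ forces this to exceed $\tilde F^{(0)}$, contradicting monotone descent along the exact line search. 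Hence coherence stays below $4\mu_0$ throughout, and Theorem~\ref{thm:localF} applies at every step.

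Within $\calN(\delta,4\mu_0)$, combining the gradient lower bound and the parabolic upper bound of Theorem~\ref{thm:localF} yields a Polyak--\L{}ojasiewicz-type inequality
$$
\|{\rm grad}\,F\|_{\rm F}^{2}\;\ge\; c\,\frac{n}{d_1d_2d_3}\cdot\frac{1}{\alpha\kappa_0^4\log d}\,F.
$$
A Lipschitz bound $L\asymp(n/d_1d_2d_3)\overline{\Lambda}^2\log d$ on $\mathrm{grad}\,F$, read off from the parabolic envelope, together with a standard descent-lemma argument along the geodesic $\calH(\bX^{(k)},\bD_{\bX}^{(k)},t)$, then produces the contraction
$$
\tilde F^{(k+1)}\;\le\;(1-\eta)\,\tilde F^{(k)},\qquad \eta\asymp \frac{1}{\alpha\kappa_0^4\log^2 d},
$$
closing the induction and forcing $\tilde F^{(k)}\to 0$ at a geometric rate, hence $d_{\rm p}\big((\bU,\bV,\bW),(\bX^{(k)},\bY^{(k)},\bZ^{(k)})\big)\to 0$.

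The hard part will be controlling the penalty $G$ on the ``thin shell'' $\{3\mu_0<\mu\le 4\mu_0\}$, where $\mathrm{grad}\,G\ne 0$: I will have to show that $\mathrm{grad}\,G$ acts as a restoring force aligned with (rather than against) the descent direction of $F$, so that the Polyak--\L{}ojasiewicz inequality transfers from $F$ to $\tilde F$ with only a small loss in constants. A secondary obstacle is to verify that the line-search constraint $d_{\rm p}(\cdot,(\bX^{(0)},\bY^{(0)},\bZ^{(0)}))\le\gamma$ does not truncate the step dictated by the Polyak--\L{}ojasiewicz analysis; this should follow because the initial point sits well inside the validity radius (roughly $d_{\rm p}\le\delta/(4\kappa_0)$ at $k=0$), and the geometric contraction of $d_{\rm p}$ makes subsequent steps progressively shorter so the $\gamma$-ball is never saturated.
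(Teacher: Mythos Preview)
Your overall architecture matches the paper's: lean on Theorem~\ref{thm:localF}, use the penalty $G$ to trap the iterates in $\calN(\delta,4\mu_0)$, and then argue convergence inside that neighborhood. The paper's proof, however, is much terser and takes a slightly different closing step. Rather than building a Polyak--\L{}ojasiewicz inequality and a contraction factor, the paper simply shows that both statements of Theorem~\ref{thm:localF} continue to hold for $\tilde F$ (the key point being $\langle {\rm grad}\,G,(\bD_\bX,\bD_\bY,\bD_\bZ)\rangle\ge 0$, which is exactly your ``restoring force'' property, and is deferred to \cite{keshavan2009matrix}); concludes that $(\bU,\bV,\bW)$ is the \emph{unique stationary point} of $\tilde F$ in $\calN(\delta,4\mu_0)$; borrows from \cite{keshavan2009matrix} the fact that the iterates never leave a smaller neighborhood; and finally invokes a textbook result that gradient descent with exact line search converges to the unique stationary point. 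No rate is claimed.

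Your route is more quantitative and would deliver a geometric rate, which the paper does not. The one genuine soft spot is the Lipschitz bound you say can be ``read off from the parabolic envelope'': Theorem~\ref{thm:localF} gives two-sided bounds on $F$ and a \emph{lower} bound on $\|{\rm grad}\,F\|$, but no upper bound on the gradient and no smoothness estimate, so a descent-lemma step needs separate justification (either a direct Hessian/gradient-Lipschitz computation from the explicit form of $F$, or, since you have exact line search, a comparison argument that evaluates $\tilde F$ at a specific trial step and bypasses Lipschitzness). Apart from that, your plan is sound; you have correctly anticipated both ``hard parts,'' and the paper resolves the first one exactly the way you suggest.
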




\section{Numerical Experiments}
\label{sec:sim}

To complement our theoretical developments, we also conducted several sets of numerical experiments to investigate the performance of the proposed approach. In particular, we focus on recovering a cubic tensor $\bT\in\mathbb{R}^{d\times d\times d}$ with multilinear ranks $r_1=r_2=r_3=r$ from $n$ randomly sampled entries. To fix ideas, we focus on completing orthogonal decomposable tensors in this section, i.e., the core tensor $\bG\in\mathbb{R}^{r\times r\times r}$ is diagonal. Note that even though our theoretical analysis requires $n\gtrsim
r^{7/2}d^{3/2}$, our simulation results seem to suggest that our approach can be successful for as few as $O(\sqrt{r}d^{3/2})$ observed entries. To this end, we shall consider sample size $n=\alpha \sqrt{r}d^{3/2}$ for some $\alpha>0$.

More specifically, we consider $\bT=d\sum_{k=1}^r \bu_k\otimes \bv_k\otimes \bw_k\in\mathbb{R}^{d\times d\times d}$ with $d=50, 100$ and $r=2,3,4,5$. The orthonormal vectors $\{\bu_k, k=1,\ldots,r\},\{\bv_k, k=1,\ldots,r\},\{\bw_k, k=1,\ldots,r\}$ are obtained from the eigenspace of randomly generated standard Gaussian matrices which guarantee the incoherence conditions based on the delocalization property of eigenvectors of Gaussian random matrices. For each choice of $r$ and $\alpha=\frac{n}{\sqrt{r}d^{3/2}}$, the gradient descent algorithm from Section \ref{sec:Flocal} with initialization described in Section \ref{sec:init} is applied in $50$ simulation runs. We claim that the underlying tensor is successfully recovered if the returned tensor $\hat\bT$ satisfies that $\|\hat{\bT}-\bT\|_{\rm F}/\|\bT\|_{\rm F}\leq 10^{-7}$. The reconstruction rates are given in Figure~\ref{fig:fig1} and ~\ref{fig:fig2}. It suggests that approximately when $n\geq7\sqrt{r}d^{3/2}$, the algorithm reconstructed the true tensor with near certainty.

\begin{figure}
\centering
\includegraphics[width=15cm]{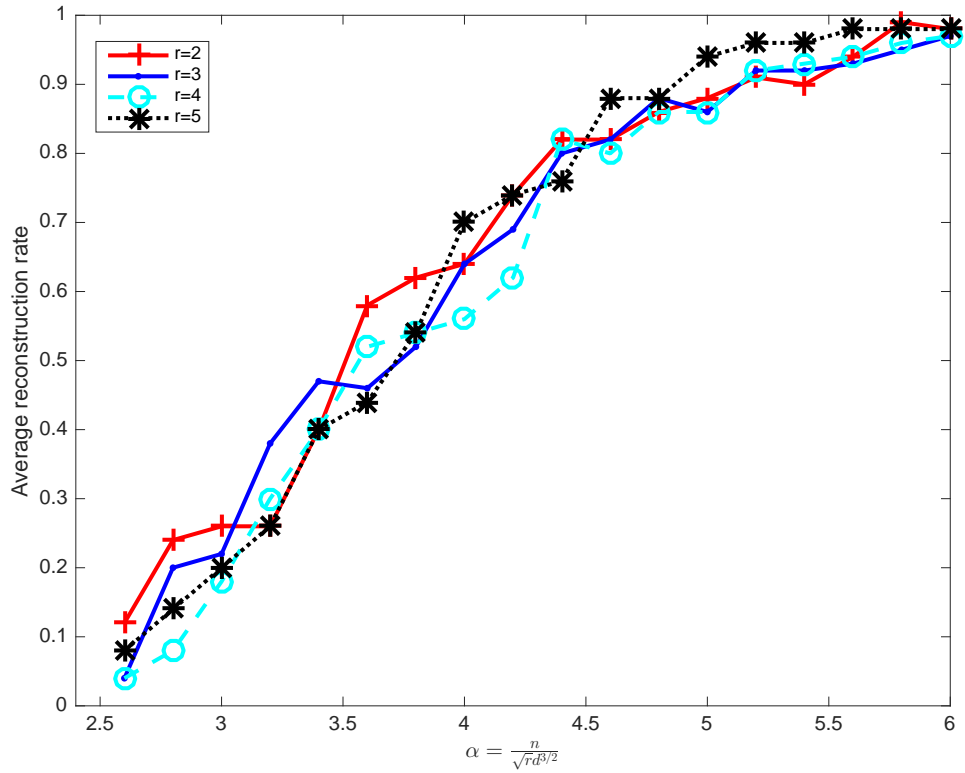}
\caption{Average reconstruction rate of the proposed approach when $d=50$ and $r=2,3,4,5$. For each $r$ and $\alpha$, the algorithm is repeated for $50$ times.}
\label{fig:fig1}
\end{figure}
\begin{figure}
\centering
\includegraphics[width=15cm]{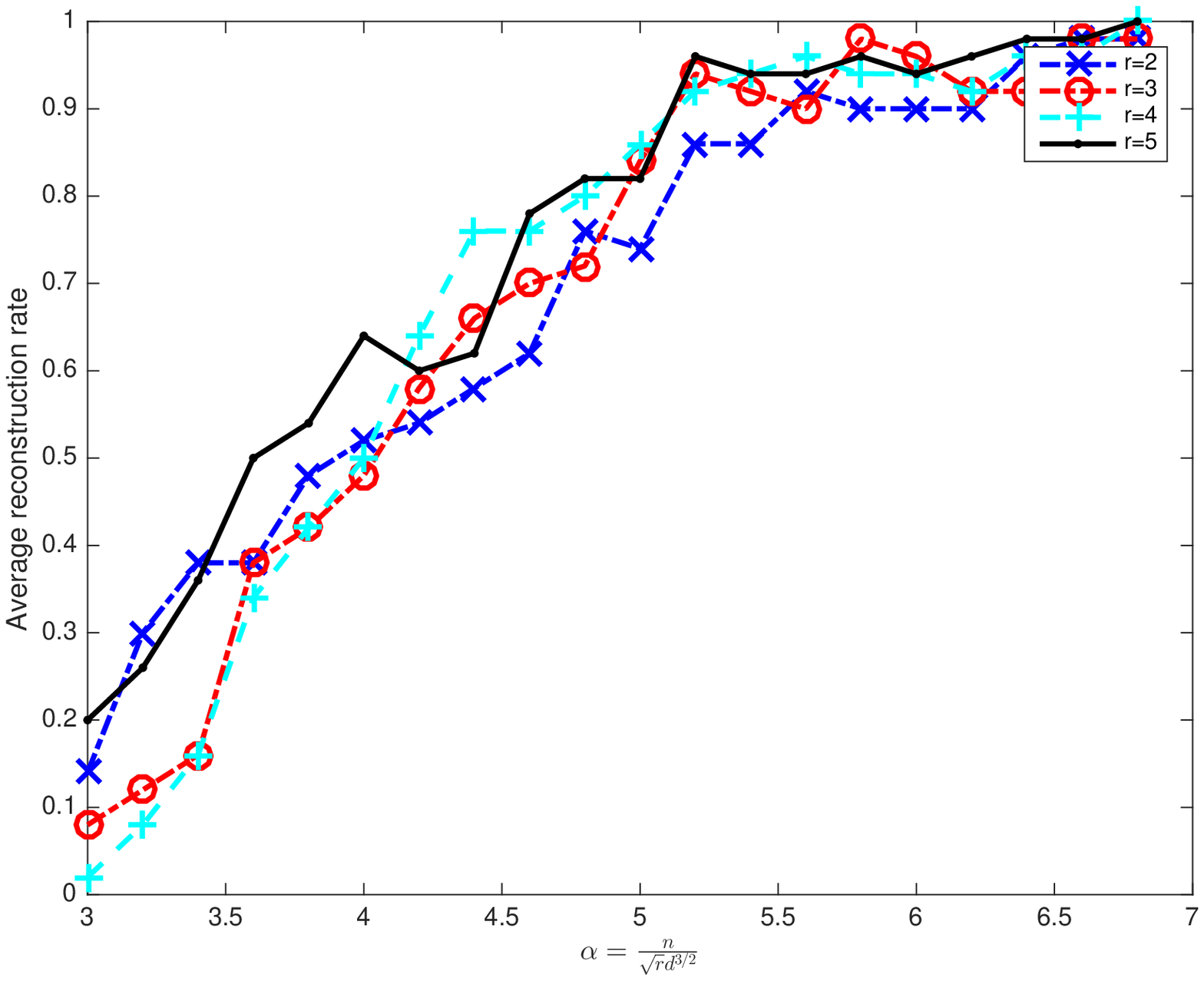}
\caption{Average reconstruction rate of the proposed approach when $d=100$ and $r=2,3,4,5$. For each $r$ and $\alpha$, the algorithm is repeated for $50$ times.}
\label{fig:fig2}
\end{figure}

As mentioned before, in addition to the gradient descent algorithm described in Section~\ref{sec:Flocal}, several other algorithms can also be applied to optimize $F(\bX,\bY,\bZ)$ locally. A notable example is the geometrical conjugate gradient descent algorithm on Riemannian manifolds proposed by \cite{kressner2014low}. Although we have focused on the analysis of the gradient descent algorithm, we believe similar results could also be established for these other algorithms as well. In essence, the success of these methods is determined by the quality of the initialization, which the method from Section \ref{sec:init} could be readily applied. We leave the more rigorous theoretical analysis for future work, we conducted a set of numerical experiments to illustrate the similarity between these optimization algorithms while highlighting the crucial role of initialization.

We considered a similar setup as before with $d=50$,$r=5$ and $d=100, r=3$. We shall refer to our method as GoG and the geometrical conjugate gradient descent algorithm as GeoCG, for brevity. Note that the GeoCG algorithm was proposed without considering the theoretical requirement on the sample size and the algorithm is initiated with a random guess. We first tested both algorithms with a reliable initialization as proposed in Section \ref{sec:init}. That is, we started with $\hat\bU, \hat\bV, \hat\bW$ obtained from the spectral algorithm and let $\hat{\bC}\in\mathbb{R}^{r\times r\times r}$ be the minimizer of (\ref{eq:opt}). Then, the GeoCG(Spectral) algorithm is initialized from the starting point $\hat\bA^{(0)}=(\hat\bU,\hat\bV,\hat\bW)\cdot \hat\bC$. For each $\alpha=\frac{n}{\sqrt{r}d^{3/2}}$, the GeoCG algorithm is repeated for $50$ times.
The reconstruction rates are as shown in the Cyan curves in Figure~\ref{fig:fig3} and \ref{fig:fig4}. It is clear that both algorithms perform well and are comparable. 

\begin{figure}
\centering
\includegraphics[width=15cm]{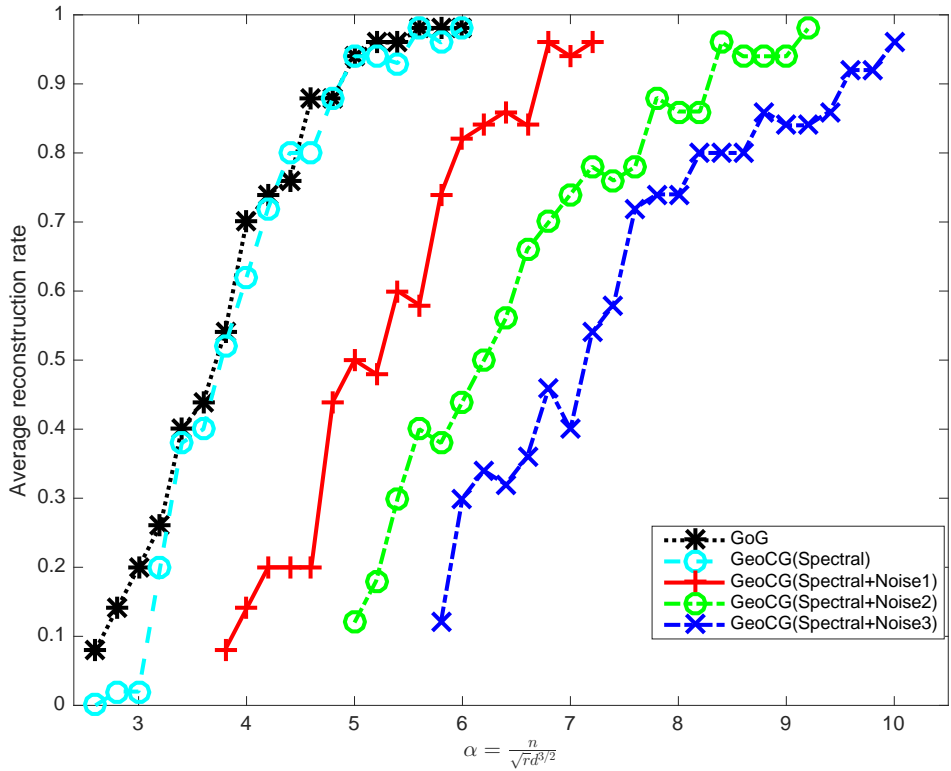}
\caption{Comparison between GoG and GeoCG algorithm when $d=50$ and $r=5$. The successful rates of GeoCG algorithm depend on the initialization. Here GeoCG(Spectral) means that the GeoCG algorithm is initialized with the spectral methods as GoG algorithm. The black and Cyan curves show that GoG and GeoCG algorithm perform similarly when both are initialized with spectral methods. Here GeoCG(Spectral+Noise$X$) means that GeoCG algorithm is initialized with spectral methods plus random perturbation. If $X$ is larger, the perturbation is larger and the initialization is further away from the truth, in which case the reconstruction rate decreases.
}
\label{fig:fig3}
\end{figure}
\begin{figure}
\centering
\includegraphics[width=15cm]{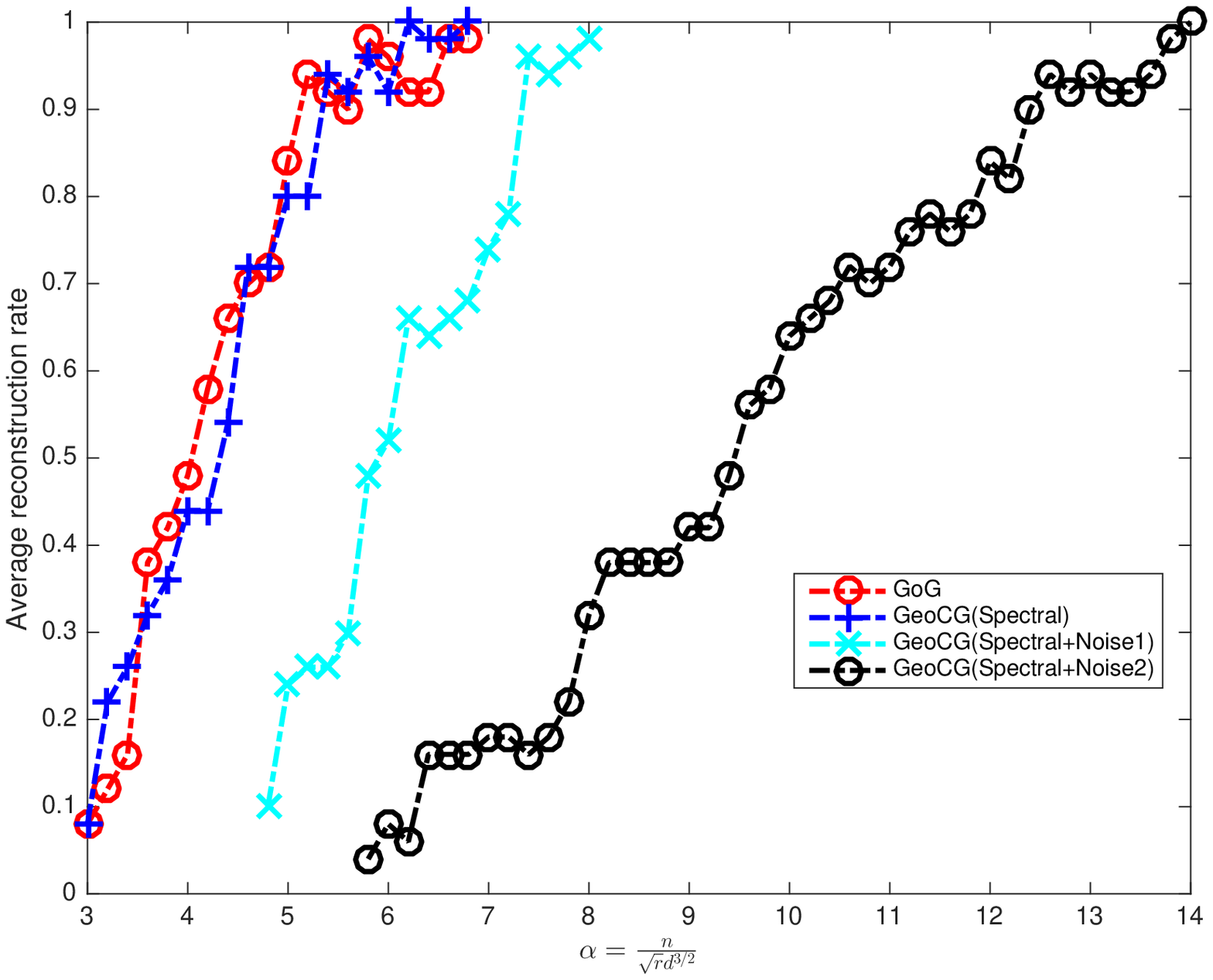}
\caption{Comparison between GoG and GeoCG algorithm when $d=100$ and $r=3$. The successful rates of GeoCG algorithm depend on the initialization.}
\label{fig:fig4}
\end{figure}

To illustrate that successful recovery hinges upon the appropriate initialization, we now consider applying GeoCG algorithm with a randomly perturbed spectral initialization. More specifically, the GeoCG algorithm is initialized with $\hat\bA^{(0)}+\sigma \bZ$ where $\bZ\in\mathbb{R}^{d\times d\times d}$ is a random tensor with i.i.d. standard normal entries and $\sigma>0$ represents the noise level. Figure~\ref{fig:fig3} and \ref{fig:fig4} show that the reconstruction rate decreases when $\sigma$ gets larger.

These observations confirm the insights from our theoretical development: that the objective function $F$ is well-behaved locally and therefore with appropriate initialization can lead to successful recovery of low rank tensors.

\section{Discussions}
\label{sec:disc}
In this paper, we proved that with $n\geq C\mu_0^3r_1r_2r_3(rd_1d_2d_3)^{1/2}\log^{7/2}(d)$ uniformly sampled entries, a tensor $\bT$ of multilinear ranks ($r_1,r_2,r_3$) can be recovered with high probability with a polynomial time algorithm. In doing so, we argue that the underlying optimization problem is well behaved in a neighborhood around the truth and therefore, the sample size requirement is largely driven by our ability to initialize the algorithm appropriately. To this end, a new spectral method based on estimating the second moment of tensor unfoldings is proposed. In the low rank case, e.g., $r=O(1)$, this sample size requirement is essentially of the same order as $d^{3/2}$, up to a polynomial of $\log d$ term. This matches the sample size requirement for nuclear norm minimization which is NP hard to compute in general. An argument put forth by \cite{barak2016noisy} suggests that such a dependence on the dimension may be optimal for polynomial time algorithms unless a more efficient algorithm exists for the 3-SAT problem.

Even though our framework is established for third order tensors, it can be naturally extended to higher order tensors. Indeed, to complete a $k$th order tensor $\bT\in\mathbb{R}^{d\times d\times\ldots\times d}$ with multilinear ranks $(r,r,\ldots, r)$, we can apply similar type of algorithms for optimizing over product of Grassmanianns. In order to ensure exact recovery, we can start with similar initialization where we unfold the tensor to $d\times d^{k-1}$ matrices. Following an identical argument, it can be derived in the same fashion that the sample size requirement for exact recovery now becomes
$$
n\geq Cd^{k/2}{\rm polylog}(r,\log d)
$$
for some constant $C>0$. Unlike the third order case, the dependence on the dimensionality ($d^{k/2}$) is worse than the nuclear norm minimization ($d^{3/2}$) for $k>3$. See \cite{yuan2016incoherent}. In general, it remains unclear whether the requirement of $d^{k/2}$ is the best attainable for polynomial time algorithms for $k>3$.

\section{Proofs}
\label{sec:proof}

Throughout the proofs, we shall use $C$ and similarly $C_1, C_2$ and etc. to denote generic numerical positive constants that may take different values at each appearance.

\begin{proof}[Proof of Theorem~\ref{thm:main}]
In view of Theorem~\ref{th:GoGconv}, the proof of Theorem~\ref{thm:main} is immediate if we are able to find an initial point $(\bX^{(0)},\bY^{(0)},\bZ^{(0)})\in\mathcal{N}\big(C(\alpha\kappa_0^2\log d)^{-1},3\mu_0)$. Clearly, under the conditions on $n$ given in Theorem~\ref{thm:main}, the spectral initialization $(\hat\bU,\hat\bV,\hat\bW)$ satisifies that
$$
d_{\rm p}\Big((\hat\bU,\hat\bV,\hat\bW),(\bU,\bV,\bW)\Big)\leq C(\alpha\kappa_0^2\log d)^{-1}
$$
with probability at least $1-3d^{-\alpha}$. It remains to show that we can derive an incoherent initial value from $(\hat\bU,\hat\bV,\hat\bW)$ in polynomial time, which is an immediate consequence of the following lemma due to \cite{keshavan2009matrix}.
\begin{lemma}\label{lem:incoherent}
Let $\hat\bU,\bU\in\mathbb{R}^{d\times r}$ with $\hat\bU^\top\hat\bU=\bU^\top\bU=\bI_r$ and $\mu(\bU)\leq \mu_0$. If $d_{\rm p}(\hat\bU,\bU)\leq \delta\leq \frac{1}{16}$, then there exists an algorithm on $\hat\bU$ whose complexity is $O(dr^2)$ which produces a candidate $\tilde\bU\in\mathcal{G}(d,r)$ such that $\mu(\tilde\bU)\leq 3\mu_0$ and $d_{\rm p}(\tilde\bU,\bU)\leq 4\delta.$
\end{lemma}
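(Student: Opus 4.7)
The strategy is to shrink those rows of $\hat\bU$ that violate the incoherence bound and then re-orthonormalize. Since $\mu(\bU)\le\mu_0$ and orthogonal rotations preserve row norms, every row of $\bU\bO$ sits in the Euclidean ball of radius $\sqrt{\mu_0 r/d}$ for any $\bO\in\calO(r)$. Because $d_{\rm p}(\hat\bU,\bU)\le\delta\le 1/16$, the Procrustes rotation can be chosen so that $\|\hat\bU-\bU\bO\|_{\rm F}\le\sqrt{2}\,\delta$, which in turn implies that only a tiny fraction of rows of $\hat\bU$ can exceed length $\sqrt{\mu_0 r/d}$ by much. That is the heart of the argument.

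Concretely, I would fix a threshold $\alpha=\sqrt{c\mu_0 r/d}$ for a constant $c$ slightly below $3$, and define $\bU^{\ast}$ by rescaling each row of $\hat\bU$ whose norm exceeds $\alpha$ down to norm exactly $\alpha$. Since Euclidean projection onto a centered ball is a contraction and the rows of $\bU\bO$ already lie inside the ball, one has $\|\bU^{\ast}-\bU\bO\|_{\rm F}\le\|\hat\bU-\bU\bO\|_{\rm F}$, so the triangle inequality yields $\|\hat\bU-\bU^{\ast}\|_{\rm F}=O(\delta)$. I would then take the QR decomposition $\bU^{\ast}=\tilde\bU\bR$ with $\tilde\bU\in\calV(d,r)$ and return $\tilde\bU$. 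The QR step costs $O(dr^2)$ while the row-norm computation and thresholding cost $O(dr)$, matching the claimed complexity.

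Two facts about $\tilde\bU$ remain to be checked. For incoherence, expanding
$$\bR^\top\bR=\bU^{\ast\top}\bU^{\ast}=\bI_r+\hat\bU^\top(\bU^{\ast}-\hat\bU)+(\bU^{\ast}-\hat\bU)^\top\hat\bU+(\bU^{\ast}-\hat\bU)^\top(\bU^{\ast}-\hat\bU)$$
and invoking $\|\hat\bU-\bU^{\ast}\|_{\rm F}=O(\delta)$ give $\|\bR^\top\bR-\bI_r\|=O(\delta)$, so $\|\bR^{-1}\|^2=1+O(\delta)$ and the rows of $\tilde\bU=\bU^{\ast}\bR^{-1}$ have norms at most $(1+O(\delta))\alpha$; a suitably chosen $c<3$ then forces this below $\sqrt{3\mu_0 r/d}$ and delivers $\mu(\tilde\bU)\le 3\mu_0$. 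For the distance, Wedin's $\sin\Theta$ theorem applied to the perturbation $\hat\bU\mapsto\bU^{\ast}$ (with $\sigma_{\min}(\hat\bU)=1$) yields $d_{\rm p}(\tilde\bU,\hat\bU)\le\|\hat\bU-\bU^{\ast}\|_{\rm F}=O(\delta)$, and a final triangle inequality produces $d_{\rm p}(\tilde\bU,\bU)\le 4\delta$. The main obstacle I foresee is not conceptual but rather careful constant calibration: the threshold $\alpha$ must be small enough that inflation by $\|\bR^{-1}\|$ does not push row norms past $\sqrt{3\mu_0 r/d}$, yet the truncation perturbation must leave $\bR^\top\bR$ well-conditioned and keep the final projection distance under $4\delta$. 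The hypothesis $\delta\le 1/16$ is precisely what makes these competing inequalities simultaneously satisfiable.
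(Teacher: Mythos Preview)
The paper does not supply its own proof of this lemma; it simply attributes the result to \cite{keshavan2009matrix} and invokes it as a black box inside the proof of Theorem~\ref{thm:main}. Your sketch --- row-wise truncation of $\hat\bU$ to a ball of radius $\sqrt{c\mu_0 r/d}$, followed by QR re-orthonormalization, with the contraction property of Euclidean projection onto a ball doing the work of controlling $\|\bU^\ast-\bU\bO\|_{\rm F}$ --- is exactly the construction used in the cited reference, so your proposal is correct and matches what the paper is implicitly appealing to.
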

By applying the algorithm claimed in Lemma~\ref{lem:incoherent} onto $\hat\bU,\hat\bV,\hat\bW$, we obtain $(\bX^{(0)},\bY^{(0)},\bZ^{(0)})=(\tilde\bU,\tilde\bV,\tilde\bW)\in\mathcal{N}\big(C(\alpha\kappa_0^2\log d)^{-1},3\mu_0)$, which concludes the proof of Theorem~\ref{thm:main}.
\end{proof}

\medskip
\begin{proof}[Proof of Theorem \ref{th:spectral}]
Note that $\hat{\bN}$ is actually U-statistics. Using a standard decoupling technique for U-statistics \citep[see, e.g.,][]{de1995decoupling, de1999decoupling}, we get
$$
\PP(\|\hat{\bN}-\bN\|> t)\le 15\PP(15\|\tilde{\bN}-\bN\|> t)
$$
for any $t>0$, where
$$
\tilde{\bN}:={1\over 2n(n-1)}\sum_{i\neq j} (\bX_i\bY_j^\top+\bY_j\bX_i^\top),
$$
and $\{\bY_i: 1\le i \le n\}$ is an independent copy of $\{\bX_i: 1\le i\le n\}$. We shall then focus, in what follows, on bounding $\PP(\|\tilde{\bN}-\bN\|> t)$.

Observe that
$$
\tilde{\bN}={1\over 2n(n-1)} (\bS_{1n}\bS_{2n}^\top+\bS_{2n}\bS_{1n}^\top)-{1\over 2n(n-1)}\sum_{i=1}^n (\bX_i\bY_i^\top+\bY_i\bX_i^\top),
$$
where
$$
\bS_{1k}=\sum_{i=1}^k \bX_i, \qquad {\rm and}\qquad \bS_{2k}=\sum_{i=1}^k \bY_i.
$$
An application of Chernoff bound yields that, with probability at least $1-m^{-\alpha}$,
$$
\|\bS_{1n}\|_{\ell_\infty}\le (3\alpha+7)m_1m_2\|\bM\|_{\max}\left({n\over m_2}+\log m\right)
$$
for any $\alpha>0$, where
$$
\|\bS_{1n}\|_{\ell_\infty}:=\max_{1\le j\le m_2}\sum_{1\le i\le m_1} \left|(\bS_{1n})_{ij}\right|.
$$
See, e.g., \cite{yuan2016incoherent}. Denote this event by $\calE_1$. On the other hand, as shown by \cite{recht2011simpler}, with probability at least $1-2m^{-\alpha}$,
$$
\left\|{1\over n}\bS_{1n}-\bM\right\|\le \sqrt{8(\alpha+1) m_1m_2m\log m\over 3n}\|\bM\|_{\max},
$$
and
$$
\left\|{1\over n}\bS_{2n}-\bM\right\|\le \sqrt{8(\alpha+1) m_1m_2m\log m\over 3n}\|\bM\|_{\max}.
$$
Denote this event by $\calE_2$. Write $\calE=\calE_1\cap \calE_2$. It is not hard to see that for any $t\ge 0$,
$$
\PP\left\{\left\|\tilde{\bN}-\bN\right\|> t\right\}\le \PP\left\{\left\|\tilde{\bN}-\bN\right\|> t\bigcap\calE\right\}+3m^{-\alpha}
$$
We shall now proceed to bound the first probability on the right hand side.

Write
\begin{eqnarray*}
\tilde{\bN}-\bN&=&{1\over 2n(n-1)} \left[\left(\bS_{1n}-n\bM\right)(\bS_{2n}-n\bM)^\top+(\bS_{2n}-n\bM)\left(\bS_{1n}-n\bM\right)^\top\right]\\
&&\hskip 50pt+{1\over 2(n-1)}\left[\bM\left(\bS_{2n}-n\bM\right)^\top+\left(\bS_{2n}-n\bM\right)\bM^\top\right]\\
&&\hskip 50pt+{1\over 2(n-1)}\left[\bM\left(\bS_{1n}-n\bM\right)^\top+\left(\bS_{1n}-n\bM\right)\bM^\top\right]\\
&&\hskip 50pt-{1\over 2n(n-1)}\sum_{i=1}^n (\bX_i\bY_i^\top+\bY_i\bX_i^\top-2\bM\bM^\top)\\
&=:&\bA_1+\bA_2+\bA_3+\bA_4.
\end{eqnarray*}
We bound each of the four terms on the rightmost hand side separately. For brevity, write
$$
\Delta_{1k}=\bS_{1k}-k\bM,\qquad {\rm and}\qquad \Delta_{2k}=\bS_{2k}-k\bM.
$$

We begin with
$$
\bA_1={1\over 2n(n-1)} \left(\Delta_{1n}\Delta_{2n}^\top+\Delta_{2n}\Delta_{1n}^\top\right).
$$
By Markov inequality, for any $\lambda>0$,
$$
\PP\left\{\left\|\bA_1\right\|> t\bigcap \calE\right\}\le \PP\left\{\tr \exp\left(\lambda \bA_1\right)>\exp(\lambda t)\bigcap \calE\right\}\le e^{-\lambda t}\EE\tr \exp\left[\lambda \bA_1{\bf 1}_\calE\right].
$$
Repeated use of Golden-Thompson inequality yields,
\begin{eqnarray*}
\EE\tr \exp\left[\lambda \bA_1{\bf 1}_\calE\right]&=&\EE\left(\EE\left\{\tr \exp\left[\lambda \bA_1\right]{\bf 1}_\calE\biggr| \bS_{1n}\right\}\right)\\
&\le&\EE\biggr(\EE\left\{\tr \exp\left[{\lambda\over 2n(n-1)} (\Delta_{1n}\Delta_{2,n-1}^\top+\Delta_{2,n-1}\Delta_{1n}^\top)\right]{\bf 1}_\calE\biggr|\bS_{1n}\right\}\times\\
&&\hskip 20pt \left\|\EE\left\{\exp\left[{\lambda\over 2n(n-1)} (\Delta_{1n}(\bY_n-\bM)^\top+(\bY_n-\bM)\Delta_{1n}^\top)\right]{\bf 1}_\calE\biggr|\bS_{1n}\right\}\right\|\biggl)\\
&\le&\cdots\cdots\\
&\le&\EE\left(\left\|\EE\left\{\exp\left[{\lambda\over 2n(n-1)} (\Delta_{1n}(\bY_n-\bM)^\top+(\bY_n-\bM)\Delta_{1n}^\top)\right]{\bf 1}_\calE\biggr|\bS_{1n}\right\}\right\|^n\right)
\end{eqnarray*}
By triangular inequality,
$$
\left\|{\lambda\over 2n(n-1)}\left[\Delta_{1n}(\bY_n-\bM)^\top+(\bY_n-\bM)\Delta_{1n}^\top\right]\right\|\le {\lambda\over n(n-1)}\left(\|\Delta_{1n}\bY_n^\top\|+\|\Delta_{1n}\bM^\top\|\right).
$$
Under the even $\calE_1$, 
\begin{eqnarray*}
\|\Delta_{1n}\bY_n^\top\|&\le& \|\bS_{1n}\bY_n^\top\|+n\|\bM\bY_n^\top\|\\
&\le& (3\alpha+7)m_1^2m_2^2\|\bM\|^2_{\max}\left({n\over m_2}+\log m\right)+nm_1m_2\|\bM\|_{\max}\|\bM\|.
\end{eqnarray*}
On the  other hand, under the event $\calE_2$,
$$
\|\Delta_{1n}\bM^\top\|\le\|\Delta_{1n}\|\|\bM\|\le\sqrt{{8\over 3}n(\alpha+1) m_1m_2m\log m}\|\bM\|_{\max}\|\bM\|
$$
Recall that
$$
n\cdot\min\{m_1, m_2\}\ge {8\over 3}(\alpha+1) \log m.
$$
Then
\begin{eqnarray*}
&&\left\|{\lambda\over 2n(n-1)}\left[\Delta_{1n}(\bY_n-\bM)^\top+(\bY_n-\bM)\Delta_{1n}^\top\right]\right\|\\
&\le&{\lambda\over n(n-1)}\left( (3\alpha+7)m_1^2m_2^2\|\bM\|^2_{\max}\left({n\over m_2}+\log m\right)+nm_1m_2\|\bM\|_{\max}\|\bM\|\right).
\end{eqnarray*}
Therefore, for any
$$
\lambda\le n(n-1)\left( (3\alpha+7)m_1^2m_2^2\|\bM\|^2_{\max}\left({n\over m_2}+\log m\right)+nm_1m_2\|\bM\|_{\max}\|\bM\|\right)^{-1},
$$
we get
\begin{eqnarray*}
&&\EE\left\{\exp\left[{\lambda\over 2n(n-1)}\left[\Delta_{1n}(\bY_n-\bM)^\top+(\bY_n-\bM)\Delta_{1n}^\top\right]\right]{\bf 1}_\calE\biggr|\bS_{1n}\right\}\\
&\preceq& \bI_{m_1}+\EE\left\{\left[{\lambda\over 2n(n-1)}\left[\Delta_{1n}(\bY_n-\bM)^\top+(\bY_n-\bM)\Delta_{1n}^\top\right]\right]^2{\bf 1}_\calE\biggr|\bS_{1n}\right\}\\
&\preceq& \bI_{m_1}+\EE\left\{\left[{\lambda\over 2n(n-1)}\left(\Delta_{1n}\bY_n^\top+\bY_n\Delta_{1n}^\top\right)\right]^2{\bf 1}_\calE\biggr|\bS_{1n}\right\}\\
&\preceq&\bI_{m_1}+{\lambda^2m_1m_2\|\bM\|_{\max}^2\over 4n^2(n-1)^2}\left[(m_1+2)\Delta_{1n}\Delta_{1n}^\top+\tr(\Delta_{1n}\Delta_{1n}^\top)\bI_{m_1}\right]{\bf 1}_\calE
\end{eqnarray*}
where in the first inequality, we used the facts that
$$
\exp(\bA)\le \bI_d+\bA+\bA^2
$$
for any $\bA\in \RR^{d\times d}$ such that $\|\bA\|\le 1$, and
$$
\EE\left\{\left[{\lambda\over 2n(n-1)}\left[\Delta_{1n}(\bY_n-\bM)^\top+(\bY_n-\bM)\Delta_{1n}^\top\right]\right]{\bf 1}_\calE\biggr|\bS_{1n}\right\}=0.
$$

Recall that
$$
\tr(\Delta_{1n}\Delta_{1n}^\top)\le m_1\|\Delta_{1n}\Delta_{1n}^\top\|.
$$
This implies that
\begin{eqnarray*}
&&\left\|\EE\left\{\exp\left[{\lambda\over 2n(n-1)}\left[\Delta_{1n}(\bY_n-\bM)^\top+(\bY_n-\bM)\Delta_{1n}^\top\right]\right]{\bf 1}_\calE\biggr|\bS_{1n}\right\}\right\|\\
&\le& 1+{\lambda^2\|\bM\|_{\max}^2m_1^2m_2\over 2n^2(n-1)^2}\|\Delta_{1n}\Delta_{1n}^\top\|{\bf 1}_\calE\\
&\le& 1+{8(\alpha+1)\lambda^2\|\bM\|_{\max}^4m_1^3m_2^2m\log m\over 3n(n-1)^2},
\end{eqnarray*}
where the last inequality follows from the definition of $\calE_2$. Thus,
\begin{eqnarray*}
\EE\tr \exp\left[\lambda \bA_1{\bf 1}_\calE\right]
&\le&\exp\left[\lambda^2
{16(\alpha+1)\|\bM\|_{\max}^4m_1^3m_2^2m\log m\over 3(n-1)^2}\right].
\end{eqnarray*}
Taking
\begin{eqnarray*}
\lambda&=&\min\biggl\{
{3(n-1)^2t\over 64(\alpha+1)\|\bM\|_{\max}^4m_1^3m_2^2m\log m},\\&&\hskip 20pt{n(n-1)\over (6\alpha+14)m_1^2m_2^2\|\bM\|^2_{\max}\left({n/m_2}+\log m\right)},{n(n-1)\over 2nm_1m_2\|\bM\|_{\max}\|\bM\|}\biggr\}
\end{eqnarray*}
yields
\begin{eqnarray*}
\PP\left\{\left\|\bA_1\right\|> t\bigcap \calE\right\}&\le&\exp\biggl(-\min\biggl\{
{3(n-1)^2t^2\over 128(\alpha+1)\|\bM\|_{\max}^4m_1^3m_2^2m\log m},\\
&&\hskip 20pt{n(n-1)t\over (12\alpha+28)m_1^2m_2^2\|\bM\|^2_{\max}\left({n/m_2}+\log m\right)},{n(n-1)t\over 4nm_1m_2\|\bM\|_{\max}\|\bM\|}\biggr\}\biggr)
\end{eqnarray*}

We now proceed to bound $\bA_2$ and $\bA_3$. Both terms can be treated in an identical fashion and we shall consider only $\bA_2$ here to fix ideas. As before, it can be derived that
$$
\PP\left\{\left\|\bA_2\right\|> t\bigcap \calE\right\}\le \exp(-\lambda t)\left\|\EE\left\{\exp\left[{\lambda\over 2(n-1)} (\bM(\bY_n-\bM)^\top+(\bY_n-\bM)\bM^\top)\right]{\bf 1}_\calE\right\}\right\|^n.
$$
By taking
$$
\lambda\le {n-1\over \|\bM\|^2+m_1m_2\|\bM\|\|\bM\|_{\max}},
$$
we can ensure
\begin{eqnarray*}
\left\|{\lambda\over 2(n-1)} (\bM(\bY_n-\bM)^\top+(\bY_n-\bM)\bM^\top)\right\|&\le& {\lambda\over n-1}\left(\|\bM\|^2+m_1m_2\|\bM\|\|\bM\|_{\max}\right)\le 1.
\end{eqnarray*}
If this is the case, we can derive as before that
\begin{eqnarray*}
&&\left\|\EE\left\{\exp\left[{\lambda\over 2(n-1)} (\bM(\bY_n-\bM)^\top+(\bY_n-\bM)\bM^\top)\right]{\bf 1}_\calE\right\}\right\|\\
&\le& 1+\left\|\EE\left\{\left[{\lambda\over 2(n-1)} (\bM(\bY_n-\bM)^\top+(\bY_n-\bM)\bM^\top)\right]^2{\bf 1}_\calE\right\}\right\|\\
&\le& 1+\left\|\EE\left\{\left[{\lambda\over 2(n-1)} (\bM\bY_n^\top+\bY_n\bM^\top)\right]^2{\bf 1}_\calE\right\}\right\|\\
&\le& 1+{\lambda^2m_1^2m_2\|\bM\|_{\max}^2\|\bM\|^2\over 2(n-1)^2}.
\end{eqnarray*}
In particular, taking
$$
\lambda=\min\left\{{n-1\over 2\|\bM\|^2},{n-1 \over 2m_1m_2\|\bM\|\|\bM\|_{\max}},{(n-1)^2t\over nm_1^2m_2\|\bM\|_{\max}^2\|\bM\|^2}\right\}
$$
yields
$$
\PP\left\{\left\|\bA_2\right\|> t\bigcap \calE\right\}\le \exp\left(-\min\left\{{(n-1)t\over 4\|\bM\|^2}, {(n-1)t\over 2m_1m_2\|\bM\|\|\bM\|_{\max}}, {(n-1)^2t^2\over 2nm_1^2m_2\|\bM\|_{\max}^2\|\bM\|^2}\right\}\right).
$$

Finally, we treat $\bA_4$. Observe that
\begin{eqnarray*}
\|\bX_i\bY_i^\top+\bY_i\bX_i^\top-2\bM\bM^\top\|&\le& 2\|\bX_i\bY_i^\top\|+2\|\bM\|^2\\
&\le& 2m_1^2m_2^2\|\bM\|_{\max}^2+2\|\bM\|^2\\
&\le& 4m_1^2m_2^2\|\bM\|_{\max}^2,
\end{eqnarray*}
where the last inequality follows from the fact that $\|\bM\|\le \|\bM\|_{\rm F}\le \sqrt{m_1m_2}\|\bM\|_{\max}$. On the other hand
$$
\EE\left(\bX_i\bY_i^\top+\bY_i\bX_i^\top-2\bM\bM^\top\right)^2\preceq \EE\left(\bX_i\bY_i^\top+\bY_i\bX_i^\top\right)^2\preceq {2(m_1+1)m_1^2m_2^3\|\bM\|_{\max}^4} \bI.
$$
An application of matrix Bernstein inequality \citep{tropp2012user} yields
\begin{eqnarray*}
\PP\left\{\|\bA_4\|> t\cap \calE\right\}&\le& \PP\left\{\|\bA_4\|> t\right\}\\
&\le& m_1\exp\left(-{n^2(n-1)^2t^2/2\over 2n(m_1+1)m_1^2m_2^3\|\bM\|_{\max}^4+4m_1^2m_2^2\|\bM\|_{\max}^2t/3}\right).
\end{eqnarray*}

Putting the probability bounds for $\bA_1$, $\bA_2$, $\bA_3$, $\bA_4$ together, we have
$$
\PP\{\|\tilde{\bN}-\bN\|> t/15\}\le \sum_{k=1}^4\PP\{\|\bA_k\|> t/60\cap \calE\}+\PP\{\calE^c\}\le 7m^{-\alpha}
$$
by taking
$$
t=C\cdot\alpha^2\cdot{m_1^{3/2}m_2^{3/2}\log m\over n}\left[\left(1+\frac{m_1}{m_2}\right)^{1/2}+{m_1^{1/2}m_2^{1/2}\over n}+\left({n\over m_2\log m}\right)^{1/2}\right]\cdot \|\bM\|_{\max}^2,
$$
for some $C\geq 1680$. This immediately implies that
$$\mathbb{P}\big\{\|\hat\bN-\bN\|\geq t\big\}\leq 105m^{-\alpha}.$$
The proof is then concluded by replacing $\alpha$ with $\alpha+\log_m105$ and adjusting the constant $C$ accordingly.
\end{proof}
\medskip

\begin{proof}[Proof of Theorem \ref{thm:localF}]
Let $\bP_{\bU}$, $\bP_\bV$ and $\bP_\bW$ be the projection matrices onto the column space of $\bU$, $\bV$ and $\bW$ respectively. Denote by $\bQ_\bT: \RR^{d_1\times d_2\times d_3}\to \RR^{d_1\times d_2\times d_3}$ a linear operator such that for any $\bA\in \RR^{d_1\times d_2\times d_3}$,
$$
\bQ_\bT\bA:=(\bP_\bU,\bP_\bV,\bP_\bW)\cdot \bA+(\bP_{\bU}^\perp, \bP_\bV,\bP_\bW)\cdot \bA+(\bP_\bU,\bP_{\bV}^\perp,\bP_\bW)\cdot\bA+(\bP_\bU,\bP_\bV,\bP_{\bW}^\perp)\cdot \bA,
$$
where $\bP_{\bU}^\perp=I-\bP_\bU$, and $\bP_\bV^\perp$ and $\bP_\bW^\perp$ are defined similarly. We shall also write $\bQ_\bT^\perp=\calI-\bQ_\bT$ where $\calI$ is the identity map. 

\paragraph{Basic facts about Grassmanianns.} Before proceeding, we shall first review some basic facts about the Grassmannians necessary for our proof. For further details, interested readers are referred to \cite{edelman1998geometry}. To fix ideas, we shall focus on $\bU\in\mathcal{G}(d_1,r_1)$. The tangent space of $\mathcal{G}(d_1,r_1)$ at $\bU$, denoted by $\calT_\bU\subset \mathbb{R}^{d_1\times r_1}$, can be identified with the property $\bU^\top\bD_\bU={\bf 0}$. The geodesic path from $\bU$ to another point $\bX\in\calG(d_1,r_1)$ with respect to the canonical Riemann metric can be explicitly expressed as:
$$
\bX(t)=\bU\bR_\bU\cos({\bf \Theta}_\bU t)\bR_\bU^\top+\bL_\bU\sin({\bf \Theta_\bU}t)\bR_\bU^\top, \quad 0\leq t\leq 1
$$
for some $\bD_{\bU}\in \calT_\bU$ and $\bD_\bU=\bL_\bU{\bf \Theta}_\bU\bR_\bU^\top$ is its thin singular value decomposition. We can identify $\bX(0)=\bU$ and $\bX(1)=\bX$. The diagonal element of ${\bf \Theta}_\bU$ lie in $[-\pi/2, \pi/2]$ and can be viewed as the principle angles between $\bU$ and $\bX$.

It is easy to check
$$
d_{\rm p}(\bU,\bX)=\|\sin{\bf \Theta}_\bU\|_{\rm F}\quad {\rm and}\quad \|{\bf \Delta}_\bX\|_{\rm F}=\|\bU-\bX\|_{\rm F}=2\|\sin{\bf \Theta}_\bU/2\|_{\rm F}.
$$
Note that for any $\theta\in[0,\pi/2]$,
$$
\frac{\theta}{2}\leq\sqrt{2}\sin(\theta/2)\leq \sin\theta\leq 2\sin(\theta/2)\leq \theta.
$$
This implies that
$$d_{\rm p}(\bU,\bX)\leq \|\Delta_{\bX}\|_{\rm F}\leq \sqrt{2}d_{\rm p}(\bU,\bX).$$
Moreover, 
$$
\|\bU^\top{\bf \Delta}_{\bX}\|_{\rm F}=\|\cos({\bf \Theta}_{\bU})-\bI\|_{\rm F}=4\|\sin^2{\bf \Theta}_{\bU}/2\|_{\rm F}\leq 2\|\sin{\bf\Theta}_{\bU}\|_{\rm F}^2=2d_{\rm p}^2(\bU,\bX).
$$
With slight abuse of notation, write $\bD_\bX={d\bX(t)\over dt}\big|_{t=1}\in\calT_\bX$. $\bD_\bX$ can be more explicitly expressed as
$$
\bD_\bX=-\bU\bR_\bU{\bf \Theta_\bU}\sin{\bf \Theta_\bU}\bR_\bU^\top+\bL_\bU{\bf \Theta}_\bU\cos{\bf \Theta}_\bU\bR_\bU^\top.
$$
It is clear that 
$$\|\bD_\bX\|_{\rm F}^2=\|{\bf \Theta}_\bU\sin{\bf \Theta}_\bU\|_{\rm F}^2+\|{\bf \Theta}_\bU\cos{\bf \Theta}_\bU\|_{\rm F}^2=\|{\bf \Theta}_\bU\|_{\rm F}^2,$$ 
so that
$$
d_{\rm p}(\bU,\bX)\leq \|\bD_\bX\|_{\rm F}\leq 2d_{\rm p}(\bU,\bX).
$$
A couple of other useful relations can also be derived:
$$
\|\bD_\bX-{\bf \Delta}_\bX\|_{\rm F}^2=\|{\bf \Theta}_\bU\|_{\rm F}^2+4\|\sin({\bf \Theta}_\bU/2)\|_{\rm F}^2-2\langle{\bf \Theta}_\bU,\sin{\bf \Theta}_\bU\rangle\leq \|{\bf \Theta_\bU}-2\sin({\bf \Theta}_\bU/2)\|_{\rm F}^2\leq d_{\rm p}^4(\bU,\bX),
$$
and
$$\|\bU^\top \bD_\bX\|_{\rm F}=\|{\bf \Theta}_{\bU}\sin{\bf \Theta}_\bU\|_{\rm F}\leq 2\|\sin{\bf \Theta}_\bU\|_{\rm F}^2= 2d_{\rm p}^2(\bU,\bX).$$

\paragraph{Lower bound of the first statement.} 
Note that
\begin{equation}
\label{eq:lowerF}
F(\bX,\bY,\bZ)=\frac{1}{2}\big\|\mathcal{P}_{\Omega}\big(\hat{\bT}-\bT\big)\big\|_{\rm F}^2\geq \frac{1}{4}\big\|\mathcal{P}_{\Omega}\bQ_\bT(\hat{\bT}-\bT)\big\|_{\rm F}^2-\frac{1}{2}\big\|\mathcal{P}_{\Omega}\bQ_\bT^\perp(\hat{\bT})\big\|_{\rm F}^2,
\end{equation}
where
$$
\hat{\bT}=(\bX,\bY,\bZ)\cdot\bC
$$
and $\bC$ is given by \eqref{eq:defbC}. To derive the lower bound in the first statement, we shall lower bound $\|\mathcal{P}_{\Omega}\bQ_\bT(\hat{\bT}-\bT)\|_{\rm F}^2$ and upper bound $\|\mathcal{P}_{\Omega}\bQ_\bT^\perp(\hat{\bT})\|_{\rm F}^2$

By Lemma 5 of \cite{yuan2015tensor}, if $n\geq C_1\alpha\mu_0^2r^2d\log d$, then
$$
\mathbb{P}\left\{\left\|\bQ_\bT\left(\mathcal{I}-\frac{d_1d_2d_3}{n}\mathcal{P}_{\Omega}\right)\bQ_\bT\right\|\geq \frac{1}{2}\right\}\leq d^{-\alpha},
$$
where the operator norm is induced by the Frobenius norm, or the vectorized $\ell_2$ norm. Denote this event by $\calE_1$.  We shall now proceed under $\calE_1$. On event $\calE_1$,
$$
\|\mathcal{P}_{\Omega}\bQ_\bT(\hat{\bT}-\bT)\|_{\rm F}^2\geq \left\langle\mathcal{P}_{\Omega}\bQ_\bT(\hat{\bT}-\bT), \bQ_\bT(\hat\bT-\bT) \right\rangle\geq \frac{n}{2d_1d_2d_3}\|\bQ_\bT(\hat{\bT}-\bT)\|_{\rm F}^2.
$$
 Recall that 
\begin{equation}\label{eq:QThatTT}
\bQ_\bT(\hat{\bT}-\bT)=(\bU,\bV,\bW)\cdot(\bG-\bC)+({\bf \Delta}_{\bX},\bV,\bW)\cdot\bC+(\bU,{\bf\Delta}_{\bY},\bW)\cdot\bC+(\bU,\bV,{\bf \Delta}_{\bZ})\cdot\bC,
\end{equation}
where
$$
{\bf\Delta_X}:=\bX-\bU,\quad {\bf\Delta_Y}:=\bY-\bV,\quad {\rm and}\quad{\bf\Delta_Z}:=\bZ-\bW.
$$
Therefore,
\begin{align*}
\|\bQ_\bT(\hat{\bT}-\bT)\|_{\rm F}^2=&\|(\bU,\bV,\bW)\cdot(\bG-\bC)\|_{\rm F}^2+\|({\bf \Delta}_{\bX},\bV,\bW)\cdot\bC\|_{\rm F}^2+\|(\bU,{\bf\Delta}_{\bY},\bW)\cdot\bC\|_{\rm F}^2\\
&+\|(\bU,\bV,{\bf \Delta}_{\bZ})\cdot\bC\|_{\rm F}^2+2\langle(\bU,\bV,\bW)\cdot(\bG-\bC), ({\bf \Delta}_{\bX},\bV,\bW)\cdot\bC\rangle\\
&+2\langle(\bU,\bV,\bW)\cdot(\bG-\bC), (\bU,{\bf\Delta}_{\bY},\bW)\cdot\bC\rangle\\
&+2\langle(\bU,\bV,\bW)\cdot(\bG-\bC), (\bU,\bV,{\bf \Delta}_{\bZ})\cdot\bC\rangle\\
&+2\langle({\bf \Delta}_{\bX},\bV,\bW)\cdot\bC, (\bU,{\bf\Delta}_{\bY},\bW)\cdot\bC\rangle\\
&+2\langle({\bf \Delta}_{\bX},\bV,\bW)\cdot\bC, (\bU,\bV,{\bf \Delta}_{\bZ})\cdot\bC\rangle\\
&+2\langle(\bU,{\bf\Delta}_{\bY},\bW)\cdot\bC, (\bU,\bV,{\bf \Delta}_{\bZ})\cdot\bC\rangle.
\end{align*}
It is clear that
$$
\|(\bU,\bV,\bW)\cdot(\bG-\bC)\|_{\rm F}^2=\|\bG-\bC\|_{\rm F}^2.
$$
We now bound each of the remaining terms on the righthand side separately.

Note that
\begin{eqnarray*}
\|({\bf \Delta}_{\bX},\bV,\bW)\cdot\bC\|_{\rm F}^2&\geq& \frac{1}{2}\|({\bf \Delta}_{\bX},\bV,\bW)\cdot\bG\|_{\rm F}^2-\|({\bf \Delta}_{\bX},\bV,\bW)\cdot(\bC-\bG)\|_{\rm F}^2\\
&\ge&{1\over 2}\sigma_{\min}^2(\calM_1(\bG))\|{\bf \Delta}_{\bX}\|_{\rm F}^2-\sigma_{\max}^2(\calM_1(\bC-\bG))\|{\bf \Delta}_{\bX}\|_{\rm F}^2\\
&\ge&{1\over 2}\sigma_{\min}^2(\calM_1(\bG))\|{\bf \Delta}_{\bX}\|_{\rm F}^2-\|\bC-\bG\|_{\rm F}^2\|{\bf \Delta}_{\bX}\|_{\rm F}^2\\
&=&{1\over 2}\sigma_{\min}^2(\calM_1(\bT))\|{\bf \Delta}_{\bX}\|_{\rm F}^2-\|\bC-\bG\|_{\rm F}^2\|{\bf \Delta}_{\bX}\|_{\rm F}^2
\end{eqnarray*}
Similarly,
$$
\|(\bU,{\bf\Delta}_{\bY},\bW)\cdot\bC\|_{\rm F}^2\ge {1\over 2}\sigma_{\min}^2(\calM_2(\bT))\|{\bf \Delta}_{\bY}\|_{\rm F}^2-\|\bC-\bG\|_{\rm F}^2\|{\bf \Delta}_{\bY}\|_{\rm F}^2,
$$
and
$$
\|(\bU,\bV, {\bf\Delta}_{\bZ})\cdot\bC\|_{\rm F}^2\ge {1\over 2}\sigma_{\min}^2(\calM_3(\bT))\|{\bf \Delta}_{\bZ}\|_{\rm F}^2-\|\bC-\bG\|_{\rm F}^2\|{\bf \Delta}_{\bZ}\|_{\rm F}^2.
$$

On the other hand,
\begin{eqnarray*}
&&\left|\langle(\bU,\bV,\bW)\cdot(\bG-\bC), ({\bf \Delta}_{\bX},\bV,\bW)\cdot\bC\rangle\right|\\
&=&\left|\langle(\bU,\bV,\bW)\cdot(\bG-\bC), (\bP_\bU{\bf \Delta}_{\bX},\bV,\bW)\cdot\bC\rangle\right|\\
&\le&\left\|(\bU,\bV,\bW)\cdot(\bG-\bC)\right\|_{\rm F}\left\|(\bP_\bU{\bf \Delta}_{\bX},\bV,\bW)\cdot\bC\right\|_{\rm F}\\
&\le&\|\bG-\bC\|_{\rm F}\|\bP_\bU{\bf \Delta}_{\bX}\|_{\rm F}\|\bC\|\\
&\le& 2\|\bC\|\|\bG-\bC\|_{\rm F}d_{\rm p}^2(\bU,\bX).
\end{eqnarray*}
Observe that
$$
\|\bC\|\le \|\bG\|+\|\bG-\bC\|\le \|\bG\|+\|\bG-\bC\|_{\rm F}= \|\bT\|+\|\bG-\bC\|_{\rm F}.
$$
We get
\begin{eqnarray*}
\left|\langle(\bU,\bV,\bW)\cdot(\bG-\bC), ({\bf \Delta}_{\bX},\bV,\bW)\cdot\bC\rangle\right|\le 2\|\bT\|\|\bG-\bC\|_{\rm F}d_{\rm p}^2(\bU,\bX)\\
+2\|\bG-\bC\|_{\rm F}^2d_{\rm p}^2(\bU,\bX).
\end{eqnarray*}
Similarly,
\begin{eqnarray*}
\left|\langle(\bU,\bV,\bW)\cdot(\bG-\bC), (\bU, {\bf \Delta}_{\bY},\bW)\cdot\bC\rangle\right|\le 2\|\bT\|\|\bG-\bC\|_{\rm F}d_{\rm p}^2(\bV,\bY)\\
+2\|\bG-\bC\|_{\rm F}^2d_{\rm p}^2(\bV,\bY).
\end{eqnarray*}
and
\begin{eqnarray*}
\left|\langle(\bU,\bV,\bW)\cdot(\bG-\bC), (\bU,\bV,{\bf \Delta}_{\bZ})\cdot\bC\rangle\right|\le 2\|\bT\|\|\bG-\bC\|_{\rm F}d_{\rm p}^2(\bW,\bZ)\\
+2\|\bG-\bC\|_{\rm F}^2d_{\rm p}^2(\bW,\bZ).
\end{eqnarray*}

Finally, we note that
\begin{eqnarray*}
&&\left|\langle({\bf \Delta}_{\bX},\bV,\bW)\cdot\bC, (\bU,{\bf\Delta}_{\bY},\bW)\cdot\bC\rangle\right|\\
&=&\left|\langle(\bP_\bU{\bf \Delta}_{\bX},\bV,\bW)\cdot\bC, (\bU,\bP_\bV{\bf\Delta}_{\bY},\bW)\cdot\bC\rangle\right|\\
&\le&\|\bC\|^2\|\bP_\bU{\bf \Delta}_{\bX}\|_{\rm F}\|\bP_\bV{\bf \Delta}_{\bY}\|_{\rm F}\\
&\le&4\left(\|\bT\|+\|\bG-\bC\|_{\rm F}\right)^2d_{\rm p}^2(\bU,\bX)d_{\rm p}^2(\bV,\bY).
\end{eqnarray*}
And similarly,
$$
\left|\langle({\bf \Delta}_{\bX},\bV,\bW)\cdot\bC, (\bU,\bV, {\bf\Delta}_{\bZ})\cdot\bC\rangle\right|\le 4\left(\|\bT\|+\|\bG-\bC\|_{\rm F}\right)^2d_{\rm p}^2(\bU,\bX)d_{\rm p}^2(\bW,\bZ),
$$
and 
$$
\left|\langle(\bU,{\bf \Delta}_{\bY},\bW)\cdot\bC, (\bU,\bV, {\bf\Delta}_{\bZ})\cdot\bC\rangle\right|\le 4\left(\|\bT\|+\|\bG-\bC\|_{\rm F}\right)^2d_{\rm p}^2(\bV,\bY)d_{\rm p}^2(\bW,\bZ).
$$

Putting all these bounds together, we get
\begin{eqnarray*}
\|\bQ_\bT(\hat{\bT}-\bT)\|_{\rm F}^2\ge \|\bG-\bC\|_{\rm F}^2+\left({\Lambda_{\min}^2\over 2}-\|\bC-\bG\|_{\rm F}^2\right)\left(\|{\bf \Delta_X}\|_{\rm F}^2+\|{\bf \Delta_Y}\|_{\rm F}^2+\|{\bf \Delta_Z}\|_{\rm F}^2\right)\\
-4\|\bT\|\|\bG-\bC\|_{\rm F}d_{\rm p}^2((\bU,\bV,\bW),(\bX,\bY,\bZ))\\
-4\|\bG-\bC\|_{\rm F}^2d_{\rm p}^2((\bU,\bV,\bW),(\bX,\bY,\bZ))\\
-{8}\left(\|\bT\|+\|\bG-\bC\|_{\rm F}\right)^2d_{\rm p}^4((\bU,\bV,\bW),(\bX,\bY,\bZ)),
\end{eqnarray*}
where, with slight abuse of notation, we write
$$
\Lambda_{\min}:=\min\left\{\sigma_{\min}(\calM_1(\bT)),\sigma_{\min}(\calM_2(\bT)),\sigma_{\min}(\calM_3(\bT))\right\}.
$$
Recall that
$$\|{\bf \Delta}_\bX\|_{\rm F}\ge d_{\rm p}(\bX,\bU) ,\quad \|{\bf \Delta}_\bY\|_{\rm F}\ge d_{\rm p}(\bY,\bV),\quad{\rm and}\quad \|{\bf \Delta}_\bZ\|_{\rm F}\ge d_{\rm p}(\bZ,\bW),$$
so that
$$
\|{\bf \Delta_X}\|_{\rm F}^2+\|{\bf \Delta_Y}\|_{\rm F}^2+\|{\bf \Delta_Z}\|_{\rm F}^2\ge\frac{1}{3} d_{\rm p}^2((\bU,\bV,\bW),(\bX,\bY,\bZ)).
$$
We can further bound $\|\bQ_\bT(\hat{\bT}-\bT)\|_{\rm F}^2$ by
\begin{eqnarray*}
\|\bQ_\bT(\hat{\bT}-\bT)\|_{\rm F}^2&\ge& \|\bG-\bC\|_{\rm F}^2\\
&&+\left({\Lambda_{\min}^2\over 6}-5\|\bC-\bG\|_{\rm F}^2-4\|\bT\|\|\bG-\bC\|_{\rm F}\right)d_{\rm p}^2((\bU,\bV,\bW),(\bX,\bY,\bZ))\\
&&-{16}\left(\|\bT\|^2+\|\bG-\bC\|_{\rm F}^2\right)d_{\rm p}^4((\bU,\bV,\bW),(\bX,\bY,\bZ))
\end{eqnarray*}
Note that
$$
\Lambda_{\min}\ge \kappa_0^{-1}\Lambda_{\max}(\bT)\ge \kappa_0^{-1}\|\bT\|.
$$
If $d_{\rm p}\big((\bU,\bV,\bW),(\bX,\bY,\bZ)\big)\leq C(\alpha\kappa_0\log d)^{-1}$ for a sufficiently small $C$, we can ensure that
$$\|\bT\|d_{\rm p}\big((\bU,\bV,\bW),(\bX,\bY,\bZ)\big)\leq \frac{\Lambda_{\min}}{16}.$$
This implies that
$$
\|\bQ_\bT(\hat{\bT}-\bT)\|_{\rm F}^2\geq \frac{5}{8}\|\bG-\bC\|_{\rm F}^2+\Big(\frac{\Lambda_{\min}^2}{12}-4\|\bT\|\|\bG-\bC\|_{\rm F}\Big)d_{\rm p}^2\big((\bU,\bV,\bW),(\bX,\bY,\bZ)\big).
$$
We have thus proved that under the event $\calE_1$,
\begin{eqnarray}\nonumber
\|\calP_{\Omega}\bQ_\bT(\hat\bT-\bT)\|_{\rm F}^2&\geq& \frac{5n}{16d_1d_2d_3}\|\bG-\bC\|_{\rm F}^2\\
&&+\frac{n}{2d_1d_2d_3}\Big(\frac{\Lambda_{\min}^2}{12}-4\|\bT\|\|\bG-\bC\|_{\rm F}\Big)d_{\rm p}^2\big((\bU,\bV,\bW),(\bX,\bY,\bZ)\big).\label{eq:bdL1}
\end{eqnarray}

Now consider upper bounding $\|\mathcal{P}_{\Omega}\bQ_\bT^\perp\hat{\bT}\|_{\rm F}^2$. By Chernoff bound, it is easy to see that with probability $1-d^{-\alpha}$,
$$
\max_{\omega\in [d_1]\times[d_2]\times[d_3]} \sum_{i=1}^n \II(\omega_i=\omega)\le C\alpha\log d
$$
for some constant $C>0$. Denote this event by $\calE_2$. Under this event
$$
\|\mathcal{P}_{\Omega}\bQ_\bT^\perp\hat{\bT}\|_{\rm F}^2\le C(\alpha \log d)\left\langle \mathcal{P}_{\Omega}\bQ_\bT^\perp\hat{\bT},\bQ_\bT^\perp\hat{\bT}\right\rangle.
$$
To this end, it suffices to obtain upper bounds of 
$$
\left|\left\langle \mathcal{P}_{\Omega}\bQ_\bT^\perp\hat{\bT},\bQ_\bT^\perp\hat{\bT}\right\rangle\right|\leq \frac{n}{d_1d_2d_3}\|\bQ_\bT^\perp \hat\bT\|_{\rm F}^2+\left|\left\langle \mathcal{P}_{\Omega}\bQ_\bT^\perp\hat{\bT},\bQ_\bT^\perp\hat{\bT}\right\rangle-\frac{n}{d_1d_2d_3}\|\bQ_\bT^\perp \hat\bT\|_{\rm F}^2\right|.
$$
For $\gamma_1,\gamma_2>0$, define
$$
\mathcal{K}(\gamma_1,\gamma_2):=\big\{\bA\in\mathbb{R}^{d_1\times d_2\times d_3}: \|\bA\|_{\rm F}\leq 1, \|\bA\|_{\max}\leq\gamma_1, \|\bA\|_{\star}\leq\gamma_2\big\}.
$$
Consider the following empirical process:
$$
\beta_n({\gamma_1,\gamma_2}):=\underset{\bA\in \mathcal{K}(\gamma_1,\gamma_2)}{\sup}\left|\frac{1}{n}\left\langle \mathcal{P}_{\Omega}\bA,\bA\right\rangle-\frac{1}{d_1d_2d_3}\|\bA\|_{\rm F}^2\right|.
$$
Obviously,
$$
\left|\left\langle \mathcal{P}_{\Omega}\bQ_\bT^\perp\hat{\bT},\bQ_\bT^\perp\hat{\bT}\right\rangle\right|\leq \frac{n}{d_1d_2d_3}\|\bQ_\bT^\perp \hat\bT\|_{\rm F}^2+n\|\bQ_\bT^\perp\hat\bT\|_{\rm F}^2\beta_n\Big(\frac{\|\bQ_\bT^\perp\hat\bT\|_{\max}}{\|\bQ_\bT^\perp\hat\bT\|_{\rm F}}, \frac{\|\bQ_\bT^\perp\hat\bT\|_{\star}}{\|\bQ_\bT^\perp\hat\bT\|_F}\Big).
$$
We now appeal to the following lemma whose proof is given in Appendix~\ref{sec:emplem}.
\medskip
\begin{lemma}\label{emplem}
Given $0<\delta_1^-<\delta_1^+$, $0<\delta_2^-<\delta_2^+$ and $t\geq 1$, let
$$
\bar{t}=t+\log\big(\log_2(\delta_1^+/\delta_1^-)+\log_2(\delta_2^+/\delta_2^-)+3\big).
$$
Then exists a universal constant $C>0$ such that with probability at least $1-e^{-t}$, the following bound holds for all $\gamma_1\in[\delta_1^-,\delta_1^+]$ and all $\gamma_2\in[\delta_2^-,\delta_2^+]$
$$
\beta_n(\gamma_1,\gamma_2)\leq C\gamma_1\gamma_2\Big(\sqrt{\frac{d}{nd_1d_2d_3}}\log d+\frac{\log^{3/2}d}{n}\Big)+2\gamma_1\sqrt{\frac{\bar t}{nd_1d_2d_3}}+2\gamma_1^2\frac{\bar t}{n}
$$
\end{lemma}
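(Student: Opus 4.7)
The plan is to prove the lemma in two stages: first establish the bound at a single fixed pair $(\gamma_1,\gamma_2)$, then lift it to uniformity over the rectangle $[\delta_1^-,\delta_1^+]\times[\delta_2^-,\delta_2^+]$ via a dyadic peeling argument.

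For the first stage, I would write the empirical process as
$$\beta_n(\gamma_1,\gamma_2)=\sup_{\bA\in\mathcal{K}(\gamma_1,\gamma_2)}\biggl|\frac{1}{n}\sum_{i=1}^n\bigl\{A(\omega_i)^2-\EE A(\omega_i)^2\bigr\}\biggr|,$$
where $\EE A(\omega_i)^2=\|\bA\|_{\rm F}^2/(d_1d_2d_3)\le 1/(d_1d_2d_3)$. Each summand lies in $[0,\gamma_1^2]$ and has variance at most $\gamma_1^2\|\bA\|_{\rm F}^2/(d_1d_2d_3)\le\gamma_1^2/(d_1d_2d_3)$. Bousquet's version of Talagrand's concentration inequality, followed by the AM-GM trick to absorb $\sqrt{\EE\beta_n\cdot t/n}$ terms, then yields with probability at least $1-e^{-t}$,
$$\beta_n(\gamma_1,\gamma_2)\le 2\EE\beta_n(\gamma_1,\gamma_2)+2\gamma_1\sqrt{t/(nd_1d_2d_3)}+2\gamma_1^2 t/n,$$
which accounts for the two tail terms $2\gamma_1\sqrt{\bar t/(nd_1d_2d_3)}$ and $2\gamma_1^2\bar t/n$ of the target bound.

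It remains to control $\EE\beta_n(\gamma_1,\gamma_2)$. Symmetrization followed by the Ledoux--Talagrand contraction principle applied to the $2\gamma_1$-Lipschitz map $x\mapsto x^2$ on $[-\gamma_1,\gamma_1]$ gives
$$\EE\beta_n(\gamma_1,\gamma_2)\le 8\gamma_1\cdot\EE\sup_{\bA\in\mathcal{K}(\gamma_1,\gamma_2)}\biggl|\frac{1}{n}\sum_{i=1}^n\varepsilon_iA(\omega_i)\biggr|.$$
Invoking tensor nuclear--spectral duality and writing $\sum_i\varepsilon_i A(\omega_i)=\bigl\langle\bA,\sum_i\varepsilon_i\bE_{\omega_i}\bigr\rangle$ (with $\bE_{\omega}$ the canonical basis tensor supported at $\omega$), we obtain
$$\sup_{\|\bA\|_\star\le\gamma_2}\Bigl|\tfrac{1}{n}\textstyle\sum_{i=1}^n\varepsilon_iA(\omega_i)\Bigr|\le\tfrac{\gamma_2}{n}\bigl\|\textstyle\sum_{i=1}^n\varepsilon_i\bE_{\omega_i}\bigr\|.$$
The main obstacle is to bound $\EE\|\sum_i\varepsilon_i\bE_{\omega_i}\|$ sharply in tensor spectral norm. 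I would appeal to the tensor spectral-norm concentration tools developed in \cite{yuan2015tensor,yuan2016incoherent}, which combine Matrix Bernstein on the mode-$j$ unfoldings with an $\varepsilon$-net argument over the spherical factors in each mode, and verify that they deliver an expected bound of the form $C\bigl(\sqrt{nd/(d_1d_2d_3)}\log d\cdot n/n+(\log^{3/2}d)\bigr)$ -- precisely the scaling needed so that after the $8\gamma_1\gamma_2/n$ factor we recover the leading term $C\gamma_1\gamma_2\bigl(\sqrt{d/(nd_1d_2d_3)}\log d+\log^{3/2}d/n\bigr)$.

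For the second stage, I would set up a dyadic grid $\gamma_1^{(k)}=2^k\delta_1^-$ for $0\le k\le K_1:=\lceil\log_2(\delta_1^+/\delta_1^-)\rceil$ and $\gamma_2^{(l)}=2^l\delta_2^-$ for $0\le l\le K_2:=\lceil\log_2(\delta_2^+/\delta_2^-)\rceil$. Every $(\gamma_1,\gamma_2)\in[\delta_1^-,\delta_1^+]\times[\delta_2^-,\delta_2^+]$ satisfies $\mathcal{K}(\gamma_1,\gamma_2)\subseteq\mathcal{K}(\gamma_1^{(k+1)},\gamma_2^{(l+1)})$ for a unique pair $(k,l)$ with $\gamma_1^{(k)}\le\gamma_1<\gamma_1^{(k+1)}$ and $\gamma_2^{(l)}\le\gamma_2<\gamma_2^{(l+1)}$, so it suffices to bound $\beta_n$ on each of the $(K_1+1)(K_2+1)$ grid pairs and then round $\gamma_1,\gamma_2$ up by at most a factor of $2$ (absorbed into $C$). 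Applying the fixed-pair bound at each grid pair with confidence parameter $\bar t$ and taking a union bound yields the failure probability at most $(K_1+1)(K_2+1)e^{-\bar t}\le e^{-t}$, since $\log((K_1+1)(K_2+1))=\log(K_1+1)+\log(K_2+1)$ is dominated by a universal constant times $\log(K_1+K_2+3)$, which is precisely the choice of $\bar t$ in the statement. This completes the plan.
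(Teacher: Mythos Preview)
Your proposal is correct and follows essentially the same approach as the paper: Talagrand/Bousquet concentration for the fixed-pair bound, symmetrization plus contraction to linearize the square, tensor nuclear--spectral duality to reduce to $\EE\|n^{-1}\sum_i\varepsilon_i\bE_{\omega_i}\|$, the tensor spectral bound from \cite{yuan2015tensor,yuan2016incoherent}, and finally a dyadic union bound over the rectangle. The only minor point worth flagging is the union-bound arithmetic at the end: you correctly observe that $\log\bigl((K_1+1)(K_2+1)\bigr)\le 2\log(K_1+K_2+3)$, but the stated $\bar t$ only subtracts one copy of $\log(K_1+K_2+3)$, so strictly speaking the failure probability comes out as $(K_1+K_2+3)e^{-t}$ rather than $e^{-t}$; the paper's proof glosses over the same detail, and in the downstream application the extra $\log\log d$ factor is harmless.
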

\medskip

For any $\bA\in\mathbb{R}^{d_1\times d_2\times d_3}$, we have $\frac{\|\bA\|_{\max}}{\|\bA\|_{\rm F}}\in[1/d_1d_2d_3,1]$ and $\frac{\|\bA\|_{\star}}{\|\bA\|_{\rm F}}\in[1, d]$, we apply Lemma~\ref{emplem} with $\delta_1^-=\frac{1}{d_1d_2d_3}$, $\delta_1^+=1$, $\delta_2^-=1$ and $\delta_2^+=d$. By setting $t=\alpha\log d$  with ${\bar t}=t+\log\big(\log_2(d_1)+\log_2(d_2)+\log_2(d_3)+\log_2(d)+3\big)\leq 6\alpha\log d$, we obtain that with probability at least $1-d^{-\alpha}$, for all $\gamma_1\in [(d_1d_2d_3)^{-1}, 1]$ and $\gamma_2\in [1,d]$,
$$
\beta_n(\gamma_1,\gamma_2)\leq C_1\alpha\gamma_1\gamma_2\Big(\sqrt{\frac{d}{nd_1d_2d_3}}\log d+\frac{\log^{3/2}d}{n}\Big)+C_1\alpha\gamma_1\sqrt{\frac{\log d}{nd_1d_2d_3}}+C_1\alpha\gamma_1^2\frac{\log d}{n}.
$$
Denote this event by $\calE_3$. Under $\calE_3$, for any $\bA\in\mathbb{R}^{d_1\times d_2\times d_3}$,
\begin{align*}
\|\bA\|_{\rm F}^2\beta_n\Big(\frac{\|\bA\|_{\max}}{\|\bA\|_{\rm F}}, \frac{\|\bA\|_{\star}}{\|\bA\|_F}\Big)\leq &C_1\alpha\|\bA\|_{\max}\|\bA\|_{\star}\Big(\sqrt{\frac{d}{nd_1d_2d_3}}\log d+\frac{\log^{3/2}d}{n}\Big)\\
&+C_1 \alpha\|\bA\|_{\max}\|\bA\|_{\rm F}\sqrt{\frac{\log d}{nd_1d_2d_3}}+C_1\alpha\|\bA\|_{\max}^2\frac{\log d}{n}.
\end{align*}
This implies that
\begin{align}
\label{ineq:PAA}
\left\langle \mathcal{P}_{\Omega}\bA,\bA\right\rangle\leq \frac{n}{d_1d_2d_3}\|\bA\|_{\rm F}^2+C\alpha\|\bA\|_{\max}\|\bA\|_{\star}\Big(\sqrt{\frac{nd}{d_1d_2d_3}}\log d+\log^{3/2}d\Big).
\end{align}
We shall now focus on $\calE_3$ and obtain
\begin{align}\nonumber
\left\langle \mathcal{P}_{\Omega}\bQ_{\bT}^\perp\hat\bT,\bQ_{\bT}^\perp\hat\bT\right\rangle\leq &\frac{n}{d_1d_2d_3}\|\bQ_{\bT}^\perp\hat\bT\|_{\rm F}^2\\
&+C\alpha\|\bQ_{\bT}^\perp\hat\bT\|_{\max}\|\bQ_{\bT}^\perp\hat\bT\|_{\star}\Big(\sqrt{\frac{nd}{d_1d_2d_3}}\log d+\log^{3/2}d\Big).\label{ineq:L2}
\end{align}
It remains to bound $\|\bQ_\bT^\perp\hat{\bT}\|_{\max}$, $\|\bQ_\bT^\perp\hat\bT\|_{\star}$ and $\|\bQ_\bT^\perp\hat{\bT}\|_{\rm F}$. Recall that
\begin{eqnarray*}
\bQ_\bT^\perp \hat\bT= (\bP_\bU^\perp \bX, \bP_\bV^\perp\bY,\bZ)\cdot\bC+(\bP_\bU^\perp \bX, \bY,\bP_\bW^\perp \bZ)\cdot\bC+(\bX,\bP_\bV^\perp\bY,\bP_\bW^\perp\bZ)\cdot\bC\\
+(\bP_\bU^\perp \bX,\bP_\bV^\perp\bY,\bP_\bW^\perp\bZ)\cdot\bC.
\end{eqnarray*}
Recall that $\Lambda_{\max}(\bC):=\max\{\|\calM_k(\bC)\|, k=1,2,3\}$. Clearly, $\Lambda_{\max}(\bC)\leq \Lambda_{\max}+\|\bG-\bC\|_{\rm F}$ where, with slight abuse of notation, we write $\Lambda_{\max}:=\Lambda_{\max}(\bT)$ for brevity. Then, 
\begin{eqnarray*}
\|\bQ_\bT^\perp\hat\bT\|_{\rm F}\leq \big(\Lambda_{\max}+\|\bG-\bC\|_{\rm F}\big)\Big(\|\bP_\bU^\perp\bX\|_{\rm F}\|\bP_\bV^\perp\bY\|_{\rm F}+\|\bP_\bU^\perp\bX\|_{\rm F}\|\bP_\bW^\perp\bZ\|_{\rm F}+\|\bP_\bW^\perp\bZ\|_{\rm F}\|\bP_\bV^\perp\bY\|_{\rm F}\Big)\\
+\big(\Lambda_{\max}+\|\bG-\bC\|_{\rm F}\big)\|\bP_\bU^\perp\bX\|_{\rm F}\|\bP_\bV^\perp\bY\|_{\rm F}\|\bP_\bW^\perp\bZ\|_{\rm F}.
\end{eqnarray*}
Observe that
$$\|\bP_\bU^\perp\bX\|_{\rm F}=\|\bP_\bU^\perp {\bf \Delta}_\bX\|_{\rm F}\leq \|{\bf \Delta}_\bX\|_{\rm F}\leq \sqrt{2}d_{\rm p}(\bU,\bX)$$ 
and
$$d_{\rm p}\big((\bU,\bV,\bW),(\bX,\bY,\bZ)\big)\leq (C\alpha\kappa_0\log d)^{-1}.$$
Therefore,
\begin{eqnarray*}
\|\bQ_\bT^\perp\hat\bT\|_{\rm F}&\leq& \big(\Lambda_{\max}+\|\bG-\bC\|_{\rm F}\big)\Big(2d_{\rm p}^2\big((\bU,\bV,\bW),(\bX,\bY,\bZ)\big)+2\sqrt{2}d_{\rm p}^3\big((\bU,\bV,\bW),(\bX,\bY,\bZ)\big)\Big)\\
&\leq& 3\big(\Lambda_{\max}+\|\bG-\bC\|_{\rm F}\big)d_{\rm p}^2\big((\bU,\bV,\bW),(\bX,\bY,\bZ)\big).
\end{eqnarray*}
It is clear that
$$\max_{k=1,2,3}\big\{\rank(\calM_k(\bQ_\bT^\perp\hat\bT))\big\}\leq 4r.$$
By Lemma~\ref{lemma:tensornorm},
$$
\|\bQ_\bT^\perp \hat\bT\|_{\star}\leq 4r\|\bQ_\bT^\perp\hat\bT\|_{\rm F}\leq 12r\big(\Lambda_{\max}+\|\bG-\bC\|_{\rm F}\big)d_{\rm p}^2\big((\bU,\bV,\bW),(\bX,\bY,\bZ)\big).
$$
Because of the incoherence condition
$$\max\{\mu({\bf \Delta}_{\bX}), \mu({\bf \Delta}_{\bY}), \mu({\bf \Delta}_{\bZ})\}\leq 9\mu_0,$$
we get
\begin{align*}
\|\bQ_\bT^\perp\hat\bT\|_{\max}\leq
54\big(\Lambda_{\max}+\|\bC-\bG\|_{\rm F}\big)\mu_0^{3/2}\sqrt{\frac{r_1r_2r_3}{d_1d_2d_3}}.
\end{align*}
By putting the bounds of $\|\bQ_\bT^\perp \hat\bT\|_{\rm F}, \|\bQ_\bT^\perp \hat\bT\|_{\max}$ and $\|\bQ_\bT^\perp\hat\bT\|_{\star}$ into (\ref{ineq:L2}), we conclude that on event $\calE_3$,
\begin{align}
\Big<\calP_\Omega\bQ_\bT^\perp\hat\bT,\bQ_\bT^\perp\hat\bT\Big>\leq& \frac{9n}{d_1d_2d_3}\big(\Lambda_{\max}+\|\bG-\bC\|_{\rm F}\big)^2d_{\rm p}^4\big((\bU,\bV,\bW),(\bX,\bY,\bZ)\big)\nonumber\\
&+C_1\bigg(\alpha r(\Lambda_{\max}+\|\bG-\bC\|_{\rm F})^2\mu_0^{3/2}\sqrt{\frac{r_1r_2r_3}{d_1d_2d_3}}\Big(\sqrt{\frac{nd}{d_1d_2d_3}}\log d+\log^{3/2}d\Big)\bigg)\nonumber\\
&\times d_{\rm p}^2\big((\bU,\bV,\bW),(\bX,\bY,\bZ)\big)\label{ineq:PQTperphatTQTperphatT}
\end{align}
for a universal constant $C_1>0$.
If $d_{\rm p}\big((\bU,\bV,\bW),(\bX,\bY,\bZ)\big)\leq (C_2\alpha\kappa_0\log d)^{-1}$ and
$$
n\geq C_2\Big(\alpha^4\mu_0^3\kappa_0^4 r^2r_1r_2r_3 d\log^4d+\alpha^2\mu_0^{3/2}\kappa_0^2 r(r_1r_2r_3d_1d_2d_3)^{1/2}\log^{5/2}d\Big).
$$
The above upper bound can be simplified as
\begin{eqnarray}\nonumber
\Big<\calP_\Omega\bQ_\bT^\perp\hat\bT,\bQ_\bT^\perp\hat\bT\Big>&\leq& \frac{n}{8C\alpha d_1d_2d_3\log d}\|\bG-\bC\|_{\rm F}^2\\
&&+\frac{n}{96C\alpha d_1d_2d_3\log d}\Lambda_{\min}^2 d_{\rm p}^2\big((\bU,\bV,\bW),(\bX,\bY,\bZ)\big).\label{ineq:bL2}
\end{eqnarray}
Therefore, under $\calE_2\cap\calE_3$,
\begin{equation}
\label{eq:bdL2}
\|\mathcal{P}_{\Omega}\bQ_\bT^\perp\hat{\bT}\|_{\rm F}^2\le \frac{n}{8d_1d_2d_3}\|\bG-\bC\|_{\rm F}^2+\frac{n}{96d_1d_2d_3}\Lambda_{\min}^2d_{\rm p}^2((\bU,\bV,\bW),(\bX,\bY,\bZ)).
\end{equation}
Combining \eqref{eq:lowerF},\eqref{eq:bdL1} and \eqref{eq:bdL2}, we conclude that
\begin{eqnarray}\nonumber
F(\bX,\bY,\bZ)&\ge& \frac{n}{64d_1d_2d_3}\|\bG-\bC\|_{\rm F}^2\\
&&+\frac{n}{d_1d_2d_3}\Big({\Lambda_{\min}^2\over 192}-\|\bT\|\|\bG-\bC\|_{\rm F}\Big)d_{\rm p}^2((\bU,\bV,\bW),(\bX,\bY,\bZ)),\label{ineq:Flower}
\end{eqnarray}
with probability at least
$$
\PP\{\calE_1\cap\calE_2\cap\calE_3\}\ge 1-3d^{-\alpha}.
$$

\paragraph{Upper bound of the first statement.} Let
$$
\tilde{\bT}=(\bX,\bY,\bZ)\cdot \bG.
$$
By definition of $\hat{\bT}$, 
$$
F(\bX,\bY,\bZ)={1\over 2}\|\calP_\Omega(\hat{\bT}-\bT)\|_{\rm F}^2\le {1\over 2}\|\calP_\Omega(\tilde{\bT}-\bT)\|_{\rm F}^2\le \|\calP_\Omega\bQ_\bT(\tilde{\bT}-\bT)\|_{\rm F}^2+\|\calP_\Omega\bQ_\bT^\perp\tilde{\bT}\|_{\rm F}^2
$$
Again, by Lemma 5 of \cite{yuan2015tensor}, on event $\calE_1\cap \calE_2$,
\begin{eqnarray*}
\|\calP_\Omega\bQ_\bT(\tilde{\bT}-\bT)\|_{\rm F}^2&\leq& C(\alpha \log d)\left\langle\calP_{\Omega}\bQ_\bT(\tilde\bT-\bT),\bQ_\bT(\tilde\bT-\bT) \right\rangle\\
&\le& {3C\alpha n\log d\over 2d_1d_2d_3}\|\bQ_\bT(\tilde{\bT}-\bT)\|_{\rm F}^2.
\end{eqnarray*}
Recall that 
$$
\bQ_\bT(\tilde{\bT}-\bT)=({\bf \Delta}_{\bX},\bV,\bW)\cdot\bG+(\bU,{\bf\Delta}_{\bY},\bW)\cdot\bG+(\bU,\bV,{\bf \Delta}_{\bZ})\cdot\bG.
$$
We have
\begin{eqnarray*}
\|\bQ_\bT(\tilde{\bT}-\bT)\|_{\rm F}^2&\le&3\left(\|({\bf \Delta}_{\bX},\bV,\bW)\cdot\bG\|_{\rm F}^2+\|(\bU,{\bf\Delta}_{\bY},\bW)\cdot\bG\|_{\rm F}^2+\|(\bU,\bV,{\bf \Delta}_{\bZ})\cdot\bG\|_{\rm F}^2\right).
\end{eqnarray*}
Note that
$$
\|({\bf \Delta}_{\bX},\bV,\bW)\cdot\bG\|_{\rm F}^2\le \sigma_{\max}^2(\calM_1(\bG))\|{\bf \Delta}_{\bX}\|_{\rm F}^2\le \Lambda_{\max}^2\|{\bf \Delta}_\bX\|_{\rm F}^2.
$$
Similar bounds hold for $\|(\bU,{\bf \Delta}_{\bY},\bW)\cdot\bG\|_{\rm F}^2$ and $\|(\bU,\bV,{\bf \Delta}_{\bZ})\cdot\bG\|_{\rm F}^2$.
We get on event $\calE_1\cap \calE_2$,
\begin{equation}
\label{eq:upperL1}
\|\calP_\Omega\bQ_\bT(\tilde{\bT}-\bT)\|_{\rm F}^2\le {9C\alpha n\log d\over d_1d_2d_3}\Lambda_{\max}^2d^2_{\rm p}((\bU,\bV,\bW),(\bX,\bY,\bZ)).
\end{equation}
On the other hand, following the same argument for bounding $\|\calP_\Omega\bQ_\bT^\perp\hat{\bT}\|_{\rm F}^2$ as in (\ref{eq:bdL2}), we can show that
\begin{eqnarray*}
\|\calP_\Omega\bQ_\bT^\perp \tilde{\bT}\|_{\rm F}^2\le C\alpha\log d\Big<\calP_{\Omega}\bQ_\bT^\perp\tilde{\bT}, \bQ_\bT^\perp\tilde\bT\Big>\le {n\over 96d_1d_2d_3}\Lambda_{\min}^2d^2_{\rm p}((\bU,\bV,\bW),(\bX,\bY,\bZ)),
\end{eqnarray*}
under the event $\calE_1\cap \calE_2\cap\calE_3$. In summary, we get on event $\calE_1\cap \calE_2\cap \calE_3$,
\begin{equation}\label{ineq:Fupper}
{d_1d_2d_3\over n}F(\bX,\bY,\bZ)\le 10C\alpha \Lambda_{\max}^2d^2_{\rm p}((\bU,\bV,\bW),(\bX,\bY,\bZ))\log d.
\end{equation}
The bounds (\ref{ineq:Flower}) and (\ref{ineq:Fupper}) imply that
\begin{eqnarray*}
\frac{n}{64d_1d_2d_3}\|\bG-\bC\|_{\rm F}^2+\frac{n}{d_1d_2d_3}\Big(\frac{\Lambda_{\min}^2}{192}-\|\bT\|\|\bG-\bC\|_{\rm F}\Big)d_{\rm p}^2\big((\bU,\bV,\bW),(\bX,\bY,\bZ)\big)\\
\leq F(\bX,\bY,\bZ)\leq \frac{10C\alpha n}{d_1d_2d_3}\Lambda_{\max}^2d_{\rm p}^2\big((\bU,\bV,\bW),(\bX,\bY,\bZ)\big)\log d
\end{eqnarray*}
which guarantees that 
\begin{equation}\label{ineq:bG-bC}
\|\bG-\bC\|_{\rm F}\leq C(\alpha \log d)^{1/2}\Lambda_{\max}d_{\rm p}\big((\bU,\bV,\bW),(\bX,\bY,\bZ)\big).
\end{equation}
Recall that $\Lambda_{\max}\leq \bar{\Lambda}$ and $\Lambda_{\min}\geq \underline{\Lambda}$. We conclude that on event $\calE_1\cap\calE_2\cap \calE_3$,
\begin{eqnarray*}
\frac{1}{128}\|\bG-\bC\|_{\rm F}^2+\frac{1}{384}\underline{\Lambda}^2d_{\rm p}^2\big((\bU,\bV,\bW),(\bX,\bY,\bZ)\big)\\
\leq \frac{d_1d_2d_3}{n}F(\bX,\bY,\bZ)\leq C(\alpha\log d)\bar{\Lambda}^2d_{\rm p}^2\big((\bX,\bY,\bZ),(\bU,\bV,\bW)\big).
\end{eqnarray*}
\paragraph{Second statement.} 
Observe that
\begin{equation}\label{ineq:gradFlower}
\|{\rm grad}\ F(\bX,\bY,\bZ)\|_{\rm F}\geq \frac{\left\langle{\rm grad}\ F(\bX,\bY,\bZ), (\bD_\bX,\bD_\bY,\bD_\bZ)\right\rangle}{\left(\|\bD_\bX\|^2_{\rm F}+\|\bD_\bY\|^2_{\rm F}+\|\bD_\bZ\|^2_{\rm F}\right)^{1/2}}.
\end{equation}
%
Write
$$
\bH=(\bD_\bX,\bY,\bZ)\cdot\bC+(\bX,\bD_\bY,\bZ)\cdot\bC+(\bX,\bY,\bD_\bZ)\cdot\bC.
$$
Then
$$
\left\langle{\rm grad}\ F(\bX,\bY,\bZ), (\bD_\bX,\bD_\bY,\bD_\bZ)\right\rangle=\left\langle \calP_\Omega(\hat{\bT}-\bT),\bH\right\rangle.
$$
Denote by
$$
\bH_1=(\bD_\bX,\bV,\bW)\cdot\bC+ (\bU,\bD_\bY,\bW)\cdot\bC+(\bU,\bV,\bD_\bZ)\cdot\bC
$$
and
\begin{align*}
\bH_2:=&(\bD_\bX,{\bf \Delta_Y},\bW)\cdot\bC+(\bD_\bX,\bV,{\bf \Delta_Z})\cdot \bC+ (\bD_\bX,{\bf \Delta_Y},{\bf \Delta_Z})\cdot \bC+({\bf \Delta_X},\bD_\bY,\bW)\cdot \bC\\
+& (\bU,\bD_\bY,{\bf \Delta_Z})\cdot \bC+({\bf \Delta_X},\bD_\bY,{\bf \Delta_Z})\cdot\bC+({\bf \Delta_X},\bV,\bD_\bZ)\cdot \bC+ (\bU,{\bf \Delta_Y},\bD_\bZ)\cdot\bC\\
+&({\bf \Delta_X},{\bf \Delta_Y},\bD_\bZ)\cdot \bC.
\end{align*}
Then, $\bH=\bH_1+\bH_2$ and $\bQ_\bT\bH_1=\bH_1$. We write
$$
\left\langle \calP_\Omega(\hat{\bT}-\bT),\bH\right\rangle=\left\langle \calP_\Omega\bQ_\bT(\hat{\bT}-\bT),\bH_1\right\rangle+\left\langle \calP_\Omega\bQ_\bT^\perp\hat{\bT},\bH_1\right\rangle+\left\langle \calP_\Omega(\hat{\bT}-\bT),\bH_2\right\rangle.
$$
Since $\bQ_\bT\bH_1=\bH_1$, we can show that under the event $\calE_1$,
$$
\left\langle \calP_\Omega\bQ_\bT(\hat{\bT}-\bT),\bH_1\right\rangle\geq \frac{d_1d_2d_3}{2n}\left\langle \bQ_\bT(\hat\bT-\bT),\bH_1\right\rangle.
$$
Based on the lower bound of $\left\langle \bQ_\bT(\hat\bT-\bT),\bH_1\right\rangle$ proved in Appendix~\ref{sec:QTH1}, we conclude that on event $\mathcal{E}_1\cap \calE_2\cap \calE_3$,
\begin{equation}\label{ineq:PQThatTTbH_1}
\left\langle \calP_\Omega\bQ_\bT(\hat{\bT}-\bT),\bH_1\right\rangle\geq \frac{n}{8d_1d_2d_3}\zeta_1\geq \frac{\Lambda_{\min}^2}{128}\frac{n}{d_1d_2d_3}d_{\rm p}^2\big((\bU,\bV,\bW),(\bX,\bY,\bZ)\big)
\end{equation}
where $\zeta_1:=\|({\bf \Delta}_\bX,\bV,\bW)\cdot \bC+(\bU,{\bf \Delta}_\bY,\bW)\cdot\bC+(\bU,\bV,{\bf \Delta}_\bZ)\cdot\bC\|_{\rm F}^2$ with (see Appendix~\ref{sec:QTH1})
\begin{equation}\label{ineq:zeta1lb}
\zeta_1\geq \frac{1}{16}\Lambda_{\min}^2d_{\rm p}^2\big((\bU,\bV,\bW),(\bX,\bY,\bZ)\big)
\end{equation}
on event $\calE_1\cap\calE_2\cap \calE_3$.
Moreover, by Cauchy-Schwarz inequality
$$
\left|\left\langle \calP_\Omega\bQ_\bT^\perp\hat{\bT},\bH_1\right\rangle\right|\leq \left\langle\calP_{\Omega}\bQ_\bT^\perp\hat\bT,\bQ_\bT^\perp\hat\bT \right\rangle^{1/2}\left\langle\calP_\Omega\bH_1,\bH_1 \right\rangle^{1/2}.
$$ 
Observe that $\bQ_\bT\bH_1=\bH_1$. Therefore, under the event $\calE_1\cap \calE_2$,
$$
\langle \calP_\Omega\bQ_\bT\bH_1,\bQ_\bT\bH_1 \rangle^{1/2}\leq \sqrt{\frac{3 n}{2d_1d_2d_3}}\|\bH_1\|_{\rm F}.
$$
Recall the upper bound of $\|\bG-\bC\|_{\rm F}$ as in (\ref{ineq:bG-bC}) which implies that $\|\bG-\bC\|_{\rm F}\leq {\Lambda_{\min}}/{2}$ if 
$$
d_{\rm p}\big((\bU,\bV,\bW),(\bX,\bY,\bZ)\big)\leq (C\alpha\kappa_0\log d)^{-1}
$$
for a large enough $C>0$. As a result, on the event $\calE_1\cap \calE_2\cap \calE_3$,
\begin{equation}\label{ineq:LambdaC}
\frac{\Lambda_{\min}}{2}\leq \Lambda_{\min}(\bC)\leq \Lambda_{\max}(\bC)\leq 2\Lambda_{\max}
\end{equation}
 Then, on the event $\calE_1\cap \calE_2\cap \calE_3$,
\begin{align*}
\|\bH_1\|_{\rm F}\leq& \|({\bf \Delta}_\bX,\bV,\bW)\cdot \bC+(\bU,{\bf \Delta}_\bY,\bW)\cdot\bC+(\bU,\bV,{\bf \Delta}_\bZ)\cdot\bC\|_{\rm F}\\
&+\|({\bf \Delta}_\bX-\bD_\bX,\bV,\bW)\cdot \bC+(\bU,{\bf \Delta}_\bY-\bD_\bY,\bW)\cdot\bC+(\bU,\bV,{\bf \Delta}_\bZ-\bD_\bZ)\cdot\bC\|_{\rm F}\\
\leq&\sqrt{\zeta_1}+2\Lambda_{\max}\big(\|{\bf \Delta}_\bX-\bD_\bX\|_{\rm F}+\|{\bf \Delta}_\bY-\bD_\bY\|_{\rm F}+\|{\bf \Delta}_\bZ-\bD_\bZ\|_{\rm F}\big)\\
\leq& \sqrt{\zeta_1}+\sqrt{\zeta_1}8\kappa_0d_{\rm p}\big((\bU,\bV,\bW),(\bX,\bY,\bZ)\big)\leq 2\sqrt{\zeta_1}
\end{align*}
where we used the lower bound of $\zeta_1$ in (\ref{ineq:zeta1lb}). Moreover, it suffices to apply bound (\ref{ineq:PQTperphatTQTperphatT}) and (\ref{ineq:bG-bC}) to $\Big<\calP_{\Omega}\bQ_\bT^\perp\hat\bT, \bQ_\bT^\perp\hat\bT\Big>$. It is easy to check that as long as
$$d_{\rm p}\big((\bU,\bV,\bW),(\bX,\bY,\bZ)\big)\leq (C_1\alpha \kappa_0\log d)^{-1}$$ 
and
$$
n\geq C_1\Big(\alpha^3\kappa_0^2\mu_0^{3/2}r(r_1r_2r_3d_1d_2d_3)^{1/2}\log^{7/2}d+\alpha^6\kappa_0^4\mu_0^3r^2r_1r_2r_3d\log ^6d\Big)
$$
for a sufficiently large $C_1$,
\begin{equation}\label{ineq:QTperphatT}
\Big<\calP_{\Omega}\bQ_\bT^\perp\hat\bT, \bQ_\bT^\perp\hat\bT\Big>^{1/2}\leq \sqrt{\frac{n}{d_1d_2d_3}}\frac{\Lambda_{\min}}{128\sqrt{6}}d_{\rm p}\big((\bU,\bV,\bW),(\bX,\bY,\bZ)\big),
\end{equation}
under the event $\calE_1\cap \calE_2\cap \calE_3$. Due to the lower bound on $\zeta_1$ in (\ref{ineq:zeta1lb}),
\begin{eqnarray}\nonumber
\left|\left\langle \calP_\Omega\bQ_\bT^\perp\hat{\bT},\bH_1\right\rangle\right|&\leq&
\sqrt{6}\sqrt{\frac{n}{d_1d_2d_3}}\sqrt{\zeta_1}\sqrt{\frac{n}{d_1d_2d_3}}\frac{\Lambda_{\min}}{128\sqrt{6}}d_{\rm p}\big((\bU,\bV,\bW),(\bX,\bY,\bZ)\big)\\
&\leq& \frac{n}{32d_1d_2d_3}\zeta_1,\label{ineq:PQTperphatTbH1}
\end{eqnarray}
under the event $\calE_1\cap \calE_2\cap \calE_3$. It remains to control $\left|\left\langle \calP_\Omega(\hat\bT-\bT),\bH_2\right\rangle\right|$. The following fact (Cauchy-Schwarz inequality) on $\calE_2$ is obvious
\begin{equation}\label{ineq:PhatTTbH2upper}
\left|\left\langle \calP_\Omega(\hat\bT-\bT),\bH_2\right\rangle\right|\leq \left\langle\calP_\Omega(\hat\bT-\bT), \hat\bT-\bT\right\rangle^{1/2}\left\langle\calP_\Omega\bH_2,\bH_2 \right\rangle^{1/2}.
\end{equation}
On event $\calE_3$, by (\ref{ineq:PAA})
$$
\left\langle\calP_\Omega\bH_2, \bH_2\right\rangle\leq \frac{n}{d_1d_2d_3}\|\bH_2\|_{\rm F}^2+n\|\bH_2\|_{\rm F}^2\beta_n\Big(\frac{\|\bH_2\|_{\max}}{\|\bH_2\|_{\rm F}},\frac{\|\bH_2\|_{\star}}{\|\bH_2\|_{\rm F}}\Big).
$$
It is clear that
\begin{eqnarray*}
\|\bH_2\|_{\rm F}\leq 4\Lambda_{\max}\big(\|{\bf \Delta}_\bX\|_{\rm F}+\|{\bf \Delta}_\bY\|_{\rm F}+\|{\bf \Delta}_\bZ\|_{\rm F}\big)\big(\|{\bD}_\bX\|_{\rm F}+\|{\bD}_\bY\|_{\rm F}+\|{\bD}_\bZ\|_{\rm F}\big)\\
\leq 8\sqrt{2}\Lambda_{\max}d_{\rm p}^2\big((\bU,\bV,\bW),(\bX,\bY,\bZ)\big).
\end{eqnarray*}
Meanwhile, by Appendix~\ref{sec:bH2}, 
$$
\|\bH_2\|_{\rm F}\leq 4\sqrt{6\zeta_1}d_{\rm p}\big((\bU,\bV,\bW),(\bX,\bY,\bZ)\big)+24\Lambda_{\max}d_{\rm p}^3\big((\bU,\bV,\bW),(\bX,\bY,\bZ)\big).
$$
Moreover, by Lemma~\ref{lemma:tensornorm},
$\|\bH_2\|_{\star}\leq 18r\|\bH_2\|_{\rm F}$. By \cite[Remark~8.1]{keshavan2009matrix}, 
$$
\max\{\mu(\bD_\bX),\mu(\bD_\bY),\mu(\bD_\bZ)\}\leq 55\mu_0.
$$
Thus, $\|\bH_2\|_{\max}\leq C_1\Lambda_{\max}\mu_0^{3/2}\sqrt{\frac{r_1r_2r_3}{d_1d_2d_3}}$ for an absolute constant $C_1>0$. Applying (\ref{ineq:PAA}), on the event $\calE_3$,
\begin{align*}
\left\langle\calP_\Omega\bH_2,\bH_2 \right\rangle\leq& \frac{n}{d_1d_2d_3}\|\bH_2\|_{\rm F}^2+C\alpha\|\bH_2\|_{\max}\|\bH_2\|_{\star}\Big(\sqrt{\frac{nd}{d_1d_2d_3}}\log d+\log^{3/2}d\Big)\\
\leq&C\cdot\bigg\{\frac{n}{d_1d_2d_3}\Lambda_{\max}^2 d_{\rm p}^6\big((\bU,\bV,\bW),(\bX,\bY,\bZ)\big)\\
&+\frac{n}{d_1d_2d_3}\zeta_1d_{\rm p}^2\big((\bU,\bV,\bW),(\bX,\bY,\bZ)\big)\\
&+\alpha r\mu_0^{3/2}\Lambda_{\max}^2\sqrt{\frac{r_1r_2r_3}{d_1d_2d_3}}\Big(\sqrt{\frac{nd}{d_1d_2d_3}}\log d+\log^{3/2}d\Big)d_{\rm p}^2\big((\bU,\bV,\bW),(\bX,\bY,\bZ)\big) \bigg\}.
\end{align*}
If
$$d_{\rm p}\big((\bU,\bV,\bW),(\bX,\bY,\bZ)\big)\leq (C_1\alpha\kappa_0\log d)^{-1}$$
and
$$
n\geq C_1\Big(\alpha^3\mu_0^{3/2}\kappa_0^4r(r_1r_2r_3d_1d_2d_3)^{1/2}\log^{7/2}d+\alpha^6\mu_0^3\kappa_0^8r^2r_1r_2r_3d\log^6d\Big),
$$
then the above bound can be simplified as
$$
\left\langle\calP_\Omega \bH_2,\bH_2 \right\rangle\leq \frac{n}{d_1d_2d_3}\Big(\frac{1}{5000^2C^2\alpha^2\log^2 d}\frac{\Lambda_{\min}^4}{\Lambda_{\max}^2}+C\zeta_1\Big)d_{\rm p}^2\big((\bU,\bV,\bW),(\bX,\bY,\bZ)\big).
$$
Moreover by (\ref{ineq:QTperphatT}), on the event $\calE_1\cap \calE_2\cap \calE_3$, 
\begin{align*}
\Big\langle\calP_\Omega (\hat\bT-\bT),\hat\bT-\bT\Big\rangle^{1/2}\leq&\|\calP_\Omega(\hat\bT-\bT)\|_{\rm F}\\
\leq& \|\calP_\Omega\bQ_\bT(\hat\bT-\bT)\|_{\rm F}+\|\calP_\Omega\bQ_\bT^\perp \hat\bT\|_{\rm F}\\
\leq& \sqrt{\frac{3C\alpha n\log d}{2d_1d_2d_3}}\|\bQ_\bT(\hat\bT-\bT)\|_{\rm F}\\
&+\sqrt{\frac{n}{d_1d_2d_3}}\frac{\Lambda_{\min}}{128\sqrt{6}}d_{\rm p}\big((\bU,\bV,\bW),(\bX,\bY,\bZ)\big)\\
\leq& 5\sqrt{\frac{n}{d_1d_2d_3}}\Lambda_{\max}(C\alpha\log d)d_{\rm p}\big((\bU,\bV,\bW),(\bX,\bY,\bZ)\big),
\end{align*}
where we used the following fact that, in the light of (\ref{eq:QThatTT}), (\ref{ineq:bG-bC}), (\ref{ineq:LambdaC}),
$$
\|\bQ_\bT(\hat\bT-\bT)\|_{\rm F}\leq \|\bG-\bC\|_{\rm F}+2\Lambda_{\max}d_{\rm p}\big((\bU,\bV,\bW),(\bX,\bY,\bZ)\big).
$$
Finally, on the event $\calE_1\cap \calE_2\cap \calE_3$, by (\ref{ineq:PhatTTbH2upper}),
\begin{eqnarray}
\left\langle\calP_\Omega(\hat\bT-\bT),\bH_2\right\rangle&\leq& \left\langle\calP_{\Omega}(\hat\bT-\bT),\hat\bT-\bT \right\rangle^{1/2}\left\langle\calP_\Omega\bH_2,\bH_2 \right\rangle^{1/2}\nonumber\\
&\leq& \frac{5}{5000}\frac{n}{d_1d_2d_3}\Big(\Lambda_{\min}^2+C\alpha\Lambda_{\max}\sqrt{\zeta_1}\log d\Big)d_{\rm p}^2\big((\bU,\bV,\bW),(\bX,\bY,\bZ)\big) \nonumber\\
&\leq& \frac{n}{32d_1d_2d_3}\zeta_1,\label{ineq:PhatTTbH2}
\end{eqnarray}
where we used bound (\ref{ineq:zeta1lb}) and the fact that
$$d_{\rm p}\big((\bU,\bV,\bW),(\bX,\bY,\bZ)\big)\leq (C\alpha\kappa_0\log d)^{-1}.$$
Putting (\ref{ineq:PQThatTTbH_1}), (\ref{ineq:PQTperphatTbH1}), (\ref{ineq:PhatTTbH2}) together, we conclude that on the event $\calE_1\cap \calE_2\cap \calE_3$,
\begin{eqnarray*}
\langle {\rm grad}\ F(\bX,\bY,\bZ), (\bD_\bX,\bD_\bY,\bD_\bZ)\rangle&=&\left \langle \calP_\Omega(\hat\bT-\bT),\bH\right\rangle\\
&\geq& \frac{n}{16d_1d_2d_3}\zeta_1\\
&\geq& \frac{n}{256d_1d_2d_3}\Lambda_{\min}^2d_{\rm p}^2\big((\bU,\bV,\bW),(\bX,\bY,\bZ)\big).
\end{eqnarray*}
Moreover, note that
$$\|\bD_\bX\|_{\rm F}+\|\bD_\bY\|_{\rm F}+\|\bD_\bZ\|_{\rm F}\leq 2d_{\rm p}\big((\bU,\bV,\bW),(\bX,\bY,\bZ)\big).$$ 
By (\ref{ineq:gradFlower}), we obtain
$$
\frac{d_1d_2d_3}{n}\|{\rm grad}\ F(\bX,\bY,\bZ)\|_{\rm F}\geq \frac{\Lambda_{\min}^2}{512}d_{\rm p}\big((\bU,\bV,\bW),(\bX,\bY,\bZ)\big),
$$
which concludes the proof since $\Lambda_{\min}\geq \underline{\Lambda}$.
\end{proof}
\medskip

\begin{proof}[Proof of Theorem \ref{th:GoGconv}]
We first note that the additional penalty function we imposed on $F$ does not change its local behavior in that Theorem \ref{thm:localF} still holds if we replace $F$ with $\tilde{F}$. In the light of Theorem \ref{thm:localF}, the first statement remains true for $\tilde{F}$ simply due to our choice of $\rho$. We now argue that the second statement also holds for $\tilde{F}$, more specifically,
$$
{d_1d_2d_3\over n}\left\|{\rm grad}\ \tilde{F}(\bX,\bY,\bZ)\right\|_{\rm F}\geq \frac{1}{512}\underline{\Lambda}^2 d_{\rm p}\Big((\bU,\bV,\bW), (\bX,\bY,\bZ)\Big),
$$
Observe that
$$
\|{\rm grad}\ \tilde{F}(\bX,\bY,\bZ)\|_{\rm F}\geq \frac{\Big<{\rm grad}\ F(\bX,\bY,\bZ), (\bD_\bX,\bD_\bY,\bD_\bZ)\Big>+\Big<{\rm grad}\ G(\bX,\bY,\bZ),(\bD_\bX,\bD_\bY,\bD_\bZ)\Big>}{\|\bD_\bX\|_{\rm F}+\|\bD_\bY\|_{\rm F}+\|\bD_\bZ\|_{\rm F}}.
$$
In proving Theorem \ref{thm:localF}, we showed that
$$
{d_1d_2d_3\over n}\frac{\Big<{\rm grad}\ F(\bX,\bY,\bZ), (\bD_\bX,\bD_\bY,\bD_\bZ)\Big>}{\|\bD_\bX\|_{\rm F}+\|\bD_\bY\|_{\rm F}+\|\bD_\bZ\|_{\rm F}}\geq \frac{1}{512}\underline{\Lambda}^2 d_{\rm p}\Big((\bU,\bV,\bW), (\bX,\bY,\bZ)\Big).
$$
It therefore suffices to show that
$$
\left\langle{\rm grad}\ G(\bX,\bY,\bZ),(\bD_\bX,\bD_\bY,\bD_\bZ)\right\rangle\ge 0.
$$
This follows the argument from \cite{keshavan2009matrix} and is omitted for brevity.

Now that Theorem \ref{thm:localF} holds for $\tilde{F}$, we know that $\tilde{F}(\bX,\bY,\bZ)$ has a unique stationary point in $\mathcal{N}(\delta,4\mu_0)$ at $(\bU,\bV,\bW)$ for $\delta\leq (C\alpha \kappa_0\log d)^{-1}$. Again, by a similar argument as that from \cite{keshavan2009matrix}, it can be show that all iterates $(\bX^{(k)},\bY^{(k)},\bZ^{(k)})\in \mathcal{N}(\delta/10,4\mu_0)$ and therefore Algorithm~\ref{alg:GoGalg} is just gradient descent with exact line search in $\mathcal{N}(\delta/10,4\mu_0)$. This suggests that Algorithm~\ref{alg:GoGalg} must converges to the unique stationary point $(\bU,\bV,\bW)$. See, e.g., \cite{luenberger2015linear}.
\end{proof}
\medskip

\bibliographystyle{plainnat}
\bibliography{refer}
\clearpage

\appendix
\section{Proof of Lemma~\ref{lemma:tensornorm}}\label{sec:tensornorm}
The first claim is straightforward. It suffices to prove the second claim. Let $\bA=(\bU,\bV,\bW)\cdot\bC$ with $\bC\in\mathbb{R}^{r_1(\bA)\times r_2(\bA)\times r_3(\bA)}$ being the core tensor. Clearly, $\|\bA\|_{\star}=\|\bC\|_{\star}$ and $\|\bA\|_{\rm F}=\|\bC\|_{\rm F}$. Denote by $\bC_1,\ldots, \bC_{r_1(\bA)}\in\mathbb{R}^{r_2(\bA)\times r_3(\bA)}$ the mode-$1$ slices of $\bC$.  By convexity of nuclear norm,
$$
\|\bC\|_{\star}\leq \|\bC_1\|_{\star}+\ldots+\|\bC_{r_1(\bA)}\|_{\star}.
$$
As a result,
\begin{eqnarray*}
\|\bC\|_{\star}^2&\leq& r_1(\bA)\big(\|\bC_1\|_{\star}^2+\ldots+\|\bC_{r_1(\bA)}\|_{\star}^2\big)\\
&\leq& r_1(\bA)\big(r_2(\bA)\wedge r_3(\bA)\big)\big(\|\bC_1\|_{\rm F}^2+\ldots+\|\bC_{r_1(\bA)}\|_{\rm F}^2\big)\\
&=&r_1(\bA)\big(r_2(\bA)\wedge r_3(\bA)\big)\|\bC\|_{\rm F}^2.
\end{eqnarray*}
Therefore,
$$\|\bC\|_{\star}\leq \sqrt{r_1(\bA)\min\{r_2(\bA),r_3(\bA)\}}\|\bC\|_{\rm F}.$$
By the same process on mode-$2$ and mode-$3$ slices of $\bC$, we obtain 
$$
\|\bC\|_{\star}\leq \sqrt{r_2(\bA)\min\{r_1(\bA), r_3(\bA)\}}\|\bC\|_{\rm F},
$$
and
$$
\|\bC\|_{\star}\leq \sqrt{r_3(\bA)\min\{r_1(\bA), r_2(\bA)\}}\|\bC\|_{\rm F},
$$
which concludes the proof.

\section{Proof of Corollary~\ref{co:init}}
By Davis-Kahan Theorem \citep[see, e.g.,][]{yu2015useful},
$$
d_{\rm p}\big(\hat\bU,\bU\big)\leq \frac{2\sqrt{r_1}\|\hat\bN-\bM\bM^\top\|}{\sigma_{\min}(\bM\bM^\top)}.
$$
By choosing $m_1=d_1, m_2=d_2d_3$ in Theorem~\ref{th:spectral} and noticing that $n\geq C_1(\alpha+1)(d_1d_2d_3)^{1/2}$, then
$$
\|\hat\bN-\bM\bM^\top\|\leq C\alpha^2 {(d_1d_2d_3)^{3/2}\log d\over n}
\left[\left(1+\frac{d_1}{d_2d_3}\right)^{1/2}+\left(\frac{n}{d_2d_3\log d}\right)^{1/2}\right]\|\bM\|_{\max}^2
$$
with probability at least $1-d^{-\alpha}$. It suffices to control $\|\bM\|_{\max}$. Recall that $\mu(\bT)\leq \mu_0$, then
$$
\|\bM\|_{\max}=\|\bT\|_{\max}\leq \|\bT\|\mu_0^{3/2}\left(r_1r_2r_3\over d_1d_2d_3\right)^{1/2}.
$$
It is clear by definition that
$${\|\bT\|^2}/{\sigma_{\min}(\bM\bM^\top)}\leq \kappa^2(\bT)\leq \kappa_0^2.$$
As a result, the following bound holds with probability at least $1-d^{-\alpha}$,
\begin{align*}
d_{\rm p}\big(\hat\bU,\bU\big)\leq& 2C\alpha^2\mu_0^3\kappa_0^2r_1^{3/2}r_2r_3{(d_1d_2d_3)^{1/2}\log d\over n}\left[\left(1+\frac{d_1}{d_2d_3}\right)^{1/2}+\left(\frac{n}{d_2d_3\log d}\right)^{1/2}\right]\\
\leq& 2C\alpha^2\mu_0^3\kappa_0^2r_1^{3/2}r_2r_3\left[\frac{(d_1d_2d_3)^{1/2}\log d}{n}+\frac{d_1\log d}{n}+\left(\frac{d_1\log d}{n}\right)^{1/2}\right].
\end{align*}
The claim then follows.

\section{Proof of Lemma~\ref{emplem}}\label{sec:emplem}
For simplicity, define a random tensor $\bE\in\{0,1\}^{d_1\times d_2\times d_3}$ based on $\omega\in[d_1]\times[d_2]\times[d_3]$ such that $\bE(\omega)=1$ and all the other entries are $0$s. Let $\bE_1,\ldots,\bE_n$ be i.i.d. copies of $\bE$. Equivalently, we write
$$
\beta_n(\gamma_1,\gamma_2)=\underset{\bA\in\calK(\gamma_1,\gamma_2)}{\sup}\Big|\frac{1}{n}\sum_{i=1}^n\langle\bA,\bE_i\rangle^2-\mathbb{E}\langle\bA,\bE \rangle^2\Big|
$$
which is the upper bound of an empirical process indexed by $\mathcal{K}(\gamma_1,\gamma_2)$.
Define $\delta_{1,j}=2^j\delta_1^-$ for $j=0,1,2,\ldots,\floor{\log \frac{\delta_1^+}{\delta_1^-}}$ and $\delta_{2,k}=2^k\delta_2^-$ for $k=0,1,2,\ldots, \floor{\log \frac{\delta_2^+}{\delta_2^-}}$. For each $j, k$, we derive the upper bound of $\beta_n(\gamma_1,\gamma_2)$ with $\gamma_1\in[\delta_{1,j}, \delta_{1,j+1}]$ and $\gamma_2\in[\delta_{2,k},\delta_{2,k+1}]$. Following the union argument, we can make the bound uniformly true for $\gamma_1\in[\delta_1^-, \delta_1^+]$ and $\gamma_2\in[\delta_{2}^-, \delta_2^+]$.

Consider $\gamma_1\in[\delta_{1,j}, \delta_{1,j+1}]$, $\gamma_2\in[\delta_{2,k},\delta_{2,k+1}]$ and  observe that 
$$
\underset{\bA\in \mathcal{K}(\gamma_1,\gamma_2)}{\sup}\big|\langle\bA,\bE \rangle^2-\mathbb{E}\langle\bA,\bE \rangle^2\big|\leq \gamma_1^2.
$$ 
Moreover,
$$
\underset{{\bA}\in \mathcal{K}(\gamma_1,\gamma_2)}{\sup}{\rm Var}\big(\langle \bA,\bE\rangle^2\big)\leq \underset{{\bA}\in \mathcal{K}(\gamma_1,\gamma_2)}{\sup}\mathbb{E}\langle\bA,\bE \rangle^4\leq \frac{\gamma_1^2\|\bA\|_{\rm F}^2}{d_1d_2d_3}\leq \frac{\gamma_1^2}{d_1d_2d_3}.
$$
Applying Bousquet's version of Talagrand concentration inequality (see Theorem~3.3.9 in \cite{gine2015mathematical} and Theorem~2.6 in \cite{koltchinskii2011oracle}), with probability at least $1-e^{-t}$ for all $t\geq 0$,
$$
\beta_n(\gamma_1,\gamma_2)\leq 2\mathbb{E}\beta_n(\gamma_1,\gamma_2)+2\gamma_1\sqrt{\frac{t}{nd_1d_2d_3}}+2\gamma_1^2\frac{t}{n}.
$$
By the symmetrization inequality,
$$
\mathbb{E}\beta_n(\gamma_1,\gamma_2)\Big|\leq 2\mathbb{E}\underset{\bA\in \mathcal{K}(\gamma_1,\gamma_2)}{\sup}\Big|\frac{1}{n}\sum_{i=1}^n\varepsilon_{i}\langle \bA,\bE_i\rangle^2\Big|,
$$
where $\varepsilon_1,\ldots,\varepsilon_n$ are i.i.d Rademacher random variables.
Since $|\langle\bA,\bE \rangle|\leq \gamma_1$, by the contraction inequality,
$$
\mathbb{E}\beta_n(\gamma_1,\gamma_2)\leq 4\gamma_1\mathbb{E}\underset{\bA\in \mathcal{K}(\gamma_1,\gamma_2)}{\sup}\Big|\frac{1}{n}\sum_{i=1}^n\varepsilon_{i}\langle \bA,\bE_i\rangle\Big|.
$$
Denote ${\bf\Gamma}=n^{-1}\sum_{i=1}^n\varepsilon_i\bE_i\in\mathbb{R}^{d_1\times d_2\times d_3}$. Then, 
$$
\mathbb{E}\underset{\bA\in \mathcal{K}(\gamma_1,\gamma_2)}{\sup}\Big|\frac{1}{n}\sum_{i=1}^n\varepsilon_{i}\langle \bA,\bE_i\rangle\Big|\leq \mathbb{E}\underset{\bA\in \mathcal{K}(\gamma_1,\gamma_2)}{\sup}\|{\bf\Gamma}\|\|\bA\|_{\star}\leq \gamma_2\mathbb{E}\|{\bf\Gamma}\|.
$$
It is not difficult to show that, see e.g. \cite[Lemma~8]{yuan2015tensor} and \cite{yuan2016incoherent}
$$
\mathbb{E}\|{\bf \Gamma}\|\leq C\Big(\sqrt{\frac{d}{nd_1d_2d_3}}\log d+\frac{\log^{3/2}d}{n}\Big).
$$ 
The above bound holds as long as (see \cite{yuan2015tensor})
$$
n\geq C\Big\{\mu_0(r_1r_2r_3d_1d_2d_3)^{1/2}\log^{3/2}d+\mu_0^2r_1r_2r_3d\log^2d\Big\}.
$$
As a result, with probability at least $1-e^{-t}$,
$$
\beta_n(\gamma_1,\gamma_2)\leq C\gamma_1\gamma_2\Big(\sqrt{\frac{d}{nd_1d_2d_3}}\log d+\frac{\log^{3/2}d}{n}\Big)+2\gamma_1\sqrt{\frac{t}{nd_1d_2d_3}}+2\gamma_1^2\frac{t}{n}
$$
for $\gamma_1\in[\delta_{1,j}, \delta_{1,j+1}]$ and $\gamma_2\in[\delta_{2,k},\delta_{2,k+1}]$. Now, consider all the combinations of $j$ and $k$, we can make the upper bound uniformly for all $j$ and $k$ with adjusting $t$ to ${\bar t}$, and $C$ to $2C$.

\section{Proof of lower bound of $\langle \bQ_\bT(\hat\bT-\bT), \bH_1\rangle$}\label{sec:QTH1}
Recall that 
\begin{align*}
\langle \bQ_\bT(\hat\bT-\bT), \bH_1\rangle=& \Big<(\bU,\bV,\bW)\cdot (\bC-\bG)+({\bf \Delta}_\bX,\bV,\bW)\cdot \bC+(\bU,{\bf \Delta}_\bY,\bW)\cdot\bC\\
&+(\bU,\bV,{\bf \Delta}_\bZ)\cdot\bC
, (\bD_\bX,\bV,\bW)\cdot\bC+(\bU,\bD_\bY,\bW)\cdot\bC+(\bU,\bV,\bD_\bZ)\cdot\bC\Big>.
\end{align*}
Clearly, the right hand side can be written as $\zeta_1+\zeta_2+\zeta_3$ where
\begin{align*}
\zeta_1=&\|({\bf \Delta}_\bX,\bV,\bW)\cdot \bC+(\bU,{\bf \Delta}_\bY,\bW)\cdot\bC+(\bU,\bV,{\bf \Delta}_\bZ)\cdot\bC\|_{\rm F}^2\\
\zeta_2=&\Big<(\bU,\bV,\bW)\cdot (\bC-\bG), (\bD_\bX,\bV,\bW)\cdot\bC+(\bU,\bD_\bY,\bW)\cdot\bC+(\bU,\bV,\bD_\bZ)\cdot\bC\Big>\\
\zeta_3=&\Big<{\bf \Delta}_\bX,\bV,\bW)\cdot \bC+(\bU,{\bf \Delta}_\bY,\bW)\cdot\bC+(\bU,\bV,{\bf \Delta}_\bZ)\cdot\bC\\
,& (\bD_\bX-{\bf \Delta}_{\bX},\bV,\bW)\cdot\bC+(\bU,\bD_\bY-{\bf \Delta}_{\bY},\bW)\cdot\bC+(\bU,\bV,\bD_\bZ-{\bf \Delta}_{\bZ})\cdot\bC\Big>.
\end{align*}
Clearly,
\begin{align*}
\zeta_1\geq &\|({\bf \Delta}_\bX,\bV,\bW)\cdot \bC\|_{\rm F}^2+\|(\bU,{\bf \Delta}_\bY,\bW)\cdot\bC\|_{\rm F}^2+\|(\bU,\bV,{\bf \Delta}_\bZ)\cdot\bC\|_{\rm F}^2\\
&-2\Lambda_{\max}^2(\bC)\Big(\|\bU^\top {\bf \Delta}_\bX\|_{\rm F}\|\bV^\top{\bf \Delta}_\bY\|_{\rm F}+\|\bU^\top {\bf \Delta}_\bX\|_{\rm F}\|\bW^\top {\bf \Delta}_\bZ\|_{\rm F}+\|\bV^\top{\bf \Delta}_\bY\|_{\rm F}\|\bW^\top {\bf \Delta}_\bZ\|_{\rm F}\Big)\\
\geq& \Lambda_{\min}^2(\bC)\Big(\|{\bf \Delta}_\bX\|_{\rm F}^2+\|{\bf \Delta}_\bY\|_{\rm F}^2+\|{\bf \Delta}_\bZ\|_{\rm F}^2\Big)-8\Lambda_{\max}^2(\bC)d_{\rm p}^4\big((\bU,\bV,\bW),(\bX,\bY,\bZ)\big)
\end{align*}
where we used the fact that
$$\|\bU^\top{\bf \Delta}_\bX\|_{\rm F}\leq 2d_{\rm p}^2(\bU,\bX).$$
Recall from (\ref{ineq:LambdaC}) that on the event $\calE_1\cap \calE_2\cap \calE_3$, we have
$$\frac{\Lambda_{\min}}{2}\leq \Lambda_{\min}(\bC)\leq \Lambda_{\max}(\bC)\leq 2\Lambda_{\max}.$$
Then
$$
\zeta_1\geq \frac{1}{12}\Lambda_{\min}^2 d_{\rm p}^2\big((\bU,\bV,\bW),(\bX,\bY,\bZ)\big)-32\Lambda_{\max}^2 d_{\rm p}^4\big((\bU,\bV,\bW),(\bX,\bY,\bZ)\big).
$$
It also implies that on the event $\calE_1\cap\calE_2\cap \calE_3$, 
\begin{equation}\label{ineq:zeta1lb1}
\zeta_1\geq \frac{1}{2}\Big(\|({\bf \Delta}_\bX,\bV,\bW)\cdot \bC\|_{\rm F}^2+\|(\bU,{\bf \Delta}_\bY,\bW)\cdot\bC\|_{\rm F}^2+\|(\bU,\bV,{\bf \Delta}_\bZ)\cdot\bC\|_{\rm F}^2\Big).
\end{equation}
We can control $|\zeta_3|$ in the same fashion. Indeed,
\begin{align*}
|\zeta_3|^2\leq& |\zeta_1| \Lambda_{\max}^2(\bC)(\|\bD_\bX-{\bf \Delta}_\bX\|_{\rm F}^2+\|\bD_\bY-{\bf \Delta}_\bY\|_{\rm F}^2+\|\bD_\bZ-{\bf \Delta}_\bZ\|_{\rm F}^2)\\
\leq& 4|\zeta_1|\Lambda_{\max}^2 d_{\rm p}^4\big((\bU,\bV,\bW),(\bX,\bY,\bZ)\big).
\end{align*}
If
$$d_{\rm p}\big((\bU,\bV,\bW),(\bX,\bY,\bZ)\big)\leq (C\alpha\kappa_0\log d)^{-1}$$
for large $C>0$, then under the event $\calE_1\cap \calE_2\cap \calE_3$,
$$
\zeta_1\geq \frac{1}{16}\Lambda_{\min}^2d_{\rm p}^2\big((\bU,\bV,\bW),(\bX,\bY,\bZ)\big) \quad  {\rm and}\quad |\zeta_3|\leq \frac{\zeta_1}{4}
$$
To control $\zeta_2$, recall that $\bX^\top \bD_\bX={\bf 0}, \bY^\top \bD_\bY={\bf 0}$ and $\bZ^\top \bD_\bZ={\bf 0}$. Then,
\begin{align*}
|\zeta_2|&\leq |\langle ({\bf \Delta}_\bX,\bV,\bW)\cdot (\bC-\bG), (\bD_\bX,\bV,\bW)\cdot\bC\rangle|\\
+& |\langle (\bU,{\bf \Delta}_\bY,\bW)\cdot (\bC-\bG), (\bU,\bD_\bY,\bW)\cdot\bC\rangle|+ |\langle (\bU,\bV,{\bf \Delta}_\bZ)\cdot (\bC-\bG), (\bU,\bV,\bD_\bZ)\cdot\bC\rangle|\\
&\leq 2\|\bC-\bG\|_{\rm F}\bigg\{\Big(\|({\bf \Delta}_\bX,\bV,\bW)\cdot\bC\|_{\rm F}+\|\bU,{\bf \Delta}_\bY,\bW)\cdot\bC\|_{\rm F}+\|(\bU,\bV,{\bf \Delta}_\bZ)\cdot\bC\|_{\rm F}\Big)\\
&+\Lambda_{\max}(\bC)\Big(\|\bD_\bX-{\bf \Delta}_\bX\|_{\rm F}+\|\bD_\bY-{\bf \Delta}_\bY\|_{\rm F}+\|\bD_\bZ-{\bf \Delta}_\bZ\|_{\rm F}\Big)\bigg\}d_{\rm p}\big((\bU,\bV,\bW),(\bX,\bY,\bZ)\big)\\
&\leq 2\|\bG-\bC\|_{\rm F}\sqrt{\zeta_1}d_{\rm p}\big((\bU,\bV,\bW),(\bX,\bY,\bZ)\big)+4\|\bC-\bG\|_{\rm F}\Lambda_{\max} d_{\rm p}^3\big((\bU,\bV,\bW),(\bX,\bY,\bZ)\big).
\end{align*}
Recall from (\ref{ineq:bG-bC}) that under the event $\calE_1\cap \calE_2\cap \calE_3$,
$$\|\bG-\bC\|_{\rm F}\leq C\Lambda_{\max}(\alpha\log d)^{1/2}d_{\rm p}\big((\bU,\bV,\bW),(\bX,\bY,\bZ)\big).$$
Therefore, $|\zeta_2|\leq \zeta_1/2$ in view of the lower bound of $\zeta_1$. In summary, under the event $\calE_1\cap \calE_2\cap\calE_3$,
$$
\langle\bQ_\bT(\hat\bT-\bT), \bH_1\rangle\geq \frac{1}{4}\zeta_1\geq \frac{1}{64}\Lambda_{\min}^2 d_{\rm p}^2\big((\bU,\bV,\bW),(\bX,\bY,\bZ)\big).
$$

\section{Upper bound of $\|\bH_2\|_{\rm F}$}\label{sec:bH2}
It is shown in (\ref{ineq:zeta1lb1}) that if $d_{\rm p}\big((\bU,\bV,\bW),(\bX,\bY,\bZ)\big)\leq (C\alpha\kappa_0\log d)^{-1}$, then
$$
\zeta_1\geq \frac{1}{2}\Big(\|({\bf \Delta}_\bX,\bV,\bW)\cdot \bC\|_{\rm F}^2+\|(\bU,{\bf \Delta}_\bY,\bW)\cdot \bC\|_{\rm F}^2+\|(\bU,\bV,{\bf \Delta}_\bZ)\cdot \bC\|_{\rm F}^2\Big).
$$
Observe that
$$
\|({\bf \Delta}_\bX,\bV,\bW)\cdot\bC\|_{\rm F}^2=\|\calM_2(\bC)({\bf \Delta}_\bX\otimes \bW)\|_{\rm F}=\|\calM_3(\bC)({\bf \Delta}_\bX\otimes \bV)\|_{\rm F}
$$
which implies that
$$
\zeta_1\geq \frac{1}{6}\Big(\|\calM_2(\bC)({\bf \Delta}_\bX\otimes \bW)\|_{\rm F}+\|\calM_3(\bC)(\bU\otimes {\bf \Delta}_\bY)\|_{\rm F}+\|\calM_1(\bC)( \bV\otimes {\bf \Delta}_\bZ)\|_{\rm F}\Big)^2
$$
By definition of $\bH_2$, we obtain
\begin{align*}
\|\bH_2\|_{\rm F}\leq& \|\calM_1(\bC)({\bf \Delta}_\bY\otimes \bW)\|_{\rm F}\|\bD_\bX\|_{\rm F}+\|\calM_1(\bC)(\bV\otimes {\bf \Delta}_\bZ)\|_{\rm F}\|\bD_\bX\|_{\rm F}\\
+& \|\calM_2(\bC)({\bf \Delta}_\bX\otimes \bW)\|_{\rm F}\|\bD_\bY\|_{\rm F}+\|\calM_2(\bC)(\bU\otimes {\bf \Delta}_\bZ)\|_{\rm F}\|\bD_\bY\|_{\rm F}\\
+&\|\calM_3(\bC)({\bf \Delta}_\bX\otimes \bV)\|_{\rm F}\|\bD_\bZ\|_{\rm F}+\|\calM_3(\bC)(\bU\otimes {\bf \Delta}_\bY)\|_{\rm F}\|\bD_\bZ\|_{\rm F}\\
+&24\Lambda_{\max}d_{\rm p}^3\big((\bU,\bV,\bW),(\bX,\bY,\bZ)\big)
\end{align*}
where we used the fact $\Lambda_{\max}(\bC)\leq 2\Lambda_{\max}$ from (\ref{ineq:LambdaC}). Clearly,
\begin{align*}
\|\bH_2\|_{\rm F}\leq& 2\sqrt{6\zeta_1}\big(\|\bD_\bX\|_{\rm F}+\|\bD_\bY\|_{\rm F}+\|\bD_\bZ\|_{\rm F}\big)+24\Lambda_{\max}d_{\rm p}^3\big((\bU,\bV,\bW),(\bX,\bY,\bZ)\big)\\
\leq& 4\sqrt{6\zeta_1}d_{\rm p}\big((\bU,\bV,\bW),(\bX,\bY,\bZ)\big)+24\Lambda_{\max}d_{\rm p}^3\big((\bU,\bV,\bW),(\bX,\bY,\bZ)\big).
\end{align*}
\end{document}